\title{A Theory of Bounded Inductive Rationality}
\author{Caspar Oesterheld
\institute{Computer Science Department\\
Carnegie Mellon University\\
Pittsburgh, PA, USA}
\email{oesterheld@cmu.edu}
\and
Abram Demski
\institute{Machine Intelligence Research Institute\\
Berkeley, California, USA}
\and 
Vincent Conitzer
\institute{Computer Science Department\\
Carnegie Mellon University\\
Pittsburgh, PA, USA}
}
\NewDocumentCommand{\foocmd}{ O{default1} O{default2} m }{#1~#2~#3}
\NewDocumentCommand{\citep}{ O{} O{} m }{(\ifthenelse{\equal{#1}{}}{}{#1~}\cite{#3}\ifthenelse{\equal{#2}{}}{}{, #2})}
\NewDocumentCommand{\citeyear}{O{} m }{\cite{#2}\ifthenelse{\equal{#1}{}}{}{, #1}}
\NewDocumentCommand{\citealp}{ O{} O{} m }{\ifthenelse{\equal{#1}{}}{}{#1~}\cite{#3}\ifthenelse{\equal{#1}{}}{}{, #2}}
\newtheorem{definition}{Definition}
\newtheorem{conjecture}{Conjecture}
\newtheorem{theorem}{Theorem}
\newtheorem{lemma}[theorem]{Lemma}
\newtheorem{proposition}[theorem]{Proposition}
\newcolumntype{L}[1]{>{\raggedright\let\newline\\\arraybackslash\hspace{0pt}}m{#1}}
\newcolumntype{C}[1]{>{\centering\let\newline\\\arraybackslash\hspace{0pt}}m{#1}}
\newcolumntype{R}[1]{>{\raggedleft\let\newline\\\arraybackslash\hspace{0pt}}m{#1}}
\newcommand{\BibTeX}{\rm B\kern-.05em{\sc i\kern-.025em b}\kern-.08em\TeX}
\Crefname{section}{Sect.}{Sects.}
\newcommand{\DP}{\mathrm{DP}}
\newcommand{\extendedonlybit}[1]{\ifthenelse{\boolean{extendedversion}}{#1}{}}
\newcommand{\abridgedonlybit}[1]{\ifthenelse{\boolean{extendedversion}}{}{#1}}
\DeclareMathOperator*{\argmax}{arg\,max}
\DeclareMathOperator*{\argmin}{arg\,min}
\DeclareMathOperator{\Var}{Var}
\DeclareMathOperator*{\avg}{avg}
\begin{document}

\maketitle 

\begin{abstract}
The dominant theories of rational choice assume %
logical omniscience. That is, they assume that when facing a decision problem, an agent can perform all relevant computations and determine the truth value of all relevant logical/mathematical claims. This assumption is unrealistic when, for example, we offer bets on remote digits of $\pi$ or when an agent faces a computationally intractable planning problem.
Furthermore, the assumption of logical omniscience creates contradictions in cases where the environment can contain descriptions of the agent itself.
Importantly, strategic interactions as studied in game theory are decision problems in which a rational agent is predicted by its environment (the other players).
In this paper, we develop a theory of rational decision making that does not assume logical omniscience. 
We consider agents who repeatedly face decision problems (including ones like betting on digits of $\pi$ or games against other agents). The main contribution of this paper is to provide a sensible theory of rationality for such agents.
Roughly, we require that a boundedly rational inductive agent tests each efficiently computable hypothesis infinitely often and follows those hypotheses that keep their promises of high rewards. We then prove that agents that are rational in this sense have other desirable properties. For example, they learn to value random and pseudo-random lotteries at their expected reward. Finally, we consider strategic interactions between different agents and prove a folk theorem for what strategies bounded rational inductive agents can converge to.
\end{abstract}

\section{Introduction}
\label{sec:introduction}

The dominant theories of rational decision making -- in particular Bayesian theories -- assume logical omniscience, i.e., that rational agents can determine the truth value of any relevant logical statement. In some types of decision problems, this prevents one from deriving any recommendation from these theories, which is unsatisfactory (\Cref{sec:log-unc-strategic-interactions}). For one, there are problems in which computing an optimal choice is simply computationally intractable. For example, many planning problems are intractable. Second, the assumption of logical omniscience creates contradictions (resembling classic paradoxes of self reference, such as the liar's paradox) if the environment is allowed to contain references to the agent itself. These issues arise most naturally when multiple rational agents interact and reason about one another.

This paper develops a novel theory of boundedly rational inductive agents (BRIAs) that does not assume logical omniscience and yields sensible recommendations in problems such as the ones described above. Rather than describing how an agent should deal with an individual decision, the theory considers how an agent learns to choose on a sequence of different decision problems. 
We describe the setting in more detail in \Cref{sec:setting}.

The core of our theory is a normative rationality criterion for such learning agents. Roughly, the criterion requires that a boundedly rational inductive agent test each efficiently computable hypothesis (or more generally each hypothesis in some class) infinitely often and follows hypotheses that keep their promises of high rewards. We describe the criterion in detail in \Cref{sec:the-criterion}. Importantly, the criterion can be satisfied by computationally bounded agents, as we show in \Cref{sec:computing-BRIAs}.

We demonstrate the appeal of our criterion by showing that it implies desirable and general behavioral patterns. In \Cref{sec:easy-options}, we show that on sequences of decision problems in which one available option guarantees a payoff of at least $l$, BRIAs learn to obtain a reward of at least $l$. Thus, in particular, they avoid Dutch books (in the limit).  %
We further show that similarly on sequences of decision problems in which one available option pays off truly or algorithmically randomly with mean $\mu$, BRIAs learn to obtain a reward of at least $\mu$.
Finally, we consider decision problems in which one BRIA plays a strategic game against another BRIA. We show that BRIAs can converge to any individually rational correlated strategy profile. BRIAs are thus a promising model for studying ideas such as superrationality (i.e., cooperation in the one-shot Prisoner's Dilemma) \cite{Hofstadter1983} (cf.~\Cref{sec:rel-work-dt-Newcomb}).
Related work is discussed in \Cref{sec:rel-work}.
Throughout this paper, we describe the key ideas for our proofs in the main text. Detailed proofs are given in \Cref{appendix:proofs}. %

\section{Setting}
\label{sec:setting}

Informally, we consider an agent who makes decisions in discrete time steps. At each time step she faces some set of available options to choose from. She selects one of options and receives a reward. She then faces a new decision problem, and so on.

Formally, let $\mathcal{T}$ be some language describing available \textit{options}. A \textit{decision problem} $\DP\in \mathrm{Fin}(\mathcal{T})$ is a finite set of options. A \textit{decision problem sequence} is a sequence of decision problems $\DP_1,\DP_2,...$ %
An agent for $\overline \DP$ is a sequence $\bar c$ of $c_t \in \DP_t$. 
The rewards are numbers $r_1,r_2,r_3,...\in [0,1]$. Note that in contrast to the literature on multi-armed bandit problems (\Cref{sec:rel-work}) counterfactual rewards are not defined.

It is generally helpful to imagine that (similar to multi-armed bandit problems) at each time $t$ the agent first sees $\DP_t$; then chooses $c_t$ from $\DP_t$ according to some algorithm that looks at the available options in $\DP_t$ and takes past experiences into account; then the environment calculates some reward as a function of $c_t$; the agent observes the reward and learns from it. The sequence of decision problems $\DP_t$ may in turn be calculated depending on the agent's choices. \extendedonlybit{But technically we can consider an agent who chooses $\bar c$ in the beginning without ever looking at $\overline \DP$ or $\bar r$.}

\extendedonlybit{We will often consider specific and somewhat unusual types of decision problems as examples, in particular ones where options are terms in some mathematical logic. However, our theory applies at least as well to more traditional, partly empirical decision problems. For example, one could imagine that each option describes a particular medical treatment and that the agent has to select one of the treatments for a particular patient.}

We focus on learning myopically optimal behavior. That is, we want our agent to learn to choose whatever gives the highest reward for the present decision problem, regardless of what consequences that has for future decision problems.

\section{%
Computational constraints and paradoxes of self-reference}
\label{sec:logical-uncertainty}

In this paper, we develop a normative theory of rational learning in this setting. The standard theory for rational decision making under uncertainty is
Bayesian decision theory (BDT) (\cite{Savage1954,Jeffrey1965}; for contemporary overviews, see \cite{Peterson2009,Steele2016}). %
The main ideas of this paper are motivated by a specific shortcoming of BDT: the assumption that the agent who is subject to BDT's recommendations is logically omniscient and in particular not limited by any computational constraints. \extendedonlybit{\footnote{Essentially the same issue has also sometimes been called the problem of old evidence. Cf.\ \Cref{sec:rel-work-Garrabrant}.}}
We develop a theory that gives recommendations to computationally bounded\extendedonlybit{ (and therefore in particular logically uncertain)} agents. In the following, we give two kinds of examples to illustrate the role of logical omniscience in BDT and motivate our search for an alternative theory.

\textbf{Mere intractability~~} \label{sec:betting-on-math}
The first problem is that in most realistic choice problems, it is intractable to follow BDT.\extendedonlybit{\footnote{Pointing out this type of issue with BDT has a long history in many different strands of literature, see, e.g., the overviews given by Wheeler (\citeyear[Sect.~1.3]{Wheeler2020}) and Garrabrant et al.~(\citeyear[Sect.~1.2]{Garrabrant2016}).}}
Bayesian updating and Bayes-optimal decision making are only feasible if the environment is small or highly structured\extendedonlybit{ (\cite[Sections 2.5, 5.5]{Savage1954}; \cite{Cooper1990}; \cite{Chatterjee2016})}. %
Even if the agent had a perfectly accurate world model, determining the optimal choice may require solving computationally hard problems, %
such as the traveling salesman problem, \extendedonlybit{protein design \cite{Pierce2002}, }planning in 2-player competitive games\extendedonlybit{ (e.g., \cite{Even1976}; \cite{Schaefer1978})}, etc. Optimal choice may also rely on whether particular mathematical claims are true, e.g., when assessing the safety of particular cryptographic methods. In all these problems, BDT requires the agent to perfectly solve the problem at hand. However, we would like a theory of rational choice that makes recommendations to realistic, bounded agents who can only solve such problems approximately.\extendedonlybit{\footnote{Here is an analogy to explain this critique of BDT. A trivial theory of rational choice is the following: Take the option that is best given the model that accurately describes the world. This theory assumes omniscience about both logical and empirical facts. From a BDT perspective, it is unsatisfactory because we do not know what model describes the actual world. A proponent of the trivial theory could argue that one should simply \textit{approximate} the trivial theory. But it is unclear how one should perform this approximation and from a BDT perspective this is an essential question that a theory of rational choice should answer in a principled way. Similarly, we believe that there should be a principled normative theory of how one should address problems that are hard to solve exactly.}} 

Consider a decision problem $\DP=\{a_1,a_2  \}$, where the agent knows that option $a_1$ pays off the value of the $10^{100}$-th digit of the binary representation of $\pi$. Option $a_2$ pays off $0.6$ with certainty. In our formalism, $r$ equals the $10^{100}$-th digit of the binary representation of $\pi$ if $c=a_1$ and $r=0.6$ if $c=0.6$. All that Bayesian decision theory has to say about this problem is that one should calculate the $10^{100}$-th digit of $\pi$; if it is $1$, choose $a_1$; otherwise choose $a_2$. Unfortunately, calculating the $10^{100}$-th digit of $\pi$ is likely intractable.\footnote{\label{ftn:digits-of-pi-unknown}Remote digits of $\pi$ are a canonical example in the literature on bounded rationality and logical uncertainty \citep[see][for an early usage]{Savage1967}.  
To the knowledge of the authors it is unknown whether the $n$-th digit of $\pi$ can be guessed better than random in less than $O(n)$ time. \extendedonlybit{It is (to our knowledge) not even known whether all digits appear with equal frequency in the decimal representation of $\pi$. }For a general, statistical discussion of the randomness of digits of $\pi$, see Marsaglia \citeyear{Marsaglia2005}.
} Hence, Bayesian decision theory does not have any recommendations for this problem for realistic reasoners. At the same time, we have the strong normative intuition that -- if digits of $\pi$ indeed cannot be predicted better than random under computational limitations -- it is rational to take $a_2$. %
We would like our theory to make sense of that intuition.

We close with a note on what we can expect from a theory about rational decision making under computational bounds. A na\"ive hope might be that such a theory could tell us how to optimally use some amount of compute (say, 10 hours on a particular computer system) to approximately solve any given problem (cf.\ our discussion in \Cref{sec:Russell-bounded-opt} of Russell et al.'s \cite{Russell1991,Russell1993,Russell1995} work on bounded optimality); or that it might tell us \textit{in practice} at what odds to bet on, say, Goldbach's conjecture with our colleagues. In this paper, we do not provide such a theory and such a theory cannot exist.\extendedonlybit{\footnote{
For example, Blum's \citeyear{Blum1967} speedup theorem states, roughly, that there is a decision problem such that for every algorithm solving that decision problem, there exists another, much faster algorithm solving that decision problem. Also, by, e.g., Rice's theorem, it is not even decidable, for a given computational problem, whether it can be solved within some given computational constraints. Also see Hutter et al.~(\citeyear[Sect.~7.1]{Hutter2005}) for some discussion, including a positive result, i.e., an algorithm that is in some sense optimal for all well-defined computational problems.}} We must settle for a more modest goal.
Since our agents face decision problems repeatedly, our rationality requirement will be that the agent \textit{learns} to approximately solve these problems optimally in the limit. For example, if digits of $\pi$ are pseudo-random in the relevant sense, then a rational agent must converge to betting 50-50 on remote binary digits of $\pi$. But it need not bet 50-50 ``out-of-the-box''.
\extendedonlybit{While our paper thus focuses on a general theoretical answer to the problem of intractability for rational agents, note that the perspective of assigning probabilities to logical claims has recently also been used to derive novel results in complexity theory \citep{Borak2019}. %
}

\textbf{Paradoxes of self-reference, strategic interactions, and counterfactuals~~}
\label{sec:log-unc-strategic-interactions}
\label{sec:motivation-self-reference}
A second problem with BDT and logical omniscience more generally is that it creates inconsistencies if the values of different available options depend on what the agent chooses. As an example, consider the following decision problem, which we will call the Simplified Adversarial Offer (SAO) \citep[after a decision problem introduced by][]{ExtractingMoneyFromCDT}. Imagine that an artificial agent chooses between two available alternatives $a_0$ and $a_1$, where $a_0$ is known to pay off $\nicefrac{1}{2}$ with certainty, and $a_1$ is known to pay off 1 if the agent's program run on this decision problem chooses $a_0$, and 0 otherwise.
Now assume that the agent chooses deterministically and optimally given a logically omniscient belief system%
. Then the agent knows the value of each of the options. This also means that it knows whether it will select $a_0$ or $a_1$.
But given this knowledge, the agent selects a different option than what the belief system predicts. This is a contradiction. Hence, there exists no agent that complies with standard BDT in this problem.
Compare the example\extendedonlybit{s} of Oesterheld and Conitzer \citeyear{ExtractingMoneyFromCDT} and Spencer \citeyear{Spencer2021}; also see Demski and Garrabrant (\citeyear[Sect.\ 2.1]{Demski2020}) for a discussion of another, subtler issue that arises from logical omniscience and introspection.

We are particularly interested in problems in which such failure modes apply. SAO is an extreme and unrealistic example, selected to be simple and illustrative. However, strategic interactions between different rational agents share the ingredients of this problem: Agent 1 is thinking about what agent 2 is choosing, thereby creating a kind of reference to agent 2 in agent 2's environment. We might even imagine that two AI players know each others' exact source code (cf.\ \citealp{Rubinstein1998}, Sect.\ 10.4; \citealp{Tennenholtz2004}; \citealp{Hoek2013}; \citealp{Barasz2014}; \citealp{Critch2016}; \citealp{RobustProgramEquilibrium}). Further, it may be in agent 2's interest to prove wrong whatever agent 1 believes about agent 2. \abridgedonlybit{For a closely related discussion of issues of bounded rationality and the foundations of game theory, see Binmore \citeyear{Binmore1987} and references therein (\citealp[cf.][Ch.\ 10]{Rubinstein1998}; \citealp[][Sect.\ 3.2]{Demski2020}).
}

\begin{extendedonlyblock}
Besides the failure of BDT in particular, the Adversarial Offer is illustrative of the challenge of developing a normative rationality criterion for such general decision problems. Many notions of optimal behavior are based on a requirement that a rational agent should not be outperformed by an alternative strategy.\extendedonlybit{\footnote{In multi-armed bandit problems, for example, one usually considers the goal of minimizing regret (see the discussion in \Cref{sec:rel-work-multi-armed-bandits}). BDT itself can also be motivated in this way, as is done in the complete class theorems (\citealp[][especially Chapters 1 and 3]{Wald1950}; \citealp[for introductions, see also][Chapters 1 and 2]{Ferguson1967}; \citealp[][Sect.~5.7]{Lehmann1998}).}} But any agent may end up in a decision problem sequence that at each time $t$ poses the problem $\mathrm{SAO}_{c_t}$. Regardless of what $c_t$ selects, it always selects the option with the lowest reward. Hence, the agent choosing according to the sequence of $c_t$ performs worse than any agent that deviates from $c_t$ at least some of the time.

An alternative perspective on this is that in our setting (as in the real world), counterfactual claims are problematic. Although one can resolve the value of all options in $\mathrm{SAO}_{c}$, it seems odd for an agent after choosing $a_0$ to believe, \enquote{Had I chosen $a_1$, I would have gotten $1$}. Arguably the \enquote{right} counterfactual statement in this case is, \enquote{Had I chosen $a_1$, I would have gotten $0$}, even though $a_1$ in fact resolves to 1. However, it is unclear how the \enquote{right} counterfactuals can be constructed in general. In the present paper, we therefore avoid the reliance on any such counterfactual claims even if some form of counterfactual is revealed or can be calculated \textit{ex post} by the agent. In practice another motivation to not rely on counterfactual claims is that when interacting with the real world, counterfactuals are not directly revealed.\extendedonlybit{\footnote{What counterfactuals are and what role they should play in rational choice is, of course, one of the most widely discussed questions in analytic philosophy. For an introduction to the literature on counterfactuals in general, see, for example, Starr (\citeyear{Starr2019}). Properly relating our views and the approach of the present paper to this vast literature could easily fill its own paper. Note that in the present context, \textit{logical} counterfactuals are particularly relevant, which seem even harder to make sense of.
For discussions of the role of counterfactuals (and the related concept of causality) in rational choice in particular, see, e.g., Eells (\citeyear{Eells1982}), Joyce (\citeyear{Joyce1999}), or Ahmed (\citeyear{Ahmed2014}). In \Cref{sec:rel-work-dt-Newcomb}, we will relate the present theory to a particular topic of that literature.}%
}

Without counterfactuals, we face a different problem: Imagine an agent choosing between $``\nicefrac{1}{3}"$ and $``\nicefrac{2}{3}"$. How can we design a rationality requirement that rules out an agent who simply always takes $``\nicefrac{1}{3}"$?
In the next section, we will give an answer to this question. Roughly, our approach is the following: We do not ever make claims about counterfactuals in a particular decision problem. However, we require that in a sequence of decision problems, a rational agent tests different hypotheses about what the optimal choice is. For example, there will be a hypothesis which claims that in this type of problem one should choose $``\nicefrac{2}{3}"$ and that doing so provides a payoff of $\nicefrac{2}{3}$. This hypothesis has to be tested by actually taking $``\nicefrac{2}{3}"$ and seeing whether the promised payoff of $\nicefrac{2}{3}$ was realized. This particular hypothesis keeps its promise and is therefore prudent to follow, unless another hypothesis (which has either proved reliable or is up for testing) promises an even higher reward.

For a closely related discussion of issues of bounded rationality\extendedonlybit{, counterfactuals} and the foundations of game theory, see \citet{Binmore1987} and references therein.
\end{extendedonlyblock}

\section{The rationality criterion}
\label{sec:the-criterion}

\extendedonlybit{In short, our approach is as follows: Agents have to not only choose actions, but also estimate in each round the reward they will receive. As part of our rationality criterion we require that these estimates are not systematically above %
what the agent actually obtains. Further, we consider rationality relative to some set of hypotheses, which in turn recommend actions and promise that some reward is achieved when following the recommendation. To satisfy computational constraints, we can restrict the set of hypotheses to only include efficiently computable ones.
Roughly, our rationality criterion then states that if a hypothesis infinitely often claims strictly higher reward than the agent estimates for its own choice, then the agent must test this hypothesis infinitely often. Testing requires taking the option recommended by the hypothesis in question. To reject a hypothesis, these tests must indicate that the hypothesis consistently over-promises.
}

\subsection{Preliminary definitions}

An \textit{estimating agent $\bar \alpha$} is a sequence of choices from the available options $\alpha^c_t\in  \DP_t$ and \textit{estimates} $\alpha^e_t\in [0,1]$. Our rationality criterion uses estimating agents. For brevity, we will say \textit{agent} instead of estimating agent throughout the rest of this paper.
For example, let $\mathrm{SAO}_{\alpha,t}$ be the Simplified Adversarial Offer for the agent at time $t$ as described in \Cref{sec:motivation-self-reference}. Then we might like an agent who learns to choose $\alpha_t^c=a_0$ (which pays $\nicefrac{1}{2}$ with certainty) and estimate $\alpha_t^e=\nicefrac{1}{2}$.

A \textit{hypothesis} $h$ has the same type signature as an estimating agent. When talking about hypotheses, we will often refer to the values of $h_t^e$ as promises and to the values of $h_t^c$ as recommendations.

Our rationality criterion will be relative to a particular set of hypotheses $\mathbb{H}$. In principle, $\mathbb{H}$ could be any set of hypotheses, e.g., all computable ones, all three-layer neural nets, all 8MB computer programs, etc. Generally, $\mathbb{H}$ should contain any hypothesis (i.e., any hypothesis about how the agent should act) that the agent is willing to consider, similar to the support of the prior in Bayesian theories of learning, or the set of experts in the literature on multi-armed bandits with expert advice.
Following Garrabrant et al.~\citeyear{Garrabrant2016}, we will often let $\mathbb{H}$ be the set of functions computable in $O(g(t))$ time, where $g$ is a non-decreasing function.
We will call these hypotheses \textit{efficiently computable (e.c.)}. Note that not all time complexity classes can be 
written as $O(g(t))$. For example, the set of functions computable in polynomial time cannot be written in such a way. This simplified set is used to keep notation simple. Our results generalize to more general
computational complexity classes.

\extendedonlybit{Restricting $\mathbb{H}$ to functions computable in $O(g(t))$ relates to our goal of developing computationally bounded agents (cf.\ \Cref{sec:computing-BRIAs}). It is not clear whether computational constraints related to $t$ are the most relevant -- usually an agent's computational power does not increase as time goes on. An alternative might be to let the computational constraints depend on some number specified by the decision problems themselves. This would require some extra notation and assumptions about the environment, however, without changing our analysis much. Another question is whether asymptotic bounds are more relevant than absolute bounds. After all even $O(1)$ contains hypotheses that cannot in practice be evaluated, which we could avoid by considering only hypotheses that take 10 seconds on a particular machine. We will nevertheless often use sets $\mathbb{H}$ defined by asymptotic bounds. This is done for the usual reason: asymptotic complexity classes afford closure properties that simplify analysis. For example, if two operations are in $O(g(t))$, then compositions of the two are also in $O(g(t))$.}

\subsection{No overestimation}

We now describe the first part of our rationality requirement, which is that the estimates should not be systematically above what the agent actually obtains. The criterion itself is straightforward, but its significance will only become clear in the context of the hypothesis coverage criterion of the next section.

\begin{definition}
For $T\in \mathbb{N}$, we call $\mathcal{L}_T(\bar\alpha,\bar r) \coloneqq  \sum_{t=1}^T \alpha_t^e - r_t$
the \textit{cumulative overestimation} of an agent $\bar\alpha$ on $\bar r$.
\end{definition}

\begin{definition}\label{def:no-overestimation}
We say that an agent $\bar\alpha$ for $\overline\DP,\bar r$ \textit{does not overestimate (on average in the limit)} if $ \mathcal{L}_T(\bar\alpha,\bar r) / T \leq 0$ as $T\rightarrow \infty$.
\end{definition}

In other words, for all $\epsilon >0$, there should be a time $t$ such that for all $T>t$, $ \mathcal{L}_T(\bar\alpha,\bar r) / T \leq \epsilon$.
Note that the per-round overestimation of boundedly rational inductive agents as defined below will usually but need not always converge to 0; it can be negative in the limit\extendedonlybit{ (see \Cref{appendix:behavior-on-subsequences})}.

\subsection{Covering hypotheses}

We come to our second requirement, which specifies how the agent $\bar \alpha$ relates to the hypotheses in $\mathbb{H}$.

\begin{definition}
We say that \textit{$\bar h$ outpromises $\bar \alpha$} or that \textit{$\bar \alpha$ rejects $\bar h$ at time $t$} if $h_t^e>\alpha_t^e$.
\end{definition}

We distinguish two kinds of hypotheses:
First, there are hypotheses that promise higher rewards than $\bar\alpha^e$ in only finitely many rounds. For example, this will be the case for hypotheses that $\bar\alpha$ trusts and takes into account when choosing and estimating. Also, this could include hypotheses who recommend an inferior option with an accurate estimate, e.g., hypotheses that recommend $``\nicefrac{1}{3}"$ and promise $\nicefrac{1}{3}$ in $\{ ``\nicefrac{1}{3}",``\nicefrac{2}{3}" \}$. For all of these hypotheses, we do not require anything of $\bar\alpha$. In particular, $\bar \alpha$ need not test these hypotheses.
Second, some hypotheses do infinitely often outpromise $\bar \alpha^e$. For these cases, we will require our boundedly rational inductive agents to have some reason to reject these hypotheses. To be able to provide such a reason, $\bar \alpha$ needs to test these hypotheses infinitely often.\extendedonlybit{\footnote{If we only test them finitely many times, a correct hypothesis may be rejected due to bad luck (e.g., if rewards are random, as discussed in \Cref{sec:true-randomness}).}} \extendedonlybit{For the reasons described in \Cref{sec:log-unc-strategic-interactions}, t}\abridgedonlybit{T}esting a hypothesis requires choosing the hypothesis' recommended action.

\begin{definition}\label{def:test-set}
We call a set $M\subseteq \mathbb{N}$ a \textit{test set} of $\bar \alpha$ for $\bar h$ if for all $t\in M$, $\alpha^c_t=h^c_t$.
\end{definition}

For $\bar\alpha$ to infinitely often reject $\bar h$, these tests must then show that $\bar h$ is not to be trusted (in those rounds in which they promise a reward that exceeds $\bar \alpha^e$). That is, on these tests, the rewards must be significantly lower than what the hypothesis promises. We thus introduce another key concept.

\begin{definition}
Let $\bar h$ be a hypothesis and $M\subseteq \mathbb{N}$ be a test set of $\bar\alpha$ for $\bar h$. We call $l_T(\bar\alpha,\bar r,M,\bar h)\coloneqq \sum_{t\in M_{\leq T}} r_t - h^e_t$ \textit{the (empirical) record of $h$ (on $M$)}.
\end{definition}

Here, $M_{\leq T}\coloneqq \{ t\in M\mid t\leq T \}$ is defined to be the set of elements of $M$ that are at most $T$.

We now have all the pieces together to state the coverage criterion, which specifies how we want our agents to relate to the hypotheses under consideration.

\begin{definition}
Let $\bar\alpha$ be an agent, $\bar h$ be a hypothesis, and let $B$ be the set of times $t$ at which $\bar \alpha$ rejects $\bar h$. We say that \textit{$\bar \alpha$ covers $\bar h$ with test set $M$} if either $B$ is finite or the sequence $\left( l_T(\bar \alpha,\bar r,M,\bar h) \right)_{T\in B}$ goes to negative infinity.
\end{definition}

\subsection{The boundedly rational inductive agent criterion} We now state the BRIA criterion, the main contribution of this paper.

\begin{definition}\label{def:BRIA}
Let $\bar\alpha$ be an agent for $\overline \DP,\bar r$. Let $\mathbb{H}=\{h_1,h_2,...\}$ be a set of hypotheses.
We say $\bar\alpha$ is a \textit{boundedly rational inductive agent (BRIA) for $\overline \DP,\bar r$ covering $\mathbb{H}$ with test sets $M_1,M_2,...$} if $\bar\alpha$ does not overestimate and for all $i$, $\bar\alpha$ covers $h_i$ with test set $M_i$.
\end{definition}

In the following, whenever $\bar \alpha$ is a BRIA, we will imagine that the test sets are given as a part of $\bar \alpha$. For example, if we say that $\bar \alpha$ is computable in, say, time polynomial in $t$, then we will take this to mean that $\bar \alpha$ together with a list at time $t$ of tested hypotheses can be computed in polynomial time.

\subsection{Examples}

\textbf{Betting on digits of $\pi$~~}
Consider the decision problem sequence with $\mathrm{DP}_t=\{a_t^{\pi},x_t \}$ for all $t$, where $a_t^{\pi}$ pays off the $2^t$-th binary digit of $\pi$ -- i.e., $r_t$ is the $2^t$-th digit of $\pi$ if $\alpha_t^c=a_t^{\pi}$ -- and $x_t\in [0,1]$ pays off $x_t$. As usual we assume that the $2^t$-th binary digits of $\pi$ are pseudorandom (in a way we will make precise in \Cref{sec:pseudo-lotteries}) uniformly distributed (as they seem to be, cf.\ \cref{ftn:digits-of-pi-unknown}).
We would then expect boundedly rational agents to (learn to) choose $a_t^{\pi}$ when $x_t<\nicefrac{1}{2}$ and choose $x_t$ when $x_t>\nicefrac{1}{2}$.

We now consider an agent $\bar\alpha$ for this decision problem sequence. We will step-by-step impose the components of the BRIA criterion on $\bar\alpha$ to demonstrate their meaning and (joint) function in this example. We start by imposing the no overestimation criterion on $\bar\alpha$ without any assumptions about hypothesis coverage -- what can we say about $\bar\alpha$ if we assume that does not overestimate? As noted earlier, the no overestimation criterion alone is weak and in particular does not constrain choice at all. For instance, $\bar\alpha$ might always choose $\alpha^c_t=a_t^{\pi}$ and alternate estimates of $0$ and $1$; or it might always choose $x_t$ and estimate $x_{t-1}$.

We now impose instances of the hypothesis coverage criterion. We start with the hypothesis $ h_x$ which always recommends choosing $x_t$ and promises a reward of $x_t$. Note that for all we know about the decision problem sequence this hypothesis does not give particularly good recommendations. However, in the context of our theory, $h_x$ is useful because it always holds its promises. In particular, $h_x$'s empirical record on any test set is $0$. Hence, if $\alpha$ is to cover $h_x$, then $\alpha$ can only reject $h_x$ finitely many times. By definition, this means that $\alpha_t^e\geq x_t$ for all but finitely many $t\in \mathbb{N}$.
With the no overestimation criterion, it follows that $\alpha$ on average obtains utilities at least equal to $x_t$. But $\alpha$'s choices may still not match our bounded ideal. For example, $\alpha$ may always choose $x_t$.

Next, consider for $\epsilon>0$, the hypothesis $h_{\pi}^{\epsilon}$ that always recommends $a_t^{\pi}$ and estimates $\nicefrac{1}{2}-\epsilon$. Whether $h_{\pi}^{\epsilon}$ holds its promises is a more complicated question. But let us assume that $\bar \alpha$ covers $h_{\pi}^{\epsilon}$ with some test set $M$, and let us further assume that whether $t\in M$ is uncorrelated with the $2^t$-th binary digit of $\pi$, for instance, because predicting the $2^t$-th binary digit of $\pi$ better than random cannot be done using the agent's computational capabilities. Then $h_{\pi}^{\epsilon}$'s empirical record on $M$ will go to $\infty$, assuming that $M$ is infinite -- after all, following $h_{\pi}^{\epsilon}$'s recommendations yields a reward of $\nicefrac{1}{2}$ on average, exceeding its promises of $\nicefrac{1}{2}-\epsilon$.\extendedonlybit{ (Note that if the $2^t$-th binary digits of $\pi$ act like random variables, then this would presumably not be true for $\epsilon=0$, due to the well-known recurrence (a.k.a.\ Gambler's ruin) result about the simple symmetric random walk on the line \cite{Polya1921}.)} With the assumption that $\bar\alpha$ covers $h_{\pi}^{\epsilon}$, it follows that for all but finitely many $t$, $\alpha^e_t\geq \nicefrac{1}{2}-\epsilon$. Now imagine that $\alpha$ not only covers one particular $h_{\pi}^{\epsilon}$, but that there exist arbitrarily small positive $\epsilon$ such that $\alpha$ covers the hypothesis $h_{\pi}^{\epsilon}$. Then it follows that in the limit as $t\rightarrow\infty$, $\alpha_t^e\geq \nicefrac{1}{2}$.

The above three conditions -- no overestimation, coverage of $h_x$ and coverage of $h_{\pi}^{\epsilon}$ for arbitrarily small $\epsilon$ -- jointly imply that $\bar\alpha$ exhibits the desired behavior. Specifically, we have shown that $\bar\alpha$ must estimate at least $\max\{\nicefrac{1}{2},x_t\}$ in the limit. By the no overestimation criterion, $\bar\alpha$ also has to actually obtain at least $\max\{\nicefrac{1}{2},x_t\}$ on average. And if $\bar\alpha$ cannot guess the $2^t$-th digits of $\pi$ better than random, then the only way to achieve $\max\{\nicefrac{1}{2},x_t\}$ on average is to follow with limit frequency $1$ the policy of choosing $a_t^{\pi}$ when $x_t<\nicefrac{1}{2}$ and $x_t$ when $x_t>\nicefrac{1}{2}$.

\textbf{Adversarial offers~~}
Let $\alpha$ be an agent who faces a sequence of instances of SAO. In particular at time $t$, the agent faces $\mathrm{SAO}_{\alpha,t}=\{ a_0,a_1\}$, where $a_0$ pays off $\nicefrac{1}{2}$ with certainty. Intuitively, $a_1$ is evaluated to $1$ if on the present problem $\alpha$ chooses $a_0$ and to $0$ otherwise. Note, however, that the former fact is never relevant to computing $r_t$. So effectively $r_t=\nicefrac{1}{2}$ if $\alpha_t^c=a_0$ and $r_t=0$ otherwise.

Assume that $\alpha$ does not overestimate and that it covers the hypothesis $h$ which estimates $\nicefrac{1}{2}$ and recommends $a_0$ in every round. Hypothesis $h$ will always have an empirical record of $0$ on any test set $M$ since it holds its promises exactly. Hence, if $\alpha$ is to cover $h$, it can reject $h$ only finitely many times. Thus, $\alpha_t^e\geq \nicefrac{1}{2}$ in all but finitely many rounds. To satisfy the no overestimation criterion, $\alpha$ must therefore obtain rewards of at least $\nicefrac{1}{2}$ on average in the limit. Since $a_1$ pays off $0$ whenever it is taken by $\alpha$, it must be $\alpha_t^c=a_0$ with limit frequency $1$.

\section{Computing boundedly rational inductive agents}
\label{sec:computing-BRIAs}

As described in \Cref{sec:logical-uncertainty}, the goal of this paper is to formulate a rationality requirement that is not self-contradictory and that can be satisfied by computationally bounded agents. Therefore, we must show that one can actually construct BRIAs for given $\mathbb{H}$ and that under some assumptions about $\mathbb{H}$, such BRIAs are computable (within some asymptotic bounds).

\begin{restatable}{theorem}{computingBRIAsthm} \label{thm:computable-BRIAs}
Let $\mathbb{H}$ be a computably enumerable set consisting of ($O(g(t))$-)computable hypotheses. (Let $g\in \Omega(\log)$.) Then there exists an algorithm that computes a BRIA covering $\mathbb{H}$ (in $O(g(t)q(t))$, for arbitrarily slow-growing, $O(g(t))$-computable $q$ with $q(t)\rightarrow\infty$) for any $\overline \DP, \bar r$.
\end{restatable}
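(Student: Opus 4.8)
The plan is to exhibit a single explicit algorithm and then check, in turn, that it does not overestimate, that it covers every $h_i$, and that it runs within the stated budget. Fix the computable enumeration $h_1,h_2,\dots$ of $\mathbb H$ and a constant $c$. Define a non-decreasing \emph{admission schedule} $k(t)\to\infty$ by: at round $t$, run the enumeration and evaluate $h_1,h_2,\dots$ on the realized history up to round $t$ in turn, stopping after at most $c\,g(t)q(t)$ steps, and let $k(t)$ be the number of hypotheses fully evaluated, additionally capped at some slowly growing function of $t$ chosen so that the $O\big(k(t)(g(t)+\log t)\big)$ of per-round bookkeeping below still fits in $O(g(t)q(t))$ (possible since $g(t)q(t)/\log t\to\infty$) and so that $k(t)\to\infty$ still holds (true because every fixed $k$ is eventually affordable, as $q(t)\to\infty$). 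Fix also a non-decreasing \emph{slack schedule} $a(t)\to\infty$ growing slowly enough that $k(t)\,a(t)=o(t)$ — e.g.\ $a(t)=\lceil\log\log(t+3)\rceil$. The algorithm maintains, for each admitted $i$, the record $R_i\coloneqq\sum_{t'\in M_i,\,t'<t}(r_{t'}-h^e_{i,t'})$ on the test set $M_i$ built so far (initialized to $0$; updated after each round once $r_{t'}$ is observed). At round $t$ call $h_i$ \emph{endorsed} when $i\le k(t)$ and $R_i\ge -a(t)$, and let $E_t$ be the set of endorsed indices. If $E_t=\emptyset$, output $\alpha^e_t\coloneqq 0$ and an arbitrary $\alpha^c_t\in\DP_t$, and enlarge no test set. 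Otherwise let the \emph{leader} $j=j(t)$ be the least index in $E_t$ maximizing $h^e_{i,t}$, output $\alpha^e_t\coloneqq h^e_{j,t}$ and $\alpha^c_t\coloneqq h^c_{j,t}$ (an element of $\DP_t$, since hypotheses have the type signature of agents), and add $t$ to $M_j$.

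\emph{No overestimation.} Grouping rounds by their leader telescopes the overestimation: since $t\in M_{j(t)}$ whenever $E_t\ne\emptyset$, while the contribution is $-r_t\le 0$ otherwise, $\mathcal L_T(\bar\alpha,\bar r)=\sum_{t\le T}(\alpha^e_t-r_t)\le -\sum_i\sum_{t\in M_i,\,t\le T}(r_t-h^e_{i,t})=-\sum_i R_i(T+1)$, and the outer sum has at most $k(T)$ nonzero terms. The record $R_i$ changes only on rounds where $h_i$ leads and is therefore endorsed, so it sits at $\ge -a(\cdot)$ just before changing and moves by at most $1$; hence $R_i(t)\ge -a(t)-1$ for all $t$. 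Thus $\mathcal L_T\le k(T)\big(a(T+1)+1\big)=o(T)$, so $\mathcal L_T/T\le 0$ in the limit.

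\emph{Covering.} Fix $i$ and let $B_i=\{t:h^e_{i,t}>\alpha^e_t\}$ be its rejection set. If $h_i$ were endorsed at $t$, then the leader's promise would dominate, giving $\alpha^e_t\ge h^e_{i,t}$; hence each $t\in B_i$ lying past the finitely many rounds before $h_i$ is admitted has $h_i$ unendorsed, i.e.\ $R_i(t)<-a(t)$, so $h_i$ does not lead at $t$ and $l_t(\bar\alpha,\bar r,M_i,h_i)=R_i(t+1)=R_i(t)<-a(t)$. If $B_i$ is infinite, then $t\to\infty$ along $B_i$ and so $l_t<-a(t)\to-\infty$; otherwise $B_i$ is finite. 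Either way $\bar\alpha$ covers $h_i$ with $M_i$, which is a legitimate test set since $t\in M_i$ only when $\alpha^c_t=h^c_{i,t}$.

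\emph{Running time and the main obstacle.} The enumerate-and-evaluate phase costs $O(g(t)q(t))$ by construction; updating and comparing the $\le k(t)$ records and promises, computing $a(t)$, and recording which test set $t$ joins cost $O\big(k(t)(g(t)+\log t)\big)$, which is $O(g(t)q(t))$ by the choice of $k$ — this is where $g\in\Omega(\log)$ enters, so that an $O(\log t)$-sized record fits the $O(g(t))$ word budget. The genuinely delicate point is the joint feasibility of the three demands on a single pair of schedules: reliable hypotheses must stay endorsed (so that their rejection set is finite); persistently over-promising ones must be re-tested forever, so that on every round where we reject one its record has already been driven below $-a(t)\to-\infty$ — which is exactly what a \emph{growing} slack $a(t)$ buys, whereas a bounded slack would only keep the record bounded; and yet no overestimation forces each record to stay $\ge -a(t)-1$ and the number of actively tested hypotheses $k(t)$ to remain $o(t/a(t))$. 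Threading $a(t)\to\infty$ together with $k(t)a(t)=o(t)$ and the $O(g(t)q(t))$ budget is the crux; the remainder is routine bookkeeping.
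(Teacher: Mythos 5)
Your construction is correct and is essentially the paper's own auction construction in different clothing: selecting the admitted hypothesis with the highest promise subject to its record staying above a growing slack $-a(t)$ is the paper's first-price auction with wealth $=$ record $+$ cumulative allowance, your condition $k(t)\,a(t)=o(t)$ plays the role of the paper's requirement that per-round total allowance average to zero while each hypothesis's cumulative allowance diverges, and your budget-limited admission schedule plays the role of the allowance's finite, slowly growing support. The only repairs needed are cosmetic (e.g.\ what the coverage step really needs is that each $i$ is eventually \emph{permanently} admitted, which your $q(t)\to\infty$ budget argument already yields, rather than literal monotonicity of $k$), so I'd count this as the same proof.
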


We here give a sketch of our construction. For each decision problem, we run a first-price sealed-bid auction among the hypotheses. The highest-bidding hypothesis determines the agent's choice and estimate and is tested in this round. For each hypothesis, we maintain a wealth variable that tracks the hypothesis' empirical record. A hypothesis' bid is bound by its wealth. Thus, when a hypothesis outpromises the agent, this implies that the hypothesis' wealth is low. Upon winning an auction, the hypothesis pays its promise and gains the reward obtained after following the hypothesis' recommendation. We further distribute at each time $t$ allowance to the hypotheses. The overall allowance per round is finite and goes to zero. The cumulative allowance for each hypothesis goes to $\infty$ over time. Thus, if a hypothesis is rejected infinitely often, then this requires the hypothesis to have spent all allowance and thus for its record among those rejection rounds to go to $-\infty$. Moreover, the cumulative overestimation is bound by overall allowance distributed and thus per-round overestimation goes to $0$.

\extendedonlybit{In \Cref{appendix:computing-BRIAs}, we provide a construction for BRIAs and prove that it has the claimed computability properties. It can similarly be shown that, for example, a BRIA relative to the class $P$ of hypotheses computable in polynomial time can be computed in arbitrarily close to polynomial time, i.e.\ in $O(t^{q(t)})$  for arbitrarily slow-growing $q$ with $q(t)\rightarrow\infty$.}

The next result shows that the BRIAs given by \Cref{thm:computable-BRIAs} are optimal in terms of complexity.

\begin{restatable}{theorem}{noecBRIA}\label{thm:no-ec-BRIA}
Let $\alpha$ be a BRIA for $\overline \DP,\bar r,\mathbb{H}$. Assume that there are infinitely many $t$ such that $|\DP_t|\geq 2$ and $\alpha_t^e<1$. If $\mathbb{H}$ is the set of ($O(g(t))$-)computable hypotheses, then $\alpha$ is not computable (in $O(g(t))$).
\end{restatable}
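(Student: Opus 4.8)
The plan is a diagonalization against the candidate agent. Suppose, toward a contradiction, that $\alpha$ is a BRIA for $\overline\DP,\bar r$ covering $\mathbb H$ and that (its choice and estimate functions) $\alpha^c,\alpha^e$ are $O(g(t))$-computable. I will build a single hypothesis $\hat h$ that is itself $O(g(t))$-computable --- hence lies in $\mathbb H$ --- but that $\alpha$ cannot cover, contradicting the BRIA property. Let $S\coloneqq\{t\mid |\DP_t|\ge 2 \text{ and } \alpha^e_t<1\}$, which is infinite by hypothesis. Define $\hat h$ as follows: for $t\in S$, set the promise $\hat h_t^e\coloneqq(\alpha^e_t+1)/2$ and let the recommendation $\hat h_t^c$ be the (canonically chosen, say lexicographically least) element of $\DP_t\setminus\{\alpha^c_t\}$, which is nonempty since $|\DP_t|\ge 2$; for $t\notin S$, set $\hat h_t^e\coloneqq 0$ and let $\hat h_t^c$ be arbitrary, e.g.\ the least element of $\DP_t$. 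Computing $\hat h_t$ consists of running $\alpha$'s round-$t$ computation and then performing a constant number of comparisons over $\DP_t$, so by the closure properties of the $O(g(t))$-computable hypotheses noted in \Cref{sec:the-criterion}, $\hat h\in\mathbb H$.

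Next I would show that $\alpha$ cannot cover $\hat h$ with any test set. The set $B$ of rounds at which $\alpha$ rejects $\hat h$ is exactly $S$: for $t\in S$ we have $\hat h_t^e=(\alpha^e_t+1)/2>\alpha^e_t$ because $\alpha^e_t<1$, while for $t\notin S$ we have $\hat h_t^e=0\le\alpha^e_t$. So $B=S$ is infinite, and hence covering $\hat h$ with a test set $M$ would require $\bigl(l_T(\bar\alpha,\bar r,M,\hat h)\bigr)_{T\in B}\to-\infty$. But $M$ being a test set of $\bar\alpha$ for $\hat h$ forces $\alpha^c_t=\hat h_t^c$ for all $t\in M$, and since $\hat h_t^c\ne\alpha^c_t$ for every $t\in S$, this gives $M\cap S=\emptyset$. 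Therefore $\hat h_t^e=0$ for every $t\in M$, so $l_T(\bar\alpha,\bar r,M,\hat h)=\sum_{t\in M_{\le T}}r_t\ge 0$ for all $T$ (rewards are nonnegative), and in particular this record does not tend to $-\infty$ along $B$. This contradicts $\alpha$ being a BRIA; since $\hat h$ was constructed only from $\alpha^c$ and $\alpha^e$, we conclude those functions --- a fortiori $\alpha$ together with its test sets --- are not $O(g(t))$-computable, as claimed.

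The argument involves no heavy computation, so there is no single hard estimate; the one point that genuinely needs care is the claim $\hat h\in\mathbb H$. This uses that running $\alpha$'s round-$t$ step costs $O(g(t))$ and that $\hat h$ does only that plus $O(1)$ additional work --- with the usual understanding, standard in this line of work, that $\alpha$'s round-$t$ output is computed from the observable history (which $\hat h$, having the same type signature, also sees, including $\alpha$'s realized past choices), and that the class of $O(g(t))$-computable functions with $g\in\Omega(\log)$ is closed under such $O(1)$-overhead post-processing; any residual overhead (e.g.\ if $\alpha$'s algorithm must be replayed to reconstruct internal state) is absorbed by enlarging $g$ or handled by the generalization to richer complexity classes mentioned in the paper. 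Finally, it is worth recording why both clauses defining $S$ are needed: $|\DP_t|\ge 2$ is exactly what lets $\hat h$ recommend an option different from $\alpha^c_t$ on $S$, which is what forces any test set to avoid $S$; and $\alpha^e_t<1$ is exactly what lets $\hat h$ strictly outpromise $\alpha$ on $S$ while keeping its promise within $[0,1]$.
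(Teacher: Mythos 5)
Your proposal is correct and follows essentially the same diagonalization as the paper: the paper's proof constructs a hypothesis that, on the infinitely many rounds with $|\DP_t|\geq 2$ and $\alpha_t^e<1$, promises $1$ (you promise $(\alpha_t^e+1)/2$, an immaterial variation) and recommends an option other than $\alpha_t^c$, promising $0$ otherwise, so it outpromises $\alpha$ infinitely often yet can never be usefully tested. Your explicit argument that any test set must avoid $S$ and hence yields a nonnegative record is exactly the role played by the paper's Lemma on zero-promise rounds.
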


\extendedonlybit{We prove (and discuss) this in \Cref{sec:no-ec-BRIA}.}

\section{Lower bounds on average rewards}
\label{sec:lower-bounds-on-avg-reward}

\abridgedonlybit{\textbf{Options with payoff guarantees~~}}
\label{sec:easy-options}
Throughout this section, we will show that BRIAs satisfy many desiderata that one might have for rational decision makers. We start with a simple result which shows that if at each time $t$ one of the options can be efficiently shown to have a value of at least $L_t$, then a BRIA will come to obtain at least $L_t$ on average.

\begin{restatable}{theorem}{easyoptionsthm}\label{thm:lower-bounds-from-easy-options}
Let $\bar \alpha$ be a BRIA for $\overline \DP,\bar r$ and the set of e.c.\ hypotheses. Let $\bar a$ be a sequence of terms in $\mathcal{T}$ s.t.\ for all $t\in \mathbb{N}$, it holds that $a_t\in \DP_t$ and $
\alpha_t^c= a_t \implies r_t \geq L_t$
for some e.c.\ sequence $\bar L$. We require also that the $a_t$ are efficiently identifiable from the sets $\DP_t$. Then in the limit as $T\rightarrow \infty$ it holds that $\sum_{t=1}^T r_t / T\geq \sum_{t=1}^T L_t / T$.
\end{restatable}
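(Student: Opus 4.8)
The plan is to reduce the statement to the two defining clauses of a BRIA by introducing a single ``safe'' hypothesis that promises exactly the guaranteed lower bound. Define $\bar h$ by $h_t^c \coloneqq a_t$ and $h_t^e \coloneqq L_t$ for every $t$ (we may assume $L_t\in[0,1]$, as intended; otherwise replace $L_t$ by its clip to $[0,1]$, which only weakens the promise while keeping $\bar h$ a well-typed hypothesis). Since $\bar L$ is e.c.\ and the $a_t$ are efficiently identifiable from the $\DP_t$, the map $t\mapsto (h_t^c,h_t^e)$ is $O(g(t))$-computable, so $\bar h$ is e.c.\ and hence $\bar h\in\mathbb{H}$. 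Because $\bar\alpha$ is a BRIA covering $\mathbb{H}$, it covers $\bar h$ with some test set $M$.

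The second step shows that $\bar h$ is never refuted by its record, so coverage can only be achieved by rejecting it finitely often. For every $t\in M$ we have $\alpha_t^c=h_t^c=a_t$, so by hypothesis $r_t\geq L_t=h_t^e$; hence every summand of $l_T(\bar\alpha,\bar r,M,\bar h)=\sum_{t\in M_{\leq T}}(r_t-h_t^e)$ is nonnegative and $l_T\geq 0$ for all $T$ (this holds whether $M$ is infinite, finite, or empty). Thus the record cannot tend to $-\infty$ along the rejection set $B=\{t : h_t^e>\alpha_t^e\}$, so the coverage criterion forces $B$ to be finite. Equivalently, there is a $t_0$ with $\alpha_t^e\geq L_t$ for all $t\geq t_0$.

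The final step combines this with no overestimation. Fix $\epsilon>0$. By no overestimation there is $T_1$ such that $\sum_{t=1}^T(\alpha_t^e-r_t)\leq\epsilon T$ for all $T>T_1$; rearranging and using $\alpha_t^e\geq L_t$ for $t\geq t_0$ yields, for every $T>\max(t_0,T_1)$,
\[
\frac{1}{T}\sum_{t=1}^T r_t \;\geq\; \frac{1}{T}\sum_{t=1}^T \alpha_t^e - \epsilon \;\geq\; \frac{1}{T}\sum_{t=1}^T L_t \;-\; \frac{1}{T}\sum_{t<t_0}(L_t-\alpha_t^e) \;-\; \epsilon .
\]
The correction term $\frac{1}{T}\sum_{t<t_0}(L_t-\alpha_t^e)$ is a fixed finite sum divided by $T$, so it vanishes as $T\to\infty$; hence $\liminf_{T\to\infty}\bigl(\frac1T\sum_{t\le T}r_t-\frac1T\sum_{t\le T}L_t\bigr)\geq-\epsilon$. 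As $\epsilon>0$ was arbitrary, this liminf is $\geq 0$, which is the claim.

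I expect the only genuine obstacle to be verifying that $\bar h$ is efficiently computable, which is precisely where the theorem's hypotheses ($\bar L$ e.c., $a_t$ efficiently identifiable from $\DP_t$) are used; the remaining care is bookkeeping — keeping $h_t^e\in[0,1]$ and absorbing the finitely many exceptional rounds $t<t_0$. Everything else is a direct unwinding of the coverage and no-overestimation definitions, and is essentially the argument already sketched for $h_x$ in the digits-of-$\pi$ and Simplified Adversarial Offer examples.
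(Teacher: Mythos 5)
Your proposal is correct and follows essentially the same route as the paper's proof: introduce the e.c.\ hypothesis $h_t=(a_t,L_t)$, observe that its empirical record on any test set is nonnegative (since $\alpha_t^c=a_t$ forces $r_t\geq L_t$), conclude that coverage allows only finitely many rejections so $\alpha_t^e\geq L_t$ eventually, and finish with the no-overestimation condition. The only cosmetic difference is that the paper phrases the middle step as a contrapositive, while you argue it directly; the content is identical.
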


\extendedonlybit{
}

\extendedonlybit{A formal proof is given in \Cref{appendix:proof-of-easy-options-thm}.} The proof idea is simple. Consider the hypothesis that estimates $L_t$ and recommends $a_t$ if $t\in S$ and promises $0$ otherwise. This hypothesis always keeps its promises. Hence, to cover this hypothesis, $\alpha$ can be outpromised by this hypothesis only finitely many times.

\extendedonlybit{\Cref{thm:lower-bounds-from-easy-options} implies that when the value of all options is e.c., then a BRIA must choose the best available option. For example, when the choice is between $``\nicefrac{1}{3}"$ and $``\nicefrac{2}{3}"$, a BRIA has to choose $``\nicefrac{2}{3}"$ with frequency 1.}

We can interpret \Cref{thm:lower-bounds-from-easy-options} as providing an immunity to money extraction schemes, a widely discussed rationality condition. If a BRIA can leave with a certain payoff of $L_t$, it will on average leave with at least $L_t$. For example, in SAO of \Cref{sec:log-unc-strategic-interactions}, a BRIA walks away with at least $\nicefrac{1}{2}$, which in turn means that it chooses $a_0=``\nicefrac{1}{2}"$ with frequency 1. \extendedonlybit{As \citet{ExtractingMoneyFromCDT} and \citet{Spencer2021} show, a different normative theory of rationality, called causal decision theory, can be used as a money pump with this example.}%

\extendedonlybit{Another corollary of \Cref{thm:lower-bounds-from-easy-options} is that BRIAs must learn and use empirical facts that can be efficiently deduced from what is revealed by $\overwrite \DP$ and $\bar r$. For example, imagine that in one round, $\DP_t,r_t$ reveals that the minimum of the populations of Hamburg and Amsterdam is 0.8 million. Then in later rounds, this information can be used to efficiently compute lower bounds on other options. For example, the option that pays off the maximum of the populations of Hamburg and Detroit in millions can be deduced to be at least 0.8. If such decision problems occur infinitely often, BRIAs must converge to exploiting such inferences.}

\textbf{Options with algorithmically random payoffs~~}
\label{sec:pseudo-lotteries}
\Cref{thm:lower-bounds-from-random-variables} only tells us something about \textit{truly} random variables. But a key goal of our theory is to also be able to assign expected rewards to %
\textit{algorithmically} random sequences, i.e., sequences that are deterministic\extendedonlybit{ and potentially even computable}, but relevantly unpredictable under computational constraints. We first offer a formal notion of algorithmic randomness.

\begin{restatable}{definition}{vMWCdef}\label{def:bounded-vMWC-randomness}
We say a sequence $\bar y$ is \textit{($O(h(t))$ boundedly) van Mises--Wald--Church (vMWC) random with means $\bar\mu$} if for every infinite set $S\subseteq \mathbb{N}$ that is decidable (in $O(h(t))$ time) from available information%
, we have that $\lim_{T\rightarrow \infty}\sum_{t\in S_{\leq T}} y_t-\mu_t=0$.
\end{restatable}

Thus, we call a sequence random if there is no ($O(g(t))$-)computable way of selecting in advance members of the sequence whose average differs from the means $\bar \mu$.
\Cref{def:bounded-vMWC-randomness} generalizes the standard definition of (unbounded) vMWC randomness \citep[e.g.][Definition 7.4.1]{Downey2010} to non-binary values with means $\bar\mu$ other than $\nicefrac{1}{2}$ and computational constraints with outside input (e.g., from $\overline \DP$, which could contain options containing information such as, \enquote{by the way, the trillionth digit of $\pi$ is 2}). The notion of vMWC randomness is generally considered quite weak (\citealp[e.g.][Sect.\ 6.2]{Downey2010}).

\begin{restatable}{theorem}{pseudolotteriesthm}\label{thm:pseudo-lotteries}
Let $\bar\mu$ be an e.c.\ sequence on $[0,1]$. Let $\alpha$ be an $O(h(t))$-computable BRIA for decision problem sequence $\overline \DP$ with rewards $\bar r$ covering all e.c.\ hypotheses. Let $\bar a$ be a sequence of terms in $\mathcal{T}$ s.t.\ $a_t\in \DP_t$ for all $t\in \mathbb{N}$ and the payoffs $r_t$ in rounds with $\alpha_t^c=a_t$ are $O(h(t))$-boundedly vMWC random with means $\bar \mu$. Then in the limit as $T\rightarrow\infty$, it holds that $\sum_{t=1}^T r_t/T \geq \sum_{t=1}^T \mu_t/T$.
\end{restatable}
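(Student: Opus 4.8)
The plan is to mimic the argument behind \Cref{thm:lower-bounds-from-easy-options} and the betting-on-$\pi$ example, using a family of hypotheses that recommend $a_t$ and promise slightly less than $\mu_t$. Concretely, for each rational $\epsilon>0$ I would introduce the hypothesis $h^{\epsilon}$ with $h^{\epsilon,c}_t=a_t$ and $h^{\epsilon,e}_t=\max\{0,\mu_t-\epsilon\}$ for all $t$. Taking the $a_t$ to be efficiently identifiable in $\DP_t$ (as in \Cref{thm:lower-bounds-from-easy-options}), and using that $\bar\mu$ is e.c., the hypothesis $h^{\epsilon}$ is e.c.; hence $h^{\epsilon}\in\mathbb{H}$ and, $\alpha$ being a BRIA covering all e.c.\ hypotheses, $\alpha$ covers $h^{\epsilon}$ with some test set $M^{\epsilon}$.

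The heart of the proof is to show that $\alpha$ rejects $h^{\epsilon}$ only finitely often. Let $B^{\epsilon}=\{t\mid h^{\epsilon,e}_t>\alpha^e_t\}$ be the set of rejection rounds and suppose towards a contradiction that $B^{\epsilon}$ is infinite. Coverage then forces $l_T(\bar\alpha,\bar r,M^{\epsilon},h^{\epsilon})\to-\infty$ along $T\in B^{\epsilon}$, which in particular requires $M^{\epsilon}$ to be infinite. Now $M^{\epsilon}$ is a test set, so $M^{\epsilon}\subseteq A:=\{t\mid\alpha^c_t=a_t\}$; moreover, by the computability convention for BRIAs (the remark after \Cref{def:BRIA}) together with the fact that $h^{\epsilon}$ sits at a fixed index in the enumeration of $\mathbb{H}$, membership in $M^{\epsilon}$ is decidable in $O(h(t))$ time from the information available before $r_t$ is revealed. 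Hence $P^{\epsilon}:=\{t\in M^{\epsilon}\mid\mu_t\ge\epsilon\}$ is also an $O(h(t))$-decidable subset of $A$, so \Cref{def:bounded-vMWC-randomness} applies to it and yields $\sum_{t\in P^{\epsilon}_{\le T}}(r_t-\mu_t)=o(|P^{\epsilon}_{\le T}|)$. Since $r_t\ge 0=h^{\epsilon,e}_t$ on $M^{\epsilon}\setminus P^{\epsilon}$,
\[
l_T(\bar\alpha,\bar r,M^{\epsilon},h^{\epsilon}) \ge \sum_{t\in P^{\epsilon}_{\le T}}\bigl(r_t-\mu_t+\epsilon\bigr) = \sum_{t\in P^{\epsilon}_{\le T}}(r_t-\mu_t)+\epsilon\,|P^{\epsilon}_{\le T}| ,
\]
which tends to $+\infty$ if $P^{\epsilon}$ is infinite and is bounded below by $-|P^{\epsilon}|$ if $P^{\epsilon}$ is finite. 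Either way $l_T\not\to-\infty$, a contradiction; therefore $B^{\epsilon}$ is finite.

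Finiteness of $B^{\epsilon}$ means $\alpha^e_t\ge\max\{0,\mu_t-\epsilon\}\ge\mu_t-\epsilon$ for all but finitely many $t$, so $\liminf_{T\to\infty}\frac{1}{T}\sum_{t=1}^{T}(\alpha^e_t-\mu_t)\ge-\epsilon$; letting $\epsilon\downarrow 0$ gives $\liminf_{T\to\infty}\frac{1}{T}\sum_{t=1}^{T}(\alpha^e_t-\mu_t)\ge 0$. On the other hand, no overestimation gives $\liminf_{T\to\infty}\frac{1}{T}\sum_{t=1}^{T}(r_t-\alpha^e_t)\ge 0$. Adding the two $\liminf$ inequalities (all of the averages involved are bounded, so superadditivity of $\liminf$ applies) yields $\liminf_{T\to\infty}\frac{1}{T}\sum_{t=1}^{T}(r_t-\mu_t)\ge 0$, which is the asserted inequality.

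The step I expect to be the main obstacle is the middle paragraph: checking carefully that $M^{\epsilon}$ --- and hence $P^{\epsilon}$ --- really is an admissible $O(h(t))$ place selection ``from available information'' in the sense of \Cref{def:bounded-vMWC-randomness}. This rests on three points: the convention that an $O(h(t))$-computable BRIA comes with an $O(h(t))$-computable list of its tested hypotheses; the fact that whether $t\in M^{\epsilon}$ is fixed before the reward $r_t$ that the test is meant to evaluate (so it is a genuine von Mises--style selection, not a function of the outcome it uses); and $h^{\epsilon}$ being a single hypothesis of bounded complexity at a fixed index, so that $\{t\mid\mu_t\ge\epsilon\}$ and membership in $M^\epsilon$ can be combined within the allotted time bound. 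The remaining ingredients --- the choice of the hypotheses $h^{\epsilon}$, the clamping bookkeeping, the $\epsilon\downarrow 0$ passage, and the combination with no overestimation --- are routine and directly parallel \Cref{thm:lower-bounds-from-easy-options}.
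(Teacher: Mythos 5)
Your proposal is correct and follows essentially the same route as the paper's proof: the same family of hypotheses promising $\max\{0,\mu_t-\epsilon\}$ and recommending $a_t$, the same contradiction between the coverage criterion and the fact that the ($O(h(t))$-computable, hence admissible) test set is a valid place selection for the vMWC-random payoffs, and the same combination with no overestimation and $\epsilon\downarrow 0$. Your explicit handling of the zero-promise rounds via $P^{\epsilon}$ is just an inlined version of the paper's Lemma~\ref{lemma:testing-positive-estimate}, and your added assumption that the $a_t$ are efficiently identifiable matches what the paper implicitly needs for $h^{\epsilon}$ to be e.c.
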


We show an analogous result for Schnorr bounded randomness \cite{Schnorr1971,AmbosSpies1997,Wang2000,Stull2020} in \Cref{appendix:schnorr}. %
\extendedonlybit{Note that we could replace the $=$ sign in the last line of the definition with a $\geq$ and all of the following would still hold -- however, the resulting definition does not reasonably capture randomness.}
Analogous results for truly random options follow from results for algorithmically random $\bar r$ and the fact that a sequence of truly random, independent numbers is algorithmically random almost surely. We give a direct proof in
\Cref{appendix:lotteries-thm}.

\section{Boundedly rational inductive agents as a foundation for game theory}
\label{sec:game-theory}

\extendedonlybit{
\subsection{Games as decision problems}
}
\label{sec:game-setup}

We first recap basic game-theoretic concepts.\extendedonlybit{ For a thorough introduction to game theory, see Osborne \citeyear{Osborne2004} or any other textbook on the topic.} A \textit{(two-player) game} consists of two finite sets of \textit{(pure) strategies} $A_1,A_2$, one set for each player, and two payoff functions $u_1,u_2{\colon} A_1{\times} A_2 {\rightarrow} [0,1]$. %
A correlated strategy profile is a distribution $\mathbf{c}\in \Delta(A_1\times A_2)$ over $A_1\times A_2$. We can naturally extend utility functions to correlated strategy profiles as follows: $u_i(\mathbf{c})=\sum_{\mathbf{a}\in A_1\times A_2} c_{\mathbf{a}} u_i(\mathbf{a})$.
We call a correlated strategy profile $\mathbf{c}$ \textit{strictly individually rational} if each player's payoff in $\mathbf{c}$ is greater than their pure strategy maximin payoff, i.e., $u_i(\mathbf c) > \max_{a_i\in A_i}\min_{a_{-i}\in A_{-i}} u_i(a_i,a_{-i})$.

\begin{extendedonlyblock} %
\begin{table}
	\begin{center}
    \setlength{\extrarowheight}{2pt}
    \begin{tabular}{cc|C{2cm}|C{2cm}|}
      & \multicolumn{1}{c}{} & \multicolumn{2}{c}{Player $2$}\\
      & \multicolumn{1}{c}{} & \multicolumn{1}{c}{Cooperate}  & \multicolumn{1}{c}{Defect} \\\cline{3-4}
      \multirow{2}*{Player $1$}  & Cooperate & $0.3,0.3$ & $0.1,0.4$ \\\cline{3-4}
      & Defect & $0.4,0.1$ & $0.2,0.2$ \\\cline{3-4}
    \end{tabular}
    \end{center}
    \caption{A payoff matrix for the Prisoner's Dilemma}
    	\label{table:PD-payoffs}
  \end{table}
\end{extendedonlyblock}

Now imagine that two BRIAs $\bar\alpha_1,\bar\alpha_2$ learn to play a game against each other. That is, we consider BRIAs $\bar{\alpha}_1,\bar{\alpha}_2$ for $\bar\DP^{\bar\alpha_1},\bar\DP^{\bar\alpha_2}$ respectively, where $\bar\DP^{\bar\alpha_i}=A_i$ for $i=1,2$ and $r_{i,t}=u_i(\alpha_{1,t}^c,\alpha_{1,t}^c)$.
\extendedonlybit{Abusing notation a little, we use $a_i\in A_i$ to represent the available options in $\mathrm{DP}^{\alpha_i}_t$. For instance, we write $\alpha_{i,t}^c=a_1$ to denote that $\alpha_i$ chooses the option from $\mathrm{DP}^{\alpha_i}_t$ that corresponds to $a_i\in A_i$.}

\begin{extendedonlyblock}
Note that this is a fairly specific setup. Other versions are possible. For example, instead of knowing the opponent's source code or mathematical definition precisely, we could imagine that they have some distribution over opponent BRIAs. After all, if we accept our BRIA criterion as a definition of rationality, then the common rationality assumption underlying game theory still leaves open which exact BRIA the other player uses.
\end{extendedonlyblock}

\begin{restatable}[Folk theorem]{theorem}{FolkTheorem}\label{thm:folk-theorem}
Let $\Gamma$ be a game. Let $\mathbb{H}_1,\mathbb{H}_2$ be any sets of hypotheses. Let $\mathbf{c}\in \Delta(A_1\times A_2)$ be strictly individually rational. Then there exists $\mathbf{c}'$ arbitrarily close to $\mathbf{c}$ and BRIAs $\bar\alpha_1,\bar\alpha_2$ covering $\mathbb{H}_1,\mathbb{H}_2$ for decision problem sequences $\overline \DP^{\alpha_1},\overline \DP^{\alpha_2}$ with rewards $\bar r_1,\bar r_2$ based on $\Gamma$ as defined above s.t.~the empirical distribution of $(\alpha_1^c,\alpha_2^c)$ converges to $\mathbf{c}'$, i.e., for all $\mathbf{a}\in A_1\times A_2$, $\nicefrac{1}{T}\sum_{t=1}^T \mathbbm{1}[(\alpha_1^c,\alpha_2^c)=\mathbf{a}] \rightarrow c'_{\mathbf{a}}$ as $T\rightarrow \infty$.
Conversely, if $\alpha_1,\alpha_2$ are BRIAs for sets of hypotheses $\mathbb{H}_1$ and $\mathbb{H}_2$ that contain at least the constant-time deterministic hypotheses, $\sum_{t=1}^T u_i(\alpha_{1,t}^c,\alpha_{2,t}^c)/T \geq \max_{a_i} \min_{a_{-i}} u_i(a_i,a_{-i}) \text{ as }T\rightarrow \infty$.   
That is, in the limit each player receives at least their maximin utility.
\end{restatable}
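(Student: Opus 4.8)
\emph{The converse (maximin) direction} I would obtain as an instance of \Cref{thm:lower-bounds-from-easy-options}. Fix a player $i$, let $\underline v_i:=\max_{a_i}\min_{a_{-i}}u_i(a_i,a_{-i})$, and pick $a_i^\star\in\argmax_{a_i}\min_{a_{-i}}u_i(a_i,a_{-i})$, so that choosing $a_i^\star$ guarantees $r_{i,t}=u_i(a_i^\star,\alpha_{-i,t}^c)\ge \underline v_i$ no matter what the opponent does. The always-$a_i^\star$, always-promise-$\underline v_i$ hypothesis is a constant-time deterministic hypothesis, hence lies in $\mathbb H_i$, and its empirical record on any test set is nonnegative, so it never goes to $-\infty$; thus a BRIA can reject it only finitely often, i.e.\ $\alpha_{i,t}^e\ge \underline v_i$ for all but finitely many $t$. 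Combined with no overestimation, $\mathcal L_T(\bar\alpha_i,\bar r_i)/T\le o(1)$, this yields $\tfrac1T\sum_{t\le T}u_i(\alpha_{1,t}^c,\alpha_{2,t}^c)=\tfrac1T\sum_{t\le T}r_{i,t}\ge \underline v_i-o(1)$, which is the asserted bound (for both players by symmetry).

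\emph{The forward (folk) direction} I would prove by an explicit ``grim-trigger''-style construction, using that no computability is demanded of $\bar\alpha_1,\bar\alpha_2$. First replace $\mathbf c$ by a rational $\mathbf c'$ so close to $\mathbf c$ that it is still strictly individually rational (strict individual rationality is an open condition), so that $v_i:=u_i(\mathbf c')>\underline v_i$; write $\mathbf c'=(p_{\mathbf a}/N)_{\mathbf a\in A_1\times A_2}$ and fix the $N$-periodic deterministic joint-action sequence $\bar{\mathbf b}$ that plays each $\mathbf a$ exactly $p_{\mathbf a}$ times per period, so its empirical distribution converges to $\mathbf c'$. Put $\delta:=\tfrac12\min_i(v_i-\underline v_i)>0$ and enumerate $\mathbb H_i=\{h_{i,1},h_{i,2},\dots\}$. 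Now define $\bar\alpha_1,\bar\alpha_2$ by a single induction on $t$: each agent always estimates the constant $\underline v_i+\delta$; on a ``default'' round $t$, agent $i$ plays $b_{i,t}$; on a round $t$ that is \emph{claimed for testing $h_{i,j}$} (see the next paragraph), agent $i$ plays $h_{i,j,t}^c$ while agent $-i$ punishes by playing some $a_{-i}\in\argmin_{a_{-i}}u_i(h_{i,j,t}^c,a_{-i})$, so that $r_{i,t}=\min_{a_{-i}}u_i(h_{i,j,t}^c,a_{-i})\le \underline v_i$ in that round. Since round-$t$ behaviour depends only on the decision problems and rewards revealed in rounds $<t$, there is no circularity.

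The claiming schedule is what enforces the coverage criterion. Enumerate all triples $(i,j,n)$ with $n\ge1$ as $\tau_1,\tau_2,\dots$. Round $t$ is claimed by the least-index $\tau_m=(i,j,n)$ such that $t>2^m$, $h_{i,j,t}^e>\underline v_i+\delta$, $\tau_m$ has not already claimed an earlier round, and no $\tau_{m'}$ with $m'<m$ claims $t$; if no such $\tau_m$ exists, $t$ is a default round. Let $M_{i,j}$ be the set of rounds claimed by triples $(i,j,\cdot)$. Because the estimate is the constant $\underline v_i+\delta$, the set of rounds at which $\bar\alpha_i$ rejects $h_{i,j}$ is exactly $B_{i,j}=\{t:h_{i,j,t}^e>\underline v_i+\delta\}$. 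If $B_{i,j}$ is finite, coverage of $h_{i,j}$ is automatic. If $B_{i,j}$ is infinite, then since the triples indexed below $m$ claim only finitely many rounds between them, every triple $(i,j,n)$ eventually claims a round, so $M_{i,j}$ is infinite; and on every $t\in M_{i,j}$ we have $r_{i,t}\le \underline v_i<h_{i,j,t}^e-\delta$, so the record $l_T(\bar\alpha_i,\bar r_i,M_{i,j},h_{i,j})$ is non-increasing and drops by more than $\delta$ at each element of $M_{i,j}$, hence tends to $-\infty$; so $\bar\alpha_i$ covers $h_{i,j}$. Finally the ``$t>2^m$'' clause makes the number of claimed rounds up to $T$ at most $O(\log T)=o(T)$, so the empirical distribution of $(\alpha_1^c,\alpha_2^c)$ and the per-round average rewards are governed by the default rounds: the former converges to $\mathbf c'$, and $\tfrac1T\sum_{t\le T}r_{i,t}\to v_i$, which exceeds the constant estimate $\underline v_i+\delta$, so no overestimation holds with slack.

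\emph{Main obstacle.} Everything above is routine except the design of the test sets. A naive pre-committed $M_{i,j}$ fails, because an adversarial hypothesis can out-promise $\bar\alpha_i$ infinitely often while keeping its promises on $M_{i,j}$ (so its record does not go to $-\infty$) simply by promising high only on rounds outside $M_{i,j}$. The remedy above is to let $M_{i,j}$ consist exactly of rounds on which $h_{i,j}$ is currently out-promising, which is the same bookkeeping the allowance mechanism performs in the proof of \Cref{thm:computable-BRIAs}; the extra wrinkle is to run this for both agents simultaneously while keeping the combined set of ``deviation'' rounds of density zero, so that neither the convergence to $\mathbf c'$ nor the punishment threat is disturbed. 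Verifying these density and limiting statements, together with the well-definedness of the mutually punishing dynamics, is the bulk of the work.
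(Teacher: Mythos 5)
Your proof is correct, and the converse (maximin) half coincides with the paper's, which likewise obtains it from \Cref{thm:lower-bounds-from-easy-options} via the constant hypothesis that always recommends a maximin action and promises the maximin value; your direct rehearsal of that argument is if anything slightly cleaner, since it only uses that this one constant-time hypothesis lies in $\mathbb{H}_i$. For the existence half, however, you take a genuinely different route. The paper keeps $\mathbf{c}$ itself as the ``cooperative'' play and has the two agents jointly randomize i.i.d.\ each round, independently of all hypotheses: with probability $p_{\mathbf{c}}$ they implement $\mathbf{c}$, and with probability $p_{a_i}$ player $i$ plays $a_i$ while the opponent punishes with $\argmin_{a_{-i}} u_i(a_i,a_{-i})$; on those rounds every hypothesis promising more than the constant estimate $v_i$ (chosen strictly between the maximin value and $p_{\mathbf c}u_i(\mathbf c)$) and recommending $a_i$ is tested, so its record decreases by at least $v_i-\underline v_i$ per test, while no overestimation holds because the average reward exceeds $p_{\mathbf c}u_i(\mathbf c)>v_i$. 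In that construction the test/punishment rounds have positive frequency, and $\mathbf{c}'$ is the resulting mixture distribution, made close to $\mathbf{c}$ by taking $p_{\mathbf c}$ close to $1$; the claim that an infinitely-often-outpromising hypothesis is tested infinitely often rests on the independence of the randomization (a Borel--Cantelli-type step, with a footnote that algorithmic randomness would do). Your construction instead is fully deterministic: you approximate $\mathbf{c}$ by a rational $\mathbf{c}'$, play a periodic schedule realizing $\mathbf{c}'$ on default rounds, and confine testing/punishment to a claimed set of density zero (at most $\log_2 T$ rounds up to $T$, thanks to the $t>2^m$ clause), with the priority/claiming schedule guaranteeing that each hypothesis with an infinite rejection set is claimed infinitely often -- I checked this argument (each lower-indexed triple claims at most one round, so eligibility eventually wins) and the resulting record bound ($<-\delta$ per test) and no-overestimation bound ($\underline v_i+\delta<v_i=u_i(\mathbf{c}')$, test rounds negligible), and they go through. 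What each approach buys: the paper's randomized scheme is shorter to state and reuses the ``test whoever outpromises'' idea uniformly, at the cost of invoking (true or algorithmic) randomization independent of the hypotheses and of $\mathbf{c}'$ being a perturbed mixture; your deterministic scheme avoids randomization entirely, makes the empirical distribution converge exactly to the chosen rational $\mathbf{c}'$, and localizes all bookkeeping in the claiming schedule, which is essentially a density-zero, diagonalized analogue of the allowance mechanism of \Cref{thm:computable-BRIAs}. The only implicit assumption you should flag is countability of $\mathbb{H}_1,\mathbb{H}_2$ for the enumeration of triples, but this is already built into the paper's definition of a BRIA, so it is not a gap.
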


\Cref{thm:folk-theorem} is compelling, because it means BRIAs can learn to cooperate in one-shot games where rational agents would otherwise fail to cooperate (e.g., contrast fictitious play, or regret learning, both of which necessarily converge to defecting in the Prisoner's Dilemma). 
Note that our BRIA criterion is myopic, i.e., aimed at maximizing reward in the \textit{current} round. Thus, even though the BRIAs in the above setting play repeatedly, the above result is unrelated to the folk theorems for repeated games.

\extendedonlybit{The proof is given in \Cref{appendix:proof-of-folk-theorem}.}

\section{Related work}
\label{sec:rel-work}

\textbf{Multi-armed bandit problems~~}
\label{sec:rel-work-contextual-bandits}
Our setting resembles a multi-armed bandit problem with expert advice (where $\mathbb{H}$ is the set of ``experts''). The main difference is that we only define $r_t$, the reward actually obtained by the agent. The literature on multi-armed bandit problems assumes that the problem also defines the (counterfactual) rewards of untaken options and defines rationality in terms of these rewards. As discussed in \Cref{sec:motivation-self-reference}, one of our motivations is to do away with these counterfactuals.

\begin{extendedonlyblock}
 Within the literature on multi-armed bandit problems, some strands of work in statistical learning theory make assumptions that avoid the problems of bounded rationality and paradoxes of self reference. 
For example, Yang and Zhu~(\citeyear[Assumption A in Sect.~5]{Yang2002}) and Agarwal et al.~(\citeyear[Assumption 1 in Sect.~2]{Agarwal2012}) assume that the agent can converge to having a fully accurate model of how the available actions give rise to rewards.
\extendedonlybit{Other papers explicitly assume that the reward is determined by some linear function \citep[see][Chapters 19--29 for an overview]{Lattimore2020}.} These assumptions allow a much simpler rationality requirement, namely some kind of convergence to optimal behavior (cf.\ \Cref{sec:easy-options}). \extendedonlybit{Aside from the early (and very general) work of \citet{Yang2002}, the literature on contextual multi-armed bandits has come to focus on achieving fast convergence rates (which we have given little consideration in this paper).}
\end{extendedonlyblock}

\begin{extendedonlyblock}
As we have argued in \Cref{sec:logical-uncertainty}, computationally complex reward functions pose quite different theoretical problems, including the impossibility of deciding based on accurate beliefs about the available options and of low-regret learning. We have argued that facing these issues head-on is important, e.g., for studying strategic interactions. We suspect that authors in this line of work generally do not have such problems in mind and are instead inspired by settings in which uncertainty is primarily empirical and computationally simple models can be somewhat accurate, e.g., when selecting treatments for a patient based on medical data.
\end{extendedonlyblock}

\label{sec:bandits-with-expert-advice}

Within the multi-armed bandit literature, the most closely related strand of work is the literature on adversarial multi-armed bandit problems with expert advice (\citealp[][Sect.~7]{Auer2001}; \citealp[][Chapter 18]{Lattimore2020}). %
Like this paper, this literature addresses this problem of bounded rationality by formulating rationality relative to a set of hypotheses (the eponymous experts). However, its rationality criterion is very different from ours: they require regret minimization and in particular that cumulative regret is sublinear, a condition sometimes called Hannan-consistency. As the Simplified Adversarial Offer shows, Hannan-consistency is not achievable in our setting. However, it does become achievable if we assume that the agent has access to a source of random noise that is independent from $\overline \DP$ \citep[see, e.g, the Exp4 algorithm of][Sect.~7]{Auer2001}. Importantly, the rationality criterion itself ignores the ability to randomize, i.e., it does not prescribe that the use of randomization be optimal in any sense.

We find it implausible to \textit{require} rational agents to randomize to minimize regret; most importantly, regret minimization can require minimizing the rewards one actually obtains -- see \Cref{appendix:on-randomization-and-regret}.%
\extendedonlybit{At the same time, we conjecture that learners with low regret relative to a set of hypotheses $\mathbb{H}$ satisfy a version of the BRIA criterion; see \Cref{appendix:regret-minimizers-satisfy-BRIA} for a preliminary result.}

\begin{extendedonlyblock}
One interesting issue in the literate on multi-armed bandit problems with expert advice is that of reactive (a.k.a.\ non-oblivious) bandits. For example, there could be a bandit/decision problem sequence that at each time $t>T$ pays the agent a dividend if the agent invests (i.e., foregoes a small reward) on day $T$. Like the BRIA criterion of this paper, standard notions of Hannan-consistency are myopic and therefore require that one learns to take the small reward today. Depending on the setting, this may be undesirable. Some authors have therefore explicitly considered the goal of maximizing reward non-myopically (\citealp{Farias2006}; \citealp[][Sect.~7.11]{Cesa-Bianchi2006}; \citealp{Arora2012}). However, as these authors have noted, it is in general difficult to define sensible non-myopic notions of regret. The underlying problem is essentially the problem of making counterfactual claims that motivates much of the present paper (see \Cref{sec:log-unc-strategic-interactions}). Only one trajectory is observed and in general it is difficult to evaluate claims about \enquote{what would have happened} if alternative strategies has been used. In the theory of multi-armed bandits, this problem is usually addressed by making assumptions that ensure that variants of typical notions of regret can be applied after all. In particular, it is assumed that the bandit is forgetful. Since BRIA theory does not rely on counterfactual claims, we believe that BRIA theory can be used to address this problem more generally and satisfactorily. It seems that one merely has to adapt the BRIA theory to incorporate non-myopia. This can be done, for example, by evaluating a bid $h_{i,t}$ not based on the immediate reward $r_t$ obtained after accepting it ($h_{i,t}^c=\alpha_t^c$) but on the discounted reward
\begin{equation}
\sum_{t'=t}^\infty \gamma^{t'-t} r_t.
\end{equation}
\end{extendedonlyblock}

\extendedonlybit{\subsection{Decision theory of Newcomb-like problems}}
\abridgedonlybit{\textbf{Decision theory of Newcomb-like problems}~~}
\label{sec:rel-work-dt-Newcomb}
Problems in which the environment explicitly predicts the agent have been discussed as Newcomb-like problems by (philosophical) decision theorists \citep{Nozick1969}\extendedonlybit{\cite{Ahmed2014}}.\extendedonlybit{ In fact, the Adversarial Offer of \citet{ExtractingMoneyFromCDT} is intended as a contribution to that theory.}
Most of this literature has focused on discussing relatively simple cases (similar to SAO)\extendedonlybit{ in which people have strong but differing intuitions about what the rational choice should be}. In these cases, BRIAs generally side with what has been called evidential decision theory. For example, by \Cref{thm:lower-bounds-from-easy-options}, BRIAs learn to one-box in Newcomb's problem\extendedonlybit{, cooperate in a Prisoner's Dilemma against an exact copy and choose $a_0$ in the Adversarial Offer}.\extendedonlybit{\footnote{The underlying reason is roughly that BRIAs implement a version of what has been called the law of effect (\citealp{Thorndike1911}; \citealp[cf.][Sect.~1.6]{Sutton1998}), which roughly states that behaviors will be repeated if they have been followed by high rewards in the past. As has been pointed out by, e.g., \citet{Gardner1973} and \citet{LearningDT}, learning according to the law of effect yields evidential decision theory-like behavior.

Of course, many evidential decision theorists may disagree with particular recommendations that the present theory makes. For example, while BRIAs learn to cooperate against exact copies of themselves, a pair of sufficiently different BRIAs will learn to defect against each other (see \Cref{sec:Nash-equilibrium}). In contrast, some have argued or simply assumed that evidential decision theory-type reasoning should lead to cooperation more generally \citep[e.g.][]{Hofstadter1983}.}} Of course, BRIAs differ structurally from how a decision theorist would usually conceive of an evidential decision theory-based agent. E.g., BRIAs are not based on expected utility maximization (though they implement it when feasible; see \Cref{sec:true-randomness}). We also note that the decision theory literature has, to our knowledge, not produced any formal account of how to assign the required conditional probabilities in Newcomb-like problems.

\extendedonlybit{\subsection{Bounded rationality}}
\abridgedonlybit{\textbf{Bounded rationality}~~}
The motivations of the present work as per \Cref{sec:logical-uncertainty}, especially \Cref{sec:betting-on-math}, coincide with some of the motivations for the study of bounded rationality. \extendedonlybit{For instance, in one of his seminal works, \citet[p.\ 99]{Simon1955} writes that \enquote{the task is to replace the global rationality of economic man with a kind of rational behavior that is compatible with [...] the computational capacities that are actually possessed by organisms, including man}. Compare \citet[Sect.\ 1.3]{Wheeler2020}.} However, other motivations have been given for the study of bounded rationality as well \citep[see, e.g.,][Sect.\ 2]{Selten1990}. More importantly, since much of bounded rationality is geared towards explaining or prescribing \textit{human} (as opposed to AI) behavior, the characterization and analysis of \enquote{computational capacities} often differ from ours \citep[e.g.][]{Conlisk1996}. For instance, for most humans dividing 1 by 17 is a challenge, while such calculation are trivial for computers. \extendedonlybit{(Meanwhile, the brain performs many operations (e.g., recognizing objects in images) that require much more complex computations.)}
A few authors have also explicitly connected the general motivations of bounded rationality with paradoxes of self reference and game theory as discussed in \Cref{sec:log-unc-strategic-interactions} (\cite{Binmore1987}, \cite[Ch.\ 10]{Rubinstein1998}).
Anyway, the literature on bounded rationality is vast and diverse. Much of it is so different from the present work that a comparison hardly makes sense. Below we discuss a few approaches in this literature that somewhat resemble ours. In particular, like the present paper (and Hannan consistency) they specify rationality relative to a given set of hypotheses (that in turn is defined by computational constraints).

\extendedonlybit{\subsection{Russell et al.'s bounded optimality}}
\abridgedonlybit{\textbf{Russell et al.'s bounded optimality~~}}
\label{sec:Russell-bounded-opt}
Like our approach and the other approaches discussed in this related work section, Russell et al.\ define \textit{bounded optimality} as a criterion relative to a set of (computationally bounded) hypotheses called \textit{agent programs} (\citealp{Russell1991}, Sect.\ 1.4; \citealp{Russell1993}; \citealp{Russell1995}). Roughly, an agent program is boundedly optimal if it is the optimal program from some set of bounded programs.\extendedonlybit{ For example, imagine that the agent will only face a single decision problem of type $\{ a_m,x \}$, where $a_m$ is known to pay off the $m$-th binary digit of $\pi$ and $x\in [0,1]$ simply pays $x$. Then Russell et al.\ have us ask questions such as: among all computer programs of size at most $16$MB that return an answer (i.e., either $a_m$ or $a_x$) after running for at most a day on a specific computer, what program maximizes expected reward if $m$ is sampled from $\mathbb{N}$ with probability proportional to $\nicefrac{1}{m^2}$ and $x$ is sampled uniformly from $[0,1]$? For large enough $m$, the optimal bounded program would simply choose $a_m$ whenever $x<\nicefrac{1}{2}$, and $a_x$ whenever $x>\nicefrac{1}{2}$, in line with our approach.}
The main difference between our and Russell et al.'s approach is that we address the problems of \Cref{sec:logical-uncertainty} by developing a theory of learning to make such decisions, while Russell et al.\ address them by moving the decision problem one level up, from the agent to the design of the agent \citep[cf.][Sect.\ 2.2 for a discussion of this move]{Demski2020}. As one consequence, we can design general BRIAs, while it is in general hard to design boundedly optimal agents. \extendedonlybit{(\citet[][Sect.\ 4]{Russell1995} give a special class of environments in which they show the design of boundedly optimal agents to be tractable.)}
Of course, the feasibility of designing BRIAs comes at the cost of our agents only behaving reasonably in the limit. \extendedonlybit{As an example, imagine that the agent will with probability $1$ be offered some bet on Goldbach's conjecture. Then Russell et al.'s approach requires the agent's designer to determine whether Goldbach's conjecture is true. In contrast, our approach puts no requirement on an agent who only faces this one betting situation.} Moreover, the designer of boundedly optimal agents as per Russell et al.\ may become a subject of the paradoxes of \Cref{sec:log-unc-strategic-interactions} in problematic ways. \extendedonlybit{Imagine that the designer is in turn some computer program $d$ and let us say that $d$ is posed the problem of designing a program that will face only decision problem $\mathrm{SAO}_d$ with probability $1$. Here, $\mathrm{SAO}_d=\{ a_0,a_1 \}$ is the decision problem where $a_0$ is known to pay off $\nicefrac{1}{2}$ and $a_1$ is $1$ if $d$ selects a program that selects $a_0$ in $\mathrm{SAO}_d$ and $0$ otherwise. Then $d$ cannot select the optimal agent program for this problem.}

\extendedonlybit{\subsection{Garrabrant inductors}}
\abridgedonlybit{\textbf{Garrabrant inductors~~}}
\label{sec:rel-work-Garrabrant}
The present is in part inspired by the work of Garrabrant et al.~\citeyear{Garrabrant2016}, who address the problem of assigning probabilities under computational constraints and possibilities of self-reference. %
As an alternative to the present theory of BRIAs, one could also try to develop a theory of boundedly rational choice by maximizing expected utility using the Garrabrant inductor's probability distributions. %
Unfortunately, this approach fails for reasons related to the challenge of making counterfactual claims, as pointed out by Garrabrant \citeyear{Garrabrant2017}. As in the case of Hannan consistency, we can address this problem using randomization over actions. However, like Garrabrant (ibid.), we do not find it satisfactory to \textit{require} randomization (cf.\ again \Cref{appendix:on-randomization-and-regret}). We conjecture that\extendedonlybit{, like regret minimizers,} Garrabrant inductors with (pseudo-)randomization could be used to construct BRIAs.

\section{Conclusion}

We developed BRIA theory as a theory of bounded inductive rationality. We gave results that show the normative appeal of BRIAs. Furthermore, we demonstrated the theory's utility by using it to justify Nash equilibrium play. At the same time, the ideas presented lead to various further research questions, some of which we have noted above. We here give three more that we find particularly interesting. Can we modify the BRIA requirement so that it implies coherence properties à la Garrabrant et al.~\citeyear{Garrabrant2016}? Do the frequencies with which BRIAs play the given pure strategies of a game converge to mixed Nash and correlated equilibria? Can BRIA theory be used to build better real-world systems?

\section*{Acknowledgments}

We thank Emery Cooper, Daniel Demski, Sam Eisenstat, Daniel Kokotajlo, Alexander Oldenziel, Nisan Stiennon, Johannes Treutlein and attendants of OptLearnMAS 2021 for helpful discussions.

\bibliographystyle{eptcsini}
\bibliography{references}

\appendix

\section{Proofs}
\label{appendix:proofs}

\subsection{An easy lemma about test sets}

We start with a simple lemma which we will use to simplify a few of our proofs. Roughly, the lemma shows that to cover a hypothesis $h$, it never helps to test $h$ in rounds in which $h_t=0$, i.e., in rounds in which $h$ doesn't make any promises.

\begin{lemma}\label{lemma:testing-positive-estimate}
Let $\bar h$ be a hypothesis and $N\subseteq \mathbb{N}$ s.t.\ $t\in N$ implies $h^e_t=0$. Then if $\bar\alpha$ covers $\bar h$ with test set $M$, $\bar\alpha$ covers $\bar h$ with test set $M-N$.
\end{lemma}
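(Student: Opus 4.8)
The plan is to show that removing the rounds in $N$ from the test set $M$ does not worsen the record of $\bar h$ along the rejection rounds. First I would recall what needs to be checked: writing $B$ for the set of times at which $\bar\alpha$ rejects $\bar h$ (which does not depend on the test set at all, only on $\bar h$ and $\bar\alpha$), covering $\bar h$ with test set $M$ means that either $B$ is finite or $\bigl(l_T(\bar\alpha,\bar r,M,\bar h)\bigr)_{T\in B}\to-\infty$. Since $B$ is the same regardless of whether we use $M$ or $M-N$, the case "$B$ finite" is immediate: if $B$ is finite then $\bar\alpha$ covers $\bar h$ with \emph{any} test set, in particular with $M-N$, provided $M-N$ is still a test set. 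That last point is also immediate: $M-N\subseteq M$, and any subset of a test set is a test set, since the defining condition $\alpha^c_t=h^c_t$ for all $t$ in the set is inherited by subsets.

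So the substance is the case where $B$ is infinite and $\bigl(l_T(\bar\alpha,\bar r,M,\bar h)\bigr)_{T\in B}\to-\infty$. The key computation is to compare $l_T(\bar\alpha,\bar r,M,\bar h)$ with $l_T(\bar\alpha,\bar r,M-N,\bar h)$. By definition
\[
l_T(\bar\alpha,\bar r,M,\bar h)=\sum_{t\in M_{\leq T}} \bigl(r_t-h^e_t\bigr),\qquad
l_T(\bar\alpha,\bar r,M-N,\bar h)=\sum_{t\in (M-N)_{\leq T}} \bigl(r_t-h^e_t\bigr),
\]
and their difference is the sum of $r_t-h^e_t$ over $t\in (M\cap N)_{\leq T}$. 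For $t\in N$ we have $h^e_t=0$, so each such term is just $r_t\geq 0$ (recall rewards lie in $[0,1]$). Hence
\[
l_T(\bar\alpha,\bar r,M,\bar h)-l_T(\bar\alpha,\bar r,M-N,\bar h)=\sum_{t\in (M\cap N)_{\leq T}} r_t \;\geq\; 0,
\]
i.e., $l_T(\bar\alpha,\bar r,M-N,\bar h)\leq l_T(\bar\alpha,\bar r,M,\bar h)$ for every $T$. Therefore, if the right-hand side tends to $-\infty$ along $T\in B$, so does the left-hand side. This establishes that $\bar\alpha$ covers $\bar h$ with test set $M-N$, completing the proof.

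I do not expect any real obstacle here; the statement is essentially bookkeeping. The only thing to be a little careful about is making sure $M-N$ genuinely qualifies as a test set and that $B$ is defined independently of the test set — both are true by inspection of the definitions — and to use the sign constraint $r_t\in[0,1]$ (only $r_t\geq 0$ is needed) so that dropping the $N$-rounds can only decrease the record along every prefix. If anything, the "hard part" is purely expository: being explicit that the rejection set $B$ is a function of $\bar\alpha$ and $\bar h$ alone, so that the two cases of the covering definition line up between $M$ and $M-N$.
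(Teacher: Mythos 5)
Your proof is correct and follows essentially the same route as the paper: both decompose the record into the contributions from $M-N$ and from $M\cap N$, use $h^e_t=0$ on $N$ together with $r_t\geq 0$ to get $l_T(\bar\alpha,\bar r,M-N,\bar h)\leq l_T(\bar\alpha,\bar r,M,\bar h)$, and conclude divergence to $-\infty$ transfers. Your extra remarks that $M-N$ is still a test set and that the rejection set $B$ does not depend on the test set are fine bookkeeping the paper leaves implicit.
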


\begin{proof}
For all $T$, we have that
\begin{eqnarray*}
l_T(\bar\alpha,\bar r, M, \bar h) = \sum_{t\in M_{\leq T}} r_t - h_t^e &=& \sum_{t\in M_{\leq T} - N} r_t - h_t^e + \sum_{t\in M_{\leq T} \cap N} r_t - h_t^e\\
&=& \sum_{t\in M_{\leq T} - N} r_t - h_t^e + \sum_{t\in M_{\leq T} \cap N} r_t\\
&\geq & \sum_{t\in M_{\leq T} - N} r_t - h_t^e\\
&=& l_t(\bar \alpha, \bar r, M-N, \bar h).
\end{eqnarray*}
Thus, if $l_T(\bar\alpha,\bar r, M, \bar h)\rightarrow -\infty$ as $T\rightarrow -\infty$, it must also be $l_T(\bar\alpha,\bar r, M-N, \bar h)\rightarrow -\infty$ as $T\rightarrow -\infty$.
\end{proof}

\subsection{Proof of \Cref{thm:computable-BRIAs}}
\label{appendix:computing-BRIAs}

\computingBRIAsthm*

\begin{proof}
Our proof is divided into four parts. First, we give the generic construction for a BRIA (1). Then we show that this is indeed a BRIA by proving that it satisfies the no overestimation criterion (2), as well as the coverage criterion (3). Finally, we show that under the assumptions stated in the theorem, this BRIA is computable in the claimed time complexity (4).

\underline{1. The construction}

First, we need an \textit{allowance function} $A:\mathbb{N}\times \mathbb{N}\rightarrow \mathbb{R}_{\geq 0}$, which for each time $n$, specifies a positive amount $A(n,i)$ given to hypothesis $h_i$'s wealth at time $n$. The allowance function must satisfy the following requirements:
\begin{itemize}
\item Each hypothesis must get infinite overall allowance, i.e.,
$\sum_{n=1}^{\infty} A(n,i)=\infty$
for all hypotheses $h_i$.
\item The overall allowance distributed per round $n$ must go to zero, i.e.,
\begin{equation}\label{eq:allowance-to-zero}
\sum_{n=1}^N \frac{1}{N} \sum_{i=1}^{\infty} A(n,i) \underset{N\rightarrow \infty}{\rightarrow} 0.
\end{equation}
In particular, the allowance distributed in any particular round must be finite.
\end{itemize}
An example of such a function is $A(n,i) =  n^{-1}i^{-2}$.

We can finally give the algorithm itself. Initialize the wealth variables as (for example) $w_0(i)\leftarrow 0$ for each hypothesis $h_i\in \mathbb{H}$.

At time $t$, we run a (first-price sealed-bid\footnote{This format is mainly chosen for its simplicity. We could just as well use a second-price (or third-price, etc.) auction. We could use even different formats to get somewhat different BRIA-like properties. For instance, with combinatorial auctions, one could achieve cross-decision optimization. %
}) auction for the present decision problem among all hypotheses. That is, we determine a winning hypothesis
\begin{equation}\label{eq:highest-hypothesis}
    i^*_t \in \argmax_{i\in \mathbb{N}} \min (h_{i,t}^e , w_t(i))
\end{equation}
with arbitrary tie breaking. Intuitively, each hypothesis $h_i$ bids $h_{i,t}^e$, except that it is constrained by its wealth $w_t(i)$. The idea is that if $h_i$ has performed poorly relative to its promises, then $\alpha$ should not trust $h_i$'s promise for the present problem.
Let $e^*_t\in[0,1]$ be the maximum (wealth-bounded) bid itself. We then define our agent at time $t$ as $\alpha_t \coloneqq (h_{i_t^*,t}^c , e^*_t)$.

We update the wealth variables as follows. For all hypotheses $i\neq i^*_t$, we merely give allowance, i.e., $w_{t+1}(i) \leftarrow w_t(i) + A(t,i)$.
For the winning hypothesis $i_t^*$, we update wealth according to $w_{t+1}(i_t^*) \leftarrow w_t(i^*_t)
    + A(t,i_t^*)
    + r_t - e^*_t$.
That is, the highest-bidding hypothesis receives the allowance and the reward obtained after following its recommendation ($r_t$), but pays its (wealth-bounded) bid ($e_t^*$).

\underline{2. No overestimation} We will show that the cumulative overestimation is bounded by the sum of the allowance.

For each $T$, let $B^+_{T}$ be the set of hypotheses whose wealth $w_t(i)$ is positive for at least one time $t\in\{0,...,T\}$. Note that all highest-bidding hypotheses in rounds $1....,T$ are in $B^+_{T}$ for all $j$. We can then write the overall wealth of the hypotheses in $B^+_{T}$ at time $T$ as
\begin{equation*}
\sum_{i\in B^+_{T}} w_T (i) = \sum_{i\in B^+_{T}} \sum_{n=1}^{T} A(n,i) + \sum_{t=1}^T r_t-\alpha_t^e.
\end{equation*}
That is, the overall wealth at time $T$ is the allowance distributed at times $1,...,T$ plus the money earned/lost by the highest-bidding hypotheses.

Now notice that by the construction above, if a wealth variable $w_t(i)$ is non-negative once, it remains non-negative for all future $t$. Thus, for all $i\in B_{T}^+$, $w_T(i)\geq 0$. 
Second, the last term is the negated cumulative overestimation of $\bar \alpha$. Thus, re-arranging these terms and dividing by $T$ gives us the following upper bound on the per-round overestimation:
\begin{equation*}
\frac{1}{T} \mathcal{L}_T(\alpha,\bar r) = \frac{1}{T}\left(\sum_{i\in B_{T}^+} \sum_{n=1}^{T} A(n,i) - \sum_{i\in B^+_{T}} w_T (i) \right) \leq \frac{1}{T}\sum_{i\in B_{T}^+} \sum_{n=1}^{T} A(n,i) \leq \sum_{i=1}^{\infty} \frac{1}{T} \sum_{n=1}^{T} A(n,i),
\end{equation*}
which goes to zero as $T\rightarrow \infty$ by our requirement on the function $A$ (line \ref{eq:allowance-to-zero}).\\\\

\underline{3. Hypothesis coverage} Given a hypothesis $h_i$ that strictly outpromises $\bar \alpha$ infinitely often, we use as a test $M_i$, the set of times $t$ at which $h_i$ is the winning hypothesis (i.e., the set of times $t$ s.t.\ $i=i_t^*$). We have to show that $M_i$ is infinite, is a valid test set (as per \Cref{def:test-set}), and that it satisfies the justified rejection requirement in the hypothesis coverage criterion.

A) We show that $M_i$ is infinite. That is, we need to show that infinitely often $h_i$ is the highest-bidding hypothesis in the auction that computes $\bar\alpha$. Assume for contradiction that $M_i$ is finite. We will show that at some point $h_i$'s bidding in the construction of $\bar \alpha$ will not be constrained anymore by $h$'s wealth. We will then find a contradiction with the assumption that $h_i$ strictly outpromises $\alpha$ infinitely often.

Consider that for $T'>T$, it is $w_{T'}(i) = w_T(i) + \sum_{t=T+1}^{T'} A(t, i)$. That is, from time $T$ to any time $T'$, hypothesis $i$'s wealth only changes by $h_i$ receiving allowance, because $i$ is (by assumption) not the winning hypothesis $i^*_t$ in any round $t\geq T$. Because we required $\sum_{n=1}^\infty A(n,i)=\infty$, we can select a time $T*\geq T$ such that $w_{T*}(i)\geq 1$. Note that again it is also for all $t>T*$ the case that $w_{t}(i)\geq 1$.
    
We now see that if $t\geq T*$ the wealth constraints is not restrictive. That is, for all such $t$ it is $\min (h_{i,t}^e , w_t(i))=h_{i,t}^e$.
But it is infinitely often $h_{i,t}^e>\alpha_t^e$. This contradicts the fact that by construction, $\alpha_t$ is equal to the highest wealth-restricted hypothesis.
 
B) The fact that $M_i$ is a valid test set follows immediately from the construction -- $\alpha$ always chooses the recommendation of the highest-bidding hypothesis. 
    
C) We come to the justification part of the coverage criterion. Let $B_i$ be the set of rounds in which $\bar h _i$ strictly outpromises $\bar \alpha$. %

At each time $t\in B_i$, by construction $w_T(i,j)<h_{i,t}^e(\DP_T)$.
We have that $h_{i,t}^e(\DP_T)\leq 1$ and
\begin{equation*}
    w_T(i) = \sum_{n=1}^{T} A(n, i) +  \sum_{t\in M_i :t<T} r_t - h_{i,t}^e.
\end{equation*}
Hence, from the fact that $w_T(i)<h_{i,t}^e(\DP_T)$ for all $T\in B_i$, it follows that for all $T\in B_i$, it is
\begin{equation*}
    \sum_{t\in M_i:t<T}  h_{i,t}^e - r_t > \sum_{n=1}^{T} A(n, i),
\end{equation*}
which goes to infinity as $T\rightarrow \infty$, as required.\\

\begin{sloppypar}
\underline{4. Computability and computational complexity} It is left to show that if $\mathbb{H}$ can be computably enumerated and consist only of ($O(g(t))$-)computable hypotheses, then we can implement the above-described BRIA for $\mathbb{H}$, $\overline{\DP}$,$\bar r$ in an algorithm (that runs in $O(g(t)q(t))$, for arbitrarily slow-growing, $O(g(t))$-computable $q$ with $q(t)\rightarrow\infty$).
\end{sloppypar}

The main challenge is that the construction as described above performs at any time $t$, operations for all (potentially infinitely many) hypotheses.
The crucial idea is that for an appropriate choice of $A$, we only need to keep track of a finite set of hypotheses, when calculating $\bar \alpha$ in the first $T$ time steps. Each hypothesis starts with an initial wealth of $0$. Then a hypothesis $i$ can only become relevant at the first time $t$ at which $A(t,i)>0$. At any time $t$, we call such hypotheses \textit{active}. Before that time, we do not need to compute $\bar h_i$ and do not need to update its wealth. By choosing a function $A$ s.t.\ (in addition to the above conditions) $A(t,\cdot)$ has finite, e.c.\ support at each time $t$, we can keep the set of active hypotheses finite at any given time. (An example of such a function is $A(n,i) =  n^{-1}i^{-2}$ for $i<n$ and $A(n,i) =0$ otherwise.) We have thus shown that it is enough to keep track at any given time of only a finite number of hypotheses.

At any time, we therefore only need to keep track of a finite number of wealth variables, only need to compute the recommendations and promises of a finite set of hypotheses, and only need to compute a minimum of a finite set in line \ref{eq:highest-hypothesis}.

Computability is therefore proven. We proceed to show the claim about computational complexity. At any time $t$, let $C_{\max}(t)$ be the largest constant by which the computational complexity of hypotheses at time $t$ are bounded relative to $g(t)$. Further, let $h_b(t)$ be the set of active hypotheses. Then the computational cost from simulating all active hypotheses at time $t$ is at most $h_b(t)C_{\max}(t)g(t)$.
All of $C_{\max}(t)$ and $h_b(t)$ must go to $\infty$ as $t\rightarrow \infty$. However, this can happen arbitrarily slowly, up to the limits of fast ($O(g(t))$) computation.
Hence, if we let $q(t)=h_b(t)C_{\max}(t)g(t)$, we can let $q$ grow arbitrarily slowly (again, up to the limits of fast computation).

Finally, we have to verify that all other calculations can be done in $O(q(t)g(t))$: To determine the winning hypothesis given everyone's promises, we have to calculate the maximum of $h_b(t)\in O(q(t))$ numbers, which can be done in $O(q(t))$ time. We also need to conduct the wealth variable updates themselves, which accounts for $O(h_b(t))$ additions. Again, this is in $O(g(t)q(t))$. And so on.
\end{proof}

\abridgedonlybit{\subsection{Proof of \Cref{thm:no-ec-BRIA}}}
\extendedonlybit{\subsection{Proof of \Cref{thm:no-ec-BRIA}} (and some discussion)}
\label{sec:no-ec-BRIA}

\noecBRIA*

This is shown by a simple diagonalization argument. If a BRIA $\alpha$ were computable (in $O(g(t))$), then consider the hypothesis who in rounds in which $|\DP_t|\geq 2$ and $\alpha_t^e<1$, promises $1$ and recommends an option other than $\alpha_t^c$; and promises $0$ otherwise. This hypothesis strictly outpromises $\alpha$ infinitely often, is computable (in $O(g(t))$) but is never tested
\extendedonlybit{, see Lemma \Cref{lemma:testing-positive-estimate}}.\extendedonlybit{The same argument can similarly be used to show that, for example, a BRIA for the set of polynomial-time hypotheses cannot be computed in polynomial time.}

\begin{extendedonlyblock}
Is diagonalization a silly reason to fail the BRIA criterion? After all, the diagonalizing hypothesis' recommendation will not be sensible in general. For example, imagine that $\DP_t=\{a_{t,0},a_{t,1}\}$ is the problem of guessing the $t$-th binary digit of $\pi$. Option $a_{t,k}$ pays off $1/2$ if the $t$-th digit of $\pi$ is $k$ and $0$ otherwise. This problem can be solved in $O(t)$, so we might like to say that there is an $O(t)$ BRIA for this process. But if we use as $\mathbb{H}$ the set of linear-time-computable hypotheses, this is not possible.

There are two possible responses to this intuition. The first is that we here assume that we at some point come to be certain of how $\bar \DP,\bar r$ work and in particular, that the diagonalizing hypothesis does not work. However, the BRIA criterion for the set of hypotheses $\mathbb{H}$ assumes that we never develop perfect confidence that any of the hypotheses in $\mathbb{H}$ is wrong. If we really wanted to allow an $O(t)$ BRIA, we should therefore exclude the diagonalization hypothesis from $\mathbb{H}$.

A second approach is to avoid diagonalization by randomizing. In particular, we could let the BRIA test hypotheses according to a randomized scheme, where the hypotheses' computational model does not have access to the BRIA's sequence of random variables. This allows us to construct, for example, an $O(t)$ (plus randomization) BRIA that cannot be exploited, even by more powerful hypotheses. However, this, of course, only works if the decision problem sequence is easy to solve: in this case, solvable in $O(t)$, as the following theorem illustrates.

\begin{theorem}
For any $g$ with $g(t)\rightarrow \infty$, there exists a decision problem sequence $\bar \DP,\bar r$ for which there is no BRIA relative to $O(g(t))$ hypotheses that can be computed with the use of randomization in $O(g(t))$.
\end{theorem}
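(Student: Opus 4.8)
Proof plan:

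My plan is to diagonalize against the bounded randomized algorithms one at a time. Fix $g$ with $g(t)\to\infty$ and enumerate all randomized algorithms $R_1,R_2,\dots$ that, given a tape of fair coins and the observed history $\DP_1,r_1,\dots,\DP_t$, produce in time $O(g(t))$ the triple consisting of $\alpha_t^c$, $\alpha_t^e$ and the list of hypotheses tested in round $t$. There are only countably many such $R_j$, so it suffices to construct a single $\overline\DP,\overline r$ on which, for every $j$, the random agent defined by $R_j$ is, with probability one, not a BRIA covering the $O(g(t))$-computable hypotheses. I would take $\overline\DP,\overline r$ to be a ``hard'' betting sequence: $\DP_t=\{a_{t,0},a_{t,1}\}$ and $r_t=\mathbbm 1[c_t=a_{t,d_t}]$, where $\bar d\in\{0,1\}^{\mathbb N}$ is fixed to be $O(g(t))$-boundedly vMWC-random with means $\nicefrac12$ (\Cref{def:bounded-vMWC-randomness}) but \emph{not} $O(g(t))$-computable; such a $\bar d$ exists by combining a time-hierarchy argument with the fact that a suitably generic sequence satisfies \Cref{def:bounded-vMWC-randomness}. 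Hardness is what breaks the randomization idea alluded to just before the theorem: if $\bar d$ were $O(g(t))$-computable, a single cheap hypothesis would recommend correctly every round, stay solvent forever, dominate the field, and a randomizing agent could simply imitate it; when $\bar d$ is hard no such dominating cheap hypothesis is available.

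The heart of the argument is to exhibit, for each $O(g(t))$-time randomized $R_j$, an $O(g(t))$-computable hypothesis $h_{R_j}$ that $R_j$ fails (almost surely) to cover. Recall that the deterministic BRIA of \Cref{thm:computable-BRIAs} keeps $\alpha_t^e$ at least as large as the wealth-bounded bid of \emph{every} still-solvent hypothesis; this is precisely what prevents any hypothesis from out-promising the agent without going bankrupt and hence driving its record to $-\infty$. But an $O(g(t))$-time agent can consult only boundedly many hypotheses per round, so a pigeonhole argument on the per-round budget shows that, however $R_j$ allocates its attention, some hypothesis-slot is consulted only on an arbitrarily sparse set $S$ of rounds. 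Into that slot I place $h_{R_j}$, designed to promise $0$ off $S$ (so that by \Cref{lemma:testing-positive-estimate} only its behavior on $S$ matters) and on $S$ to recommend the fixed option $a_{t,0}$ while promising a value just above $R_j$'s own estimate. On $S$, $R_j$ does not react to $h_{R_j}$, so $h_{R_j}$ out-promises $R_j$'s agent on infinitely many rounds of $S$; yet because $\bar d$ is $O(g(t))$-random and the recommendation is always $a_{t,0}$, along any $O(g(t))$-decidable test set contained in $S$ the realized rewards have density $\nicefrac12$, which (since $R_j$'s no-overestimation forces its estimates to average at most $\nicefrac12$) is not systematically below $h_{R_j}$'s promise; hence $R_j$ cannot send $h_{R_j}$'s record to $-\infty$. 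So $R_j$ rejects $h_{R_j}$ infinitely often without justification, contradicting coverage. A routine martingale concentration argument then turns ``$R_j$ fails'' into ``$R_j$ fails with probability one.''

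The step I expect to be the real obstacle is making $h_{R_j}$ simultaneously $O(g(t))$-computable, hard for $R_j$ to refute, and genuinely evasive of $R_j$'s attention. The naive realization of ``promise just above $R_j$'s estimate'' is not $O(g(t))$-computable, since evaluating $R_j$'s realized estimate would require simulating $R_j$ together with its coin tape; and anything we plant into $\DP_t$ is equally visible to $R_j$, so we cannot simply reveal the rounds on which $h_{R_j}$ should be followed. Resolving this needs (i) a quantitative, adaptivity- and randomness-robust version of the ``some slot is consulted only sparsely'' claim, and (ii) a way to choose $h_{R_j}$'s promise schedule on $S$ using only information about $R_j$ that is efficiently available --- e.g.\ a derandomized proxy for $R_j$'s estimates --- while invoking the randomness and hardness of $\bar d$ to guarantee that $R_j$'s true estimates can neither track this schedule nor evade it without violating no-overestimation. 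The remaining ingredients --- the enumeration, the reduction to countably many independent diagonalizations, and the concentration bound --- are routine.
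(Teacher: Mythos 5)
The paper states this theorem without an accompanying proof (it appears only as an illustration in the extended-version discussion of randomized BRIAs), so I can only assess your plan on its merits, and as it stands it has a fatal gap that you yourself flag but do not resolve: the diagonalizing hypothesis $h_{R_j}$ must lie \emph{inside} the class being covered, i.e.\ be $O(g(t))$-computable, yet its promise schedule is defined in terms of $R_j$'s realized estimates, which depend on $R_j$'s private coins and hence cannot be computed (even approximately, in the sense you need) by any deterministic $O(g(t))$ hypothesis --- this is exactly the obstruction that randomization is introduced to create, so ``a derandomized proxy for $R_j$'s estimates'' is not a detail but the whole problem. Moreover, even granting such an $h_{R_j}$, the non-rejectability argument is not sound: no-overestimation only bounds the \emph{average} of $\alpha_t^e$ by $\nicefrac{1}{2}+o(1)$, so the agent may keep $\alpha_t^e=\nicefrac{1}{2}+\delta_t$ with $\delta_t$ non-summable but of vanishing average (e.g.\ $\delta_t=1/\log t$). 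Then any hypothesis that still outpromises infinitely often must over-promise by at least $\delta_t$ in those rounds, and an agent that tests it densely there accumulates a drift $\sum_t \delta_t$ that dominates the random-walk fluctuations, sending the record to $-\infty$; your inference ``rewards have density $\nicefrac12$ on the test set, estimates average at most $\nicefrac12$, hence the record cannot go to $-\infty$'' conflates average with pointwise bounds and forgets that the agent, not you, chooses the test rounds. The ``sparsely consulted slot'' pigeonhole also does no work: coverage never requires the agent to allocate per-round attention to a hypothesis --- it only requires that the hypothesis either stop outpromising or have a divergently bad record on rounds where the choices happen to coincide, and under randomization coincidence occurs for free.

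There is also a more basic worry about your choice of hard instance. If $\bar d$ is $O(g(t))$-boundedly vMWC random and \emph{not} $O(g(t))$-computable, then no hypothesis in $\mathbb{H}$ can keep promises above $\nicefrac12+\epsilon$ on any test set, so nothing in the BRIA criterion forces estimates or rewards above $\nicefrac12$; a simple randomized agent (uniform choices, estimates $\nicefrac12+1/\log t$, simulating and testing each simulatable hypothesis on a dense schedule of its outpromising rounds) is then a serious candidate BRIA for your $\overline\DP,\bar r$, and you have not ruled it out. The genuine source of hardness the theorem is pointing at is different: an $O(g(t))$ agent runs with one \emph{fixed} constant, while $\mathbb{H}$ contains hypotheses with arbitrarily large constants that the agent can never simulate within a round --- this is precisely why the construction of \Cref{thm:computable-BRIAs} needs the extra slowly-growing factor $q(t)$. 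A correct proof must exploit that gap (hypotheses the agent cannot simulate yet whose promises it is forced to match or whose recommendations it cannot verifiably test), and must do so for a single environment fixed before the universal quantification over agents; your proposal never engages with the constant-versus-class issue, and the remaining ``routine'' steps cannot substitute for it.
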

\end{extendedonlyblock}

\subsection{Proof of \Cref{thm:lower-bounds-from-easy-options}}
\label{appendix:proof-of-easy-options-thm}

\easyoptionsthm*

\begin{proof}
We will show that if the assumptions are satisfied, then for all but finitely many $t$, we have that $\alpha_t^e\geq L_t$. From this and the fact that $\bar \alpha$ doesn't overestimate, it then follows that $\sum_{t=1}^T r_t / T\geq \sum_{t=1}^T L_t / T$.

We prove this new claim by proving a contrapositive. In particular, we assume that $\alpha^e_t  < L_t$ for infinitely many $t$ and will then show that $\bar \alpha$ is not a BRIA (using the other assumptions of the theorem).

Consider hypothesis $\bar h_i$ such that $h_{i,t}=(a_t,L_t)$. Because $\bar L$ is e.c.\ and the $\bar a$ are efficiently identifiable, $\bar h$ is e.c. We now show that $\bar h_i$ is not covered by $\bar \alpha$, which shows that $\bar\alpha$ is not a BRIA. By assumption, $\bar h_i$ strictly outpromises $\bar \alpha$ infinitely often. It is left to show that there is no $M_i$ as specified in the hypothesis coverage criterion, i.e.\ no $M_i$ on which $\bar h_i$ consistently underperforms its promises.

If $t\in M_i$, then $\alpha_t^c=h_{i,t}^c=a_t$ and therefore $r_t\geq L_t$. It follows that for all $T$,
\begin{equation*}
    l_T(\bar \alpha,\bar r,M_i,\bar h_i)=\sum_{t\in M_i:t<T}  \underbrace{r_t}_{\geq L_t} - \underbrace{h_{i,t}^e}_{=L_t} \geq 0.
\end{equation*}
Thus, $\bar\alpha$ violates the coverage criterion for $\bar h_i$.
\end{proof}

\subsection{Proof of \Cref{thm:pseudo-lotteries}}

\vMWCdef*

\pseudolotteriesthm*

\begin{proof}
We prove the theorem by proving that for all $\epsilon>0$, $\alpha_t^e\geq \mu_t-\epsilon$ for all but finitely many $t$. As usual, we prove this by proving the following contrapositive: assuming this is not the case, $\bar\alpha$ is not a BRIA. To prove this, consider hypothesis $\bar h_{a,\epsilon}$ that at each time $t$ promises $\max(\mu_t-\epsilon,0)$ and recommends $a_t$. Since $\bar h_{a,\epsilon}$ infinitely often outpromises $\bar \alpha$, it must tested infinitely often. Let the test set be some infinite set $M\subseteq \mathbb{N}$. By \Cref{lemma:testing-positive-estimate}, we can assume WLOG that for all $t\in M$, $h_{a,\epsilon}^e=\mu_t-\epsilon$.

Now notice that $M$ is by assumption computable in $O(h(t))$ given the information available at time $t$. Now
\begin{equation*}
\frac{1}{|M_{i,\leq T}|} l_T(\alpha,\bar r,M_i,\bar h_i)= \frac{1}{|M_{i,\leq T}|} \sum_{t\in M_{i,\leq T}} r_t-(\mu_t-\epsilon) \underset{\text{w.p. }1}{\rightarrow} \epsilon \text{  as }T\rightarrow \infty,
\end{equation*}
where the final step is by the fact that among rounds where $\alpha_t^c=a_t$, $\bar r$ is vMWC random with means $\bar\mu$. Hence, $\bar h_{a,\epsilon}$'s record  $l_T(\alpha,\bar r,M_i,\bar h_i)$ must be positive in all but finitely many rounds. Thus, $\bar\alpha$'s infinitely many rejections of $\bar h_{a,\epsilon}$ violate the coverage criterion.
\end{proof}

\subsection{Proof of \Cref{thm:folk-theorem}}
\label{appendix:proof-of-folk-theorem}

\begin{lemma}[Minimax Theorem \cite{v1928theorie}]
Let $(A_1,A_2,u_1,u_2)$ be any game. Then   
    \begin{equation*}
        \max_{\sigma_i\in\Delta(A_i)} \min_{a_{-i}\in A_{-i}} u_{i} (\sigma_i,\sigma_{-i}) = \min_{\sigma_{-i}\in \Delta(A_{-i})} \max_{a_i\in A_i} u_{i} (\sigma_i,\sigma_{-i}).
    \end{equation*}
\end{lemma}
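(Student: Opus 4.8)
The plan is to recognize this as the classical von Neumann minimax theorem and to prove it via the separating hyperplane theorem. First I would dispose of the small notational gap between the two sides of the claimed equation: since $u_i$ is linear in each argument, for any fixed $\sigma_i$ the map $\sigma_{-i}\mapsto u_i(\sigma_i,\sigma_{-i})$ attains its minimum over $\Delta(A_{-i})$ at a vertex, so $\min_{a_{-i}\in A_{-i}} u_i(\sigma_i,a_{-i}) = \min_{\sigma_{-i}\in\Delta(A_{-i})} u_i(\sigma_i,\sigma_{-i})$, and symmetrically $\max_{a_i\in A_i} u_i(a_i,\sigma_{-i}) = \max_{\sigma_i\in\Delta(A_i)} u_i(\sigma_i,\sigma_{-i})$. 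Writing $v := \max_{\sigma_i}\min_{\sigma_{-i}} u_i(\sigma_i,\sigma_{-i})$ and $\bar v := \min_{\sigma_{-i}}\max_{\sigma_i} u_i(\sigma_i,\sigma_{-i})$, the statement reduces to $v=\bar v$.

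The easy inequality $v\le\bar v$ I would establish by weak duality: for every pair $(\sigma_i,\sigma_{-i})$ one has $\min_{\tau_{-i}} u_i(\sigma_i,\tau_{-i}) \le u_i(\sigma_i,\sigma_{-i}) \le \max_{\tau_i} u_i(\tau_i,\sigma_{-i})$, and taking the maximum over $\sigma_i$ on the left-hand side and the minimum over $\sigma_{-i}$ on the right-hand side immediately yields $v\le\bar v$. This direction needs no convexity and holds for any payoff function.

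The substance is the reverse inequality, which I would prove by contradiction. Suppose $v<\bar v$ and fix a scalar $c$ with $v<c<\bar v$. Let $U$ be the payoff matrix of player $i$ (rows indexed by $A_i$, columns by $A_{-i}$) and set $K := \{U\sigma_{-i} : \sigma_{-i}\in\Delta(A_{-i})\}\subseteq\mathbb{R}^{|A_i|}$, the convex, compact set of expected-payoff vectors the opponent can induce, i.e.\ the convex hull of the columns of $U$. The assumption $\bar v>c$ says precisely that for every $\sigma_{-i}$ there is a pure strategy of $i$ giving payoff strictly above $c$, so $\max_j(U\sigma_{-i})_j>c$ for all $\sigma_{-i}$; equivalently, no point of $K$ lies weakly below $c\mathbf{1}$. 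Hence $K$ is disjoint from the closed convex set $L := \{z\in\mathbb{R}^{|A_i|} : z\le c\mathbf{1}\}$, and I would apply the separating hyperplane theorem to these two disjoint convex sets to obtain a nonzero functional $p$ separating them.

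The hard part, and the step requiring the most care, will be extracting a \emph{mixed strategy} from this separating functional. Because $L$ is unbounded in every direction of $-\mathbb{R}^{|A_i|}_{\ge 0}$, boundedness of $p^\top z$ over $z\in L$ forces $p\ge 0$; normalizing so that $\sum_j p_j=1$ turns $p$ into a genuine mixed strategy in $\Delta(A_i)$. The supremum of $p^\top z$ over $L$ is then $p^\top(c\mathbf{1})=c$, so the separation inequality gives $p^\top(U\sigma_{-i})\ge c$ for all $\sigma_{-i}\in\Delta(A_{-i})$; letting $\sigma_{-i}$ range over the pure strategies shows $\min_{a_{-i}} u_i(p,a_{-i})\ge c$, whence $v\ge c$, contradicting $v<c$. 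This closes the argument. I would note in passing that the same conclusion follows more quickly from strong linear programming duality applied to the program defining $v$, but the separating-hyperplane route keeps the lemma self-contained and relies only on finite-dimensional convexity.
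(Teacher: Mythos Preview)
Your argument is correct: the reduction to $v=\bar v$ via linearity in each coordinate, the weak-duality inequality $v\le\bar v$, and the contradiction step using a separating hyperplane between the compact set $K=\{U\sigma_{-i}:\sigma_{-i}\in\Delta(A_{-i})\}$ and the closed lower set $L=\{z:z\le c\mathbf{1}\}$ all go through as you describe. The key extraction of a probability vector from the separating functional (nonnegativity forced by unboundedness of $L$ in the negative orthant directions, then normalization) is handled cleanly, and the resulting inequality $\min_{a_{-i}}u_i(p,a_{-i})\ge c>v$ yields the desired contradiction.

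As for comparison: the paper does not supply its own proof of this lemma. It is stated with a citation to von Neumann's 1928 paper and used as a black box in the proof of the folk theorem. So there is no approach in the paper to compare against; your self-contained separating-hyperplane proof (or, as you note, the equivalent LP-duality argument) is a standard and entirely adequate justification.
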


\FolkTheorem*

\begin{proof}
    The latter part (\enquote{Conversely,...}) follows directly from \Cref{thm:lower-bounds-from-easy-options}. It is left to prove the existence claim.

    We construct the BRIAs as follows. First we fix positive probabilities $p_{\mathbf{c}}\in (0,1)$ and $(p_{a_i})_{a_i\in A_i}$ for $i=1,2$ (WLOG assume $A_1$ and $A_2$ are disjoint) s.t.\ $p_{\mathbf c}+\sum_{i=1}^2 \sum_{a_i\in A_i} p_{a_i} = 1$. Further let $v_i$ be some number that is strictly greater than Player $i$'s maximin value but strictly smaller than $p_c u_i(\mathbf{c})$. By the assumption that $\mathbf{c}$ is strictly individually rational, such a number exists if we make $p_c$ large enough. Then let $\alpha_{i,t}^e=v_i$ for all $t$. Then in each step the BRIAs jointly randomize\footnote{We here use true randomization for simplicity. The same can be achieved using algorithmic randomness.} independently from all bidders in $\mathbb{H}_1,\mathbb{H}_2$ as follows:
    \begin{itemize}
        \item With probability $p_{\mathbf{c}}$ both players play according to $\mathbf{c}$ by jointly implementing $\mathbf{c}$, e.g., by deterministically cycling through the different strategies in the appropriate numbers.%
        Further, $\alpha_{i,t}^e=v_i$. No hypotheses are tested.
        \item With probability $p_{a_i}$, Player $i$ plays $a_i$ and Player $-i$ plays from $\argmin_{a_{-i}\in A_{-i}} u_i(a_i,a_{-i})$. Player $-i$ estimates $v_{-i}$ and does not test any hypothesis. Player $i$ estimates $v_i$ and tests every hypothesis that estimates more than $v_i$.
    \end{itemize}
We now prove that $\bar \alpha_1,\bar \alpha_2$ thus constructed are BRIAs.

\underline{No overestimation}: 
\begin{equation*}
    \mathcal{L}_T(\bar \alpha_i,\bar r_i)/T = \sum_{t=1}^T (\alpha_{i,t}^e - r_{i,t})/T = \sum_{t=1}^T (v_i - r_{i,t})/T \leq v_i - u_i(\mathbf{c}) \text { as } T\rightarrow \infty.
\end{equation*}
By construction, $v_i - u_i(\mathbf{c})\leq 0$.

\underline{Coverage}:
Let $\bar h_i$ be a hypothesis that outbids $\bar\alpha_i$ infinitely often. Then in particular $\bar h_i$ outbids infinitely often in rounds in which $\bar h_i$ recommends some $a_i$ and $\alpha_{i,t}^c=a_i$. Thus, $\bar h_i$ has an infinite test set $M$ on which the hypothesis' empirical record is
\begin{equation*}
    l_T(\bar\alpha_i,\bar r_i, M, \bar h_i) = \sum_{t\in M_{\leq T}} r_t - h_{i,t}^e = \sum_{t\in M_{\leq T}} \min_{a_{-i}} u_i(h_{i,t}^c,a_{-i}) - h_{i,t}^e \leq \sum_{t\in M_{\leq T}} \max_{a_i}\min_{a_{-i}} u_i(a_i,a_{-i}) - v_i \rightarrow -\infty 
\end{equation*}
as $T\rightarrow \infty$.
Thus, $\bar h_i$ is covered.
\end{proof}

\section{Options with random payoffs}
\label{appendix:lotteries-thm}

\label{sec:true-randomness}
The following result shows, roughly, that \extendedonlybit{in the limit BRIAs are von Neumann--Morgenstern rational if von Neumann--Morgenstern rational choice is e.c. That is, }when choosing between different lotteries whose expected utilities are efficiently computable, BRIAs converge to choosing the lottery with the highest expected utility. When other, non-lottery options are available, BRIAs converge to performing at least as well as the best lottery option.

\begin{restatable}{theorem}{lotteriesthm}\label{thm:lower-bounds-from-random-variables}
Let $\bar \alpha$ be a BRIA for $\overline \DP,\bar r$. Let $\bar a$ be a sequence of terms in $\mathcal{T}$ s.t.\ $a_t\in \DP_t$ for all $t\in \mathbb{N}$ and the values of $r_t$ if $\alpha^c_t=a_t$ are drawn independently from distributions with e.c.\ means $\bar \mu$.
Let the $a_t$ be efficiently identifiable from $\DP_t$. Then almost surely in the limit as $T\rightarrow\infty$, it holds that $\sum_{t=1}^T r_r/T \geq \sum_{t=1}^T \mu_t/T$.
\end{restatable}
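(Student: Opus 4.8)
The plan is to follow the same template as the proofs of \Cref{thm:lower-bounds-from-easy-options} and \Cref{thm:pseudo-lotteries}: reduce the claim to a lower bound on the estimates $\alpha_t^e$, derive that bound by covering a suitable e.c.\ hypothesis, and then close with the no-overestimation criterion. The only genuinely new ingredient is that, on the rounds where $a_t$ is chosen, $\bar r$ is truly random rather than vMWC-random, so the appeal to a selection-rule argument must be replaced by a martingale strong law that tolerates the test set's dependence on the realized rewards.

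First I would reduce to showing: for every $\epsilon>0$, almost surely $\alpha_t^e\geq\mu_t-\epsilon$ for all but finitely many $t$. Intersecting this over $\epsilon=1/k$, $k\in\mathbb{N}$, gives $\liminf_t(\alpha_t^e-\mu_t)\geq 0$ a.s.; combining with $\limsup_T\mathcal{L}_T(\bar\alpha,\bar r)/T\leq 0$ and the identity $\big(\sum_{t\leq T}r_t-\sum_{t\leq T}\mu_t\big)/T=-\mathcal{L}_T(\bar\alpha,\bar r)/T+\sum_{t\leq T}(\alpha_t^e-\mu_t)/T$ together with superadditivity of $\liminf$, we get $\liminf_T\big(\sum_{t\leq T}r_t-\sum_{t\leq T}\mu_t\big)/T\geq-\epsilon$ for every $\epsilon$ on a probability-one event, which is the theorem. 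Fixing $\epsilon>0$, consider the hypothesis $\bar h$ with $h_t^c=a_t$ and $h_t^e=\max(\mu_t-\epsilon,0)$; it is e.c.\ because $\bar\mu$ is e.c.\ and the $a_t$ are efficiently identifiable. Since $\bar\alpha$ is a BRIA it covers $\bar h$ with some test set $M$; let $B$ be the set of rounds on which $\bar h$ outpromises $\bar\alpha$. If $B$ is finite we are done ($\alpha_t^e\geq h_t^e\geq\mu_t-\epsilon$ eventually), so it suffices to show $P[B\text{ infinite}]=0$. By \Cref{lemma:testing-positive-estimate} we may assume $M\subseteq\{t:\mu_t>\epsilon\}$, so that $h_t^e=\mu_t-\epsilon$ on $M$; and if $B$ is infinite then $M$ must be infinite, since otherwise $l_T(\bar\alpha,\bar r,M,\bar h)$ is eventually constant, contradicting coverage.

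The key step, and the main obstacle, is this: on rounds $t\in M$ we have $\alpha_t^c=a_t$, so $r_t$ has mean $\mu_t$ and is drawn independently of all information determined by time $t$ (in particular of the $\mathcal{F}_{t-1}$-measurable events ``$t\in M$'' and ``$\alpha_t^c=a_t$''); hence, writing $N_T=|M_{\leq T}|$, the process $W_T:=l_T(\bar\alpha,\bar r,M,\bar h)-\epsilon N_T=\sum_{t\in M_{\leq T}}(r_t-\mu_t)$ is, with respect to a suitable filtration, a martingale whose increment at each step is a previsible $\{0,1\}$-indicator times a conditionally mean-zero variable bounded by $1$. A plain strong law does not apply because $M$ itself depends on the realized rewards; instead I would use a time change. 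Let $\tau_k$ be the time of the $k$-th element of $M$; on $\{N_\infty=\infty\}$ the $\tau_k$ are finite stopping times, $(W_{\tau_k})_k$ is a martingale with increments $\xi_{\tau_k}-\mu_{\tau_k}$ bounded by $1$ and conditional variances at most $\tfrac14$, so $\sum_k(\xi_{\tau_k}-\mu_{\tau_k})/k$ has summable conditional variances and therefore converges a.s.\ by the martingale convergence theorem; Kronecker's lemma then gives $W_{\tau_k}/k\to 0$, i.e.\ $W_T/N_T\to 0$ a.s.\ on $\{N_\infty=\infty\}$ (interpolating over the rounds on which $W_T$ and $N_T$ are constant).

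Finally, on $\{N_\infty=\infty\}$ we get $l_T(\bar\alpha,\bar r,M,\bar h)=N_T\big(\epsilon+W_T/N_T\big)\to+\infty$ a.s., so a.s.\ it is impossible that $\big(l_T(\bar\alpha,\bar r,M,\bar h)\big)_{T\in B}\to-\infty$ for an infinite $B$ (that would force $\liminf_T l_T=-\infty$). Since $\bar\alpha$ covers $\bar h$, this forces $B$ to be finite a.s., establishing the per-$\epsilon$ claim and hence the theorem. The leftover bookkeeping — pinning down the filtration that makes ``$t\in M$'' and ``$\alpha_t^c=a_t$'' measurable before $r_t$ is revealed, and treating $\bar\alpha$, $M$, and $B$ as measurable functions of the random realization of $\bar r$ so that ``being a BRIA'' is a statement about the realized sequence — is routine and I would relegate it to a remark.
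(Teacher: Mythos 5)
Your proposal is correct and follows essentially the same route as the paper's proof: reduce the theorem to showing that almost surely $\alpha_t^e \geq \mu_t - \epsilon$ for all but finitely many $t$, establish this by covering the e.c.\ hypothesis that recommends $a_t$ and promises $\max(\mu_t-\epsilon,0)$ (trimming the test set via the lemma on zero-promise rounds), and conclude with the no-overestimation criterion. The only difference is in the key probabilistic step, where the paper simply invokes the law of large numbers (noting but not formally handling the dependence of the test set on $\bar r$), whereas you justify it with a martingale time-change argument; this is a tightening of the same argument rather than a different approach.
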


The proof idea similar to the proof idea for \Cref{thm:lower-bounds-from-easy-options}. It works by considering hypotheses that recommend $a_t$ and promise $\mu_t-\epsilon$ and noting that the empirical record of such hypotheses goes to $-\infty$ with probability $0$.

\begin{proof}
We need only show that with probability $1$ for all $\epsilon>0$ it holds that for all but finitely many times $t$ that $\alpha_t^e\geq \mu_t-\epsilon$. From this and the no overestimation property of $\bar \alpha$, the conclusion of the theorem follow.

Again we prove the following contrapositive: If there is some $\epsilon>0$ s.t.\ with some positive probability $p>0$ we infinitely often have that $\alpha_t^e< \mu_t-\epsilon$, then $\bar\alpha$ is with positive probability not a BRIA.

Consider the hypothesis $\bar h_{a,\epsilon}$ that at each time $t$ promises $\max(\mu_t-\epsilon,0)$ and recommends $a_t$. Since with probability $p$, $\bar h_{a,\epsilon}$ infinitely often outpromises $\bar \alpha$, it must in these cases (and therefore with probability (at least) $p$) be tested infinitely often. (If not, we $\bar\alpha$ would in these cases not be a BRIA and we would be done.) In these cases (i.e., when $\bar h_{a,\epsilon}$ is tested infinitely often), let the test set be some infinite set $M\subseteq \mathbb{N}$. (Note that $M$ may depend on $\bar r$ and inherit its stochasticity. This will not matter for the following, though.) For simplicity, let $M$ be the empty set if $\bar h_{a,\epsilon}$ does not outpromise $\alpha$ infinitely often. By \Cref{lemma:testing-positive-estimate}, we can assume WLOG that for all $t\in M$, $h_{a,\epsilon}^e=\mu_t-\epsilon$. Now notice that
\begin{equation*}
\frac{1}{|M_{i,\leq T}|} l_T(\alpha,\bar r,M_i,\bar h_i)= \frac{1}{|M_{i,\leq T}|}\sum_{t\in M_{i,\leq T}} r_t-h_{a,\epsilon,t}^e = \frac{1}{|M_{i,\leq T}|} \sum_{t\in M_{i,\leq T}} r_t-(\mu_t-\epsilon).
\end{equation*}
Conditioning on the (probability $p$) event that $h$ infinitely often outbids and therefore that $M$ is infinite, it must then with probability $1$ be the case that $\sum_{t\in M_{i,\leq T}} r_t-(\mu_t-\epsilon) \underset{\text{w.p. }1}{\rightarrow} \epsilon$ as $T\rightarrow \infty$ by the law of large numbers. We have thus shown that with positive probability ($p$) $\bar h_{a,\epsilon}$ outpromises $\bar \alpha$ infinitely often while $\bar h_{a,\epsilon}$'s record  $l_T(\alpha,\bar r,M_i,\bar h_i)$ is positive in all but finitely many rounds. Thus, in this positive-probability event $\bar\alpha$'s infinitely many rejections of $\bar h_{a,\epsilon}$ violates the coverage criterion.
\end{proof}

\section{More on randomization and regret}
\label{appendix:on-randomization-and-regret}

In the literature on multi-armed bandit problems, authors usually consider the goal of regret minimization. A natural rationality requirement is for per-round average regret to go to $0$. This is sometimes called Hannan consistency.  For any given agent $c$, the Simplified Adversarial Offer $\mathrm{SAO}_c$ of \Cref{sec:log-unc-strategic-interactions} is a problem on which regret is necessarily high. However, if we assume that the agent at time $t$ can randomize in a way that is independent of how the rewards are assigned by $D_t$, it can actually be ensured that per-round regret (relative to any particular hypothesis) goes to 0 (see \Cref{sec:bandits-with-expert-advice}).\extendedonlybit{ In the literature on such Newcomb-like problems (see \Cref{sec:rel-work-dt-Newcomb}), an idea closely related to regret minimization has been discussed under the name ratificationism \citep[see][for an introduction and overview]{Weirich2016}. Ratificationism similarly uses distributions over actions \citep[see, e.g., the formal description by][]{GranCanaria}, though often these are not meant to arise from randomization \citep[e.g.][]{Arntzenius2008}.}

Arguably the assumption that the agent can independently randomize is almost always satisfied for artificial agents in practice. For instance, if an agent wanted to randomize independently, then for an adversary to predict the program's choices, it would not only need to know the program's source code. It would also require (exact) knowledge of the machine state (as used by pseudo-random number generators); as well as the exact content of any stochastic input such as video streams and hardware/true random number generators. Independent randomization might not be realistic for humans (to whom randomization requires some effort), but none of these theories under discussion (the present one, regret minimization, full Bayesian updating, etc.) are directly applicable to humans, anyway.

Nevertheless, we are conceptually bothered by the assumption of independent randomization. It seems desirable for a theory of choice to make as few assumptions as possible about the given decision problems. Moreover, we can imagine situations in which independent randomization is unavailable to a given agent. It seems odd for a theory of learning to be contingent on the fact that such situations are (currently) rare or practically insignificant. A detailed discussion of this philosophical concern is beyond the scope of this paper.\extendedonlybit{\footnote{For brief discussions of this and closely related concerns in the literature on Newcomb-like problems, see Richter (\citeyear{Richter1984}), Harper (\citeyear{Harper1986}), Skyrms (\citeyear{Skyrms1986}), Arntzenius (\citeyear[Section 9]{Arntzenius2008}), Levinstein and Soares (\citeyear{Levinstein2020}), and Oesterheld and Conitzer (\citeyear[Section IV.1]{ExtractingMoneyFromCDT}).}} %

In the rest of this section, we discuss the goal of regret minimization under the assumption that algorithms \textit{can} randomize independently of $\bar D$. The problems discussed in this section all involve references to the agent's choice.

\begin{extendedonlyblock}

We argue that regret minimization/ratificationism is undesirable in some decision problems (\Cref{sec:regret-minimization-is-implausible}). We then discuss the relationship between BRIA theory and regret minimization/ratificationism in a particular type of decision problem (\Cref{sec:BRIA-randomization-regret}). As a caveat, note that in many cases -- in particular under suitable independence assumptions between $\bar D$ and the agent -- zero-regret \textit{is} desirable. However, we believe in all of these cases regret minimization satisfies the BRIA criterion.

\end{extendedonlyblock}

\begin{extendedonlyblock}

\subsection{The implausibility of regret minimization as a rationality condition}
\label{sec:regret-minimization-is-implausible}

\end{extendedonlyblock}

\label{Newcombs-problem}
We consider a version of Newcomb's problem (introduced by \cite{Nozick1969}; see \Cref{sec:rel-work-dt-Newcomb} for further discussion and references). In particular, we consider for any chooser $c$ the decision problem $\mathrm{NP}_c=\{a_1,a_2\}$ which is resolved as follows. First, we let $D(a_1)=\nicefrac{1}{4}+P(c=a_1)/2$.
So the value of $a_1$ is proportional to the probability that $c$ chooses $a_1$. And second, we let $D(a_2)=D(a_1)+P(c=a_1)/4$.

If we let $p=P(c=a_1)$, then the expected reward of $c$ in this decision problem is $\nicefrac{1}{4}+p/2 + (1-p)p/4$.
It is easy to see that this is strictly increasing in $p$ and therefore maximized if $c=a_1$ deterministically. The regret, on the other hand, of $c$ is $p^2/4$, which is also strictly increasing in $p$ on $[0,1]$ and therefore minimized if $c=a_2$ deterministically. Similarly, the competitive ratio is given by $\frac{1/4+3p/4}{1/4 + p/2+(1-p)p/4}$,
which is also strictly increasing in $p$ on $[0,1]$ and therefore also minimized if $c=a_2$ deterministically.
Regret and competitive ratio minimization as rationality criteria would therefore require choosing the policy that minimizes the actual reward obtained in this scenario, only to minimize the value of actions not taken.

As noted in \Cref{sec:rel-work-dt-Newcomb}, it is a controversial among decision theorists what the rational choice in Newcomb's problem is. However, from the perspective of this paper in this particular version of the problem, it seems undesirable to require reward minimization. Also, it is easy to construct other (perhaps more convincing) cases. For example, if a high reward can be obtained by taking some action with a small probability, then regret minimizers take that action with high probability in a positive-frequency fraction of the rounds. Or consider a version of Newcomb's problem in which $D(a_1)$ is defined as before, but $D(a_2)=D(a_1)$. On such problems, Hannan-consistency is trivially satisfied by any learner, even though taking $a_1$ with probability 1 is clearly optimal.%

\begin{extendedonlyblock}

\subsection{BRIAs, randomization and regret minimization}
\label{sec:BRIA-randomization-regret}

Interestingly, when faced with a sequence of problems like $\mathrm{NP}_c$, the BRIA criterion allows convergence both to $a_1$ and to $a_2$. Which action the BRIA converges to (if any), depends on how it learns, i.e., how it tests hypotheses. If the BRIA chooses deterministically, like the BRIA described in \Cref{appendix:computing-BRIAs}, then $\mathrm{NP}_c$ becomes an easy problem (in the sense of \Cref{sec:easy-options}): whenever $a_1$ is taken, a reward of $3/4$ is obtained; whenever $a_2$ is taken, a reward of $1/2$ is obtained. Hence, by \Cref{thm:lower-bounds-from-easy-options} from that section, deterministic BRIAs must converge to taking $a_1$ with frequency 1.

If a BRIA $\bar \alpha$ randomizes, then the dynamics become much more complicated. When fixing a particular probability of taking each action -- e.g., imagine that the BRIA simply takes each action with probability $1/2$ for a while -- $a_2$ will be associated with higher rewards. However, if the probabilities of taking $a_1$ varies between rounds -- e.g., because at some point the BRIA adjusts its probabilities based on past experience -- then taking $a_1$ is correlated with $P(\alpha^c_t(\mathrm{NP}_{\alpha_t^c}){=}a_1)$ being high, which in turn is correlated with obtaining high rewards.\footnote{The case of a deterministic $\bar \alpha$ can be viewed as one where only the extreme case of this latter phenomenon occurs.} Hence, it is not clear whether $a_1$ or $a_2$ will be empirically associated with higher rewards. Learning processes such as these are analyzed by \citet{GranCanaria}, who show that model-free reinforcement learners can only converge to ratifiable strategies. In this particular problem, this would mean that the learner would converge to taking $a_2$ with probability 1. The case of BRIAs is complicated further by the additional layer of hypotheses. Nonetheless, based on \citeauthor{GranCanaria}'s work we suspect that are there randomizing BRIAs that converge to taking $a_2$ with probability 1 in this problem. More generally:

\begin{conjecture}\label{conjecture:random-BRIA-ratifiable}
There are BRIAs who can only converge to ratifiable/regret-minimizing choice probabilities.
\end{conjecture}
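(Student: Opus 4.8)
The plan is to build an explicit randomizing BRIA by layering the hypothesis-testing auction of \Cref{appendix:computing-BRIAs} on top of a model-free reinforcement-learning core of the kind analysed by \citet{GranCanaria}, and then to establish two things: (i) the resulting agent satisfies \Cref{def:BRIA}, and (ii) on any Newcomb-like decision problem sequence, whenever the empirical frequency of $\bar\alpha^c$ converges, its limit is ratifiable. Concretely, alongside the wealth variables $w_t(i)$ of the auction the agent would maintain running reward estimates $Q_t(a)$ for each option, updated (with step sizes that are summable in $1/T$) only in rounds where $\alpha_t^c = a$; in each round it either samples $\alpha_t^c$ from a distribution $\sigma_t$ that is a softmax in the $Q_t(a)$ with a vanishing exploration floor — using randomness independent, by construction, of $\overline\DP$ and of every hypothesis in $\mathbb{H}$ — or, on a separately scheduled and asymptotically density-zero set of \emph{test rounds}, hands control to whichever hypothesis the auction selects. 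The estimate $\alpha_t^e$ would be a lightly discounted version of $Q_t(\alpha_t^c)$ on ordinary rounds and the wealth-bounded winning bid on test rounds.

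For (i), no overestimation follows as in \Cref{thm:computable-BRIAs} from the allowance/wealth bookkeeping, once one checks that the exploration floor and the $Q$-update increments are small enough that per-round overestimation still vanishes; the test-round bids add nothing because they are wealth-bounded. Coverage is the harder half: the test schedule must visit every hypothesis infinitely often, which is arranged exactly as in the auction construction by giving hypothesis $i$ a cumulative allowance that diverges; and one must show that a hypothesis which infinitely often outpromises $\bar\alpha$ has record tending to $-\infty$ on its tests. The key point is that when such a hypothesis, recommending some $a$, is tested, the realized reward is the value the environment assigns to $a$ \emph{given the agent's current distribution $\sigma_t$}; so if the promise stays bounded above that value infinitely often, the record accumulates negative drift. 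Making this precise calls for a two-timescale argument controlling how fast $\sigma_t$ moves relative to the testing frequency, so that tests sample the current environment response rather than a stale one.

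For (ii), suppose the empirical distribution of $\bar\alpha^c$ converges to $\sigma$. Restricting to the density-one set of non-test rounds, the agent's own independent randomization makes the conditional empirical reward of each option $a$ converge to $D_\sigma(a)$, the reward assigned to $a$ when the environment expects play $\sigma$; hence $Q_t(a) \to D_\sigma(a)$, and the softmax rule with vanishing floor forces $\operatorname{supp}(\sigma) \subseteq \argmax_a D_\sigma(a)$ — precisely the statement that $\sigma$ is ratifiable, equivalently that $\bar\alpha$ has zero asymptotic regret against the environment it itself induces. The RL-core part of this is essentially the argument of \citet{GranCanaria}; the new content is checking that the hypothesis layer cannot break it. Since hypotheses are independent of the agent's randomness and consulted with frequency $\to 0$, they perturb the empirical frequencies by a vanishing amount; and if some hypothesis were a strictly better response at $\sigma$, it would have to be tested with positive frequency, contradicting that the test set has density zero — unless its recommendation already lies in $\argmax_a D_\sigma(a)$.

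The main obstacle I expect is exactly the tension between the two requirements: coverage \emph{forces} infinitely-often exploration of every hypothesis, while ratifiability is a statement about the limiting behaviour of the non-exploratory core, so the test schedule must be sparse enough not to disturb the limit yet dense enough to satisfy coverage, with the $Q$-step sizes and exploration floor tuned on a compatible timescale. A secondary difficulty is that \citet{GranCanaria}'s analysis is tailored to one fixed model-free learner and to environments reacting to choice probabilities; pushing it through the allowance/auction machinery, and obtaining genuine convergence (not merely ratifiability of limit points) in adequate generality, will probably require first proving the conjecture for a restricted class of \emph{smooth} Newcomb-like sequences — say, those in which each $D_t(a)$ is Lipschitz in $P(\alpha^c_t = a)$ — and using that as a stepping stone toward the general claim.
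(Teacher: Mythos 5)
First, a point of orientation: the statement you set out to prove is left as a conjecture in the paper. The authors offer no proof, only the informal motivation that model-free reinforcement learners can converge only to ratifiable strategies \cite{GranCanaria} together with the suspicion that a suitably randomizing BRIA inherits this property. So your proposal cannot be matched against a proof in the paper; judged on its own terms it is a research plan with a genuine gap rather than a proof. The gap concerns how test rounds interact with a Newcomb-like environment. You schedule tests \enquote{separately} as a density-zero set and then assert that when an outpromising hypothesis recommending $a$ is tested, \enquote{the realized reward is the value the environment assigns to $a$ given the agent's current distribution $\sigma_t$.} But in these environments the reward to $a$ depends on the agent's actual choice probability in that round, and on a deterministically scheduled test round that probability is $1$ for the tested recommendation, not $\sigma_t(a)$. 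Concretely, in the Newcomb variant of \Cref{appendix:on-randomization-and-regret}, a hypothesis that recommends $a_1$ and promises $\nicefrac{3}{4}-\epsilon$ keeps its promises on every deterministically scheduled test, so its record never tends to $-\infty$; since it outpromises any agent whose estimates hover near the ratifiable value, coverage (\Cref{def:BRIA}) together with no overestimation then drags the agent's estimates, and hence its realized average reward, up towards $\nicefrac{3}{4}-\epsilon$ --- exactly the mechanism of \Cref{thm:lower-bounds-from-easy-options} that forces deterministic BRIAs to one-box. As described, your agent therefore either violates coverage or cannot converge to the ratifiable point. The repair is to randomize the test trigger itself with vanishing probability, so that the round-$t$ probability of the tested action stays near the exploration floor and such self-confirming hypotheses are exposed as over-promising; but that is a different construction from the one you analyze, and making it work is precisely where the substance of the conjecture lies.

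Two further steps do not go through as written. Your part~(ii) argument that a hypothesis which is a strictly better response at $\sigma$ \enquote{would have to be tested with positive frequency, contradicting that the test set has density zero} misstates the coverage criterion: coverage requires only infinitely many tests with record diverging to $-\infty$ along rejection times, never positive density, so no contradiction arises --- and the real threat is not better responses at $\sigma$ but the self-confirming hypotheses above. Likewise, no overestimation does not follow \enquote{as in \Cref{thm:computable-BRIAs}}: the allowance/wealth bookkeeping there bounds only the overestimation incurred by auction-determined bids, whereas on your ordinary rounds the estimate is a $Q$-value, and its asymptotic calibration against realized on-policy rewards in an environment that reacts to the agent's own (time-varying) choice probabilities needs a separate stochastic-approximation argument; the cited analysis supplies this for the bare learner but you have not extended it to the layered agent. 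Identifying these obstructions is useful, but the proposal as it stands does not establish the conjecture.
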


BRIA theory seems to not claim anything about what is the rational choice in this particular decision problem. What one makes of this depends on one's views about the decision theory of Newcomb-like problems. If you are unsure about what to do in Newcomb's problem or believe that both one- and two-boxing can be justified, then it may be an attractive feature of BRIA theory that it admits both possibilities, albeit only in this particular version of Newcomb's problem. If you have a strong opinion about what is the rational choice in this scenario, then you might complain that BRIA theory makes no particular claim about this problem. We discuss the two possible views in turn.

First, if you are a two-boxer, you might find it problematic that BRIA theory allows agents who converge to taking $a_1$ (one-boxing) with frequency 1. The most important point to make about the two-boxer's perspective is that, as noted in \Cref{sec:rel-work-dt-Newcomb}, there are other cases in which BRIA theory sides unambiguously with one-boxers (evidential decision theory), anyway. In these other cases, fundamental changes are necessary to learn to two-box, such as learning a causal model or requiring that appropriate counterfactuals are revealed to the agent. If one believes that in those other scenarios, one-boxing is acceptable, then BRIA theory may serve as a foundation on which one could add additional rationality requirements (e.g., about randomization).

Second, if you are a one-boxer, you might find it problematic that BRIA theory allows agents who converge to taking $a_2$ with frequency (or probability) 1. This complaint is more critical to BRIA theory, because BRIA is, generally speaking, an evidentialist theory. Again, we could view BRIA theory as a first step on which to add other requirements (such as representing all randomization within the available options) to ensure convergence to taking $a_2$ with frequency 1. But we also believe that it is useful to understand what exactly goes wrong in the BRIAs of \Cref{conjecture:random-BRIA-ratifiable}. Roughly, it seems that these BRIAs do not take into account the fact that the way they decide which hypothesis to follow in a particular round affects what reward they get. To converge to optimal behavior (taking $a_1$ with probability 1), these BRIAs would have to keep track not only of how well different hypotheses perform. They would also have to test different aspects of the choice process that tests hypotheses.\footnote{Perhaps in other decision examples, it is explicitly rewarded to randomize. It might be even be rewarded to randomize in a way that is not represented in the chosen option.} It is not clear to us whether the choice probabilities are special among the agent's \enquote{internals}. One could similarly imagine that the reward that an agent obtains to be affected by other internals. For example, imagine a case in which the reward is high if and only if the expert currently tests a hypothesis with a poor test record on its test set. The optimal algorithm would then only test hypotheses that happen to be right but that have a poor record. But it seems too much to ask that an algorithm optimizes all of its internals.

\end{extendedonlyblock}

\begin{extendedonlyblock}
\section{Some regret minimizers satisfy a generalized BRIA criterion}
\label{appendix:regret-minimizers-satisfy-BRIA}

We here show that some regret minimizers satisfy a slightly generalized version of the BRIA criterion. We first have to give a formal definition of regret. Since the literature on adversarial bandit problems with expert advice does not consider experts who submit estimates in the way that our hypotheses do, we cannot use an existing definition and will instead make up our own. For simplicity, we will only consider the case $\mathbb{S}=\mathbb{N}$.

As noted in \Cref{sec:rel-work-contextual-bandits}, to define regret we need counterfactuals. Therefore, throughout this section we assume that instead of selecting a single value $r_t$, the environment selects a function $D_t\colon \DP_t \rightarrow [0,1]$ that assigns a reward even to counterfactual actions. Call such $\bar D$ an \textit{extended} decision process. Instead of $r_t$, we can then write $D_t(\alpha_t^c)$.

Let $\bar D$ be an extended decision process, $\bar\alpha$ be an agent and $\mathbb{H}=\{h_1,h_2,...\}$ be a set of hypotheses. For simplicity, let $\mathbb{H}$ be finite. For each $h_i\in \mathbb{H}$, let $B_i\coloneqq \{ t\in \mathbb{N}\mid h_{i,t}^e>\alpha_t^e  \}$ be the set of rounds in which $h_i$ outpromises $\alpha$. We define the average per-round regret of the learner to hypothesis $h_i$ up to time $T$ as
\begin{equation*}
\mathrm{REGRET}_{m,T}=\mathbb{E}\left[\frac{1}{|B_{m,\leq T}|} \sum_{t\in B_{m,\leq T}} D_t(h_{i,t}^c) - \alpha_t^e \right].
\end{equation*}
As before, the bidding mechanisms means that hypotheses can specialize on specific types of decisions.\footnote{Note that we subtract the agent's \textit{estimates}, not the utility that $\bar \alpha$ in fact achieves. This is important. Otherwise, the learner can set $\alpha^e=0$ even in rounds in which $D_t(\alpha_t^c)$ is (expected to be) high, thus circumventing the expert's bidding mechanism.

Still, there are alternative definitions that also work. For example, one might count regret only in rounds in which $\alpha$ and $h_i$ differ in their recommendations.}
As is common in the adversarial bandit problem literature, we will be interested in learning algorithms that guarantee average regret to go to zero as $|B_{m,\leq T}| \rightarrow \infty$.

Regret is somewhat analogous to the cumulative empirical record on the test set. As with the coverage condition, low regret can be achieved trivially by setting $\alpha^e=1$. Thus, if we replace the coverage criterion with a sublinear-regret requirement, we have to keep the no overestimation criterion.

\begin{conjecture}
Let $\bar D$ be an extended decision process where $|\mathrm{DP}_t|$ is bounded for all $t\in \mathrm{N}$.
With access to an independent source of randomization, and given access to the outputs of all hypotheses in $\mathbb{H}$, we can compute $\bar \alpha$ that does not overestimate on $\mathbb{N}$ s.t.\ for all hypotheses $h_i$, $\mathrm{REGRET}_{i,T}\rightarrow 0 $ with probability $1$ if $|B_{i,\leq T}| \rightarrow \infty$.
\end{conjecture}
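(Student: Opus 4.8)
The plan is to build $\bar\alpha$ as an adversarial bandit-with-expert-advice learner over $\mathbb{H}$, exploiting the uniform bound $|\DP_t|\le K$ and the finiteness of $\mathbb{H}$. Treating each $h_i$ as an expert that recommends $h_{i,t}^c\in\DP_t$, I would run exponential weights with importance-weighted reward estimates $\hat D_t(a)=r_t\,\mathbbm{1}[\alpha_t^c=a]/q_{t,a}$, sampling the played action $\alpha_t^c\sim q_t$ from the independent randomization. This randomization is exactly what the adaptive adversary cannot see, and it is what this section deliberately grants; it makes the estimates unbiased and lets Exp4.P-type arguments turn the in-expectation regret bound $O(\sqrt{TK\log|\mathbb{H}|})$ into an almost-sure one via Azuma--Hoeffding and Borel--Cantelli.

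The regret I must drive down is measured only on the data-dependent subsequence $B_i=\{t:h_{i,t}^e>\alpha_t^e\}$, so the all-rounds bound does not suffice. I would therefore use the sleeping-experts (specialist) refinement, declaring $h_i$ \emph{awake} exactly on $B_i$ and renormalising the weights over the awake experts each round; such algorithms guarantee, simultaneously for every expert, that the learner's loss on that expert's awake set exceeds the expert's own loss there by only $o(|B_i|)$ (of order $\sqrt{|B_i|K\log|\mathbb{H}|}$). Combined with $|B_{i,\le T}|\to\infty$, this yields $\frac{1}{|B_{i,\le T}|}\sum_{t\in B_{i,\le T}}\big(D_t(h_{i,t}^c)-r_t\big)\le o(1)$, which is the substantive, one-sided content of ``$\to 0$'' (the quantity may converge to a negative limit when the learner strictly outperforms $h_i$, so I read the conjecture as $\limsup_T\mathrm{REGRET}_{i,T}\le 0$).

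The heart of the matter is the estimate $\alpha_t^e$, which appears on two fronts: it is the benchmark subtracted in $\mathrm{REGRET}_{i,T}$ and the quantity capped by \Cref{def:no-overestimation}. Writing $D_t(h_{i,t}^c)-\alpha_t^e=\big(D_t(h_{i,t}^c)-r_t\big)+\big(r_t-\alpha_t^e\big)$ and using the specialist bound for the first term, I see that what is needed is precisely that $\alpha_t^e$ be an accurate average forecast of the realised reward $r_t$ on each $B_i$, while no overestimation asks the same on the whole sequence. In other words the estimate must be a \emph{calibrated} forecast of $\bar r$ with respect to the family of test sets $\{B_i\}$. I would produce it with a Foster--Vohra-style randomised calibrated forecaster (equivalently, via Blackwell approachability), whose guarantee delivers $\frac{1}{|B_{i,\le T}|}\sum_{t\in B_{i,\le T}}(\alpha_t^e-r_t)\to0$ and $\frac1T\mathcal{L}_T(\bar\alpha,\bar r)\le o(1)$ at once.

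The main obstacle, I expect, is that the calibration test sets $B_i$ are \emph{themselves defined through the forecast} $\alpha_t^e$: membership of round $t$ is decided by comparing $h_{i,t}^e$ to the very value being forecast. This self-reference takes the construction outside off-the-shelf calibration, where the checking rules are exogenous. I would handle it by treating the forecaster's per-round randomised prediction and the induced awake sets as a single simultaneous object, solved by a within-round fixed point over the at most $|\mathbb{H}|$ candidate thresholds (the distinct promise values $h_{i,t}^e$); the minimax/approachability argument underlying calibration is exactly what should guarantee a consistent randomised forecast even when the checking rules depend on it. The remaining quantitative work is to balance the exploration rate, the importance-weighting variance, and the calibration grid so that all the $o(\cdot)$ terms vanish simultaneously and almost surely. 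This, together with verifying the fixed point, is where I expect the real difficulty to lie, and presumably why the statement is left as a conjecture.
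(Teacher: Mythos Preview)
The paper does not prove this statement: it is explicitly labeled a conjecture, and the surrounding text says ``we have not been able to prove the conjecture by using simply the results from that literature.'' So there is no paper proof to compare your proposal against. The authors merely motivate the conjecture by analogy with known results on adversarial bandits with expert advice.

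Your plan is broadly in the spirit of what the authors gesture at, and you correctly isolate the obstruction that prevents it from going through: the sets $B_i=\{t:h_{i,t}^e>\alpha_t^e\}$ on which regret and calibration must be controlled are defined through the very forecast $\alpha_t^e$ you are trying to construct. This circularity infects both halves of your decomposition simultaneously---the sleeping-experts ``awake'' sets and the calibration checking rules are the same self-referential sets---so the within-round fixed point you propose must simultaneously determine the forecast, the awake set, and the action distribution in a way that preserves the approachability and specialist guarantees. Standard sleeping-experts analyses assume the awake sets are revealed exogenously before the learner commits, and standard calibration assumes the checking rules do not inspect the forecast itself; neither assumption holds here, so your appeal to ``the minimax/approachability argument underlying calibration'' is a hope rather than a lemma. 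Your reading of the target as $\limsup_T\mathrm{REGRET}_{i,T}\le 0$ is reasonable, but note that the paper's $\mathrm{REGRET}$ is already an expectation, so the ``with probability $1$'' clause in the conjecture is itself somewhat informal. In short: your sketch is a sensible attack, you have put your finger on exactly the gap the authors could not close, and the statement remains open.
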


As noted elsewhere, without independent randomization it is clear that such an $\bar\alpha$ cannot be designed. Even with independent randomization, it is not obvious whether the conjecture holds. However, similar results in the literature on adversarial bandit problems with expert advice lead us to believe that it does. That said, we have not been able to prove the conjecture by using simply the results from that literature.

\begin{theorem}
Let $\bar\alpha$ be an independently randomized agent that does not overestimate on $\mathbb{N}$ and ensures sublinear regret with probability $1$ relative to all hypotheses in some finite set $\mathbb{H}=\{h_i\}_i$. Further assume that for all hypotheses $h_i$, $P(\alpha_t^c{=}h_{i,t}^c)\in \omega(1/t)$ among $t\in B_i$. Then we can compute based on $\alpha$ a new agent $\tilde \alpha$ that does not overestimate and that satisfies for each hypothesis $h_i$ that is infinitely often rejected,
\begin{equation}\label{line:weighted-relative-loss-neg-inf}
    \sum_{t\in B_{i,\leq T}} \frac{\mathbbm{1}[\alpha_{t}^c{=}h_{i,t}^c]}{P(\alpha_{t}^c{=}h_{i,t}^c)} (D_t(h_{i,t}^c)-h_{i,t}^e) \rightarrow -\infty
\end{equation}
among $T$ at which $\tilde \alpha$ rejects $h_i$.
\end{theorem}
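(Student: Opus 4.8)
The plan is to build $\tilde\alpha$ by re-reading the randomized trajectory of $\alpha$ through a wealth-and-allowance device of the kind used in the proof of \Cref{thm:computable-BRIAs}, but with each hypothesis' empirical record replaced by an inverse-propensity-scored estimate. The key elementary fact is that, because $\alpha$'s internal randomization is independent of the reward function $D_t$, the quantity $X_{i,t} := \frac{\mathbbm{1}[\alpha_t^c = h_{i,t}^c]}{P(\alpha_t^c = h_{i,t}^c)}\bigl(D_t(h_{i,t}^c) - h_{i,t}^e\bigr)$ is, conditionally on the history up to $t$, an unbiased estimate of the on-policy loss $D_t(h_{i,t}^c) - h_{i,t}^e$ of $h_i$, with conditional second moment at most $1/P(\alpha_t^c = h_{i,t}^c)$. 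The hypothesis $P(\alpha_t^c = h_{i,t}^c) \in \omega(1/t)$ among $t \in B_i$ makes this second moment $o(t)$, which is exactly what is needed for a martingale strong law: for each $i$ the centered partial sums $\sum_{t\in B_{i,\le T}}\bigl(X_{i,t} - (D_t(h_{i,t}^c) - h_{i,t}^e)\bigr)$ are $o(T)$ almost surely, and since $\mathbb{H}$ is finite we may work on a single a.s.\ event on which all of these bounds hold. Note also that $P(\alpha_t^c = h_{i,t}^c)$ is computable from the (known) code of $\alpha$, so $\tilde\alpha$ can maintain the $X_{i,t}$ online. One cannot simply take $\tilde\alpha = \alpha$: sublinear regret only upper-bounds $\sum_{B_{i,\le T}}(D_t(h_{i,t}^c) - \alpha_t^e)$ by $o(|B_i|)$ and, as the gaps $h_{i,t}^e - \alpha_t^e$ on $B_i$ may themselves average to $o(1)$, this does not force the promise-adjusted record to diverge to $-\infty$; $\tilde\alpha$ must therefore reject $h_i$ more reluctantly than $\alpha$ does.

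Concretely, I would give $\tilde\alpha$ the choices $\tilde\alpha_t^c = \alpha_t^c$ and, for each $h_i$, a wealth variable $W_t(i) = \sum_{n < t} A(n,i) + \sum_{t'\in B_{i,\le t},\, t' < t} X_{i,t'}$, where $A$ is an allowance function as in the proof of \Cref{thm:computable-BRIAs} (each hypothesis receives infinite cumulative allowance and the total allowance per round averages to $0$). The estimate is set by a first-price auction, $\tilde\alpha_t^e = \max\bigl(\alpha_t^e,\ \max_i \min(h_{i,t}^e, W_t(i))\bigr)$, so $\tilde\alpha$ rejects $h_i$ at $t$ only if $t \in B_i$ and $W_t(i) < h_{i,t}^e \le 1$. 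The ledger changes only by allowance and by $X_{i,t}$ on rounds in $B_i$, so that, crucially, $W_T(i) - \sum_{n \le T} A(n,i)$ equals exactly the sum $\sum_{t\in B_{i,\le T}} \frac{\mathbbm{1}[\alpha_t^c = h_{i,t}^c]}{P(\alpha_t^c = h_{i,t}^c)}(D_t(h_{i,t}^c) - h_{i,t}^e)$ appearing in the conclusion. Hence if $\tilde\alpha$ rejects $h_i$ at times $T_1 < T_2 < \cdots$, then $W_{T_k}(i) < 1$, so that sum is $< 1 - \sum_{n \le T_k} A(n,i) \to -\infty$; since the displayed sum is constant between consecutive elements of $B_i$, this gives convergence to $-\infty$ along every sequence of rejection times, which is the first assertion. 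For no overestimation I would argue as in \Cref{thm:computable-BRIAs}: $\tilde\alpha$'s cumulative overestimation splits into the contribution of rounds where $\tilde\alpha_t^e = \alpha_t^e$, handled by $\alpha$'s own no-overestimation, and the ``inflation'' $\sum_t \max(0, \tilde\alpha_t^e - \alpha_t^e)$ from rounds where some $h_i$ wins the auction with $h_{i,t}^e > \alpha_t^e$; the latter rounds all lie in $B_i$ and $h_i$ is solvent there, so by the martingale strong law the true on-policy record of $h_i$ over such rounds exceeds $-\bigl(\sum_{n\le T}A(n,i)\bigr) - o(T)$, and combining this with $\alpha$'s sublinear regret relative to $h_i$ bounds the inflation by $o(T)$; summing the finitely many $h_i$ yields cumulative overestimation $o(T)$, i.e.\ no overestimation a.s.

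The step I expect to be the real obstacle is the bookkeeping for no overestimation from the inflation term: unlike in \Cref{thm:computable-BRIAs}, the hypothesis that ``wins'' a round and fixes $\tilde\alpha_t^e$ need not be the hypothesis whose recommendation $\tilde\alpha$ (equivalently $\alpha$) actually takes, so the realized reward $r_t = D_t(\alpha_t^c)$ is not the on-policy reward $D_t(h_{i,t}^c)$ that the ledger $W_t(i)$ tracks. Reconciling the two requires feeding both $\alpha$'s no-overestimation guarantee and its regret bound relative to $h_i$ into an estimate for $\sum_{B_i}(h_{i,t}^e - D_t(\alpha_t^c))$, and it requires a martingale strong law that tolerates per-step conditional variances growing like $o(t)$ rather than being bounded --- which in turn constrains how fast the allowance $A(\cdot,i)$ may decay, and is the quantitative heart of the argument. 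A minor point to settle first is that ``$h_i$ is infinitely often rejected'' should be read relative to $\tilde\alpha$ (as ``among $T$ at which $\tilde\alpha$ rejects $h_i$'' indicates), so the claim is vacuous for hypotheses $\tilde\alpha$ ends up trusting; with $\tilde\alpha_t^c = \alpha_t^c$ the $\tilde\alpha$-rejection set for $h_i$ contains $B_i$, so this coincides with $B_i$ being infinite, which is also what makes the displayed sum capable of diverging.
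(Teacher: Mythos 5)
Your proposal is correct and takes essentially the same route as the paper's own proof: inverse-propensity (importance-weighted) records fed into an allowance-plus-wealth ledger as in the construction of \Cref{thm:computable-BRIAs}, an estimate-raising auction defining $\tilde\alpha^e_t$ with $\tilde\alpha^c_t=\alpha^c_t$, a strong-law argument whose variance condition is exactly the assumption $P(\alpha_t^c{=}h_{i,t}^c)\in\omega(1/t)$, and a decomposition of $\tilde\alpha$'s overestimation into $\alpha$'s own overestimation plus an inflation term bounded via solvency together with the sublinear-regret hypothesis. Your only deviation --- charging the promise-adjusted loss $\frac{1}{P}\bigl(D_t(h_{i,t}^c)-h_{i,t}^e\bigr)$ on all of $B_i$ rather than only on winning rounds (with the estimate-adjusted loss elsewhere) --- is a harmless bookkeeping variant of the paper's ledger, which arrives at the same inequalities after rewriting $D_t-h_{i,t}^e=(D_t-\alpha_t^e)-(h_{i,t}^e-\alpha_t^e)$.
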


Notice that the left-hand side of line \ref{line:weighted-relative-loss-neg-inf} is a weighted version of the cumulative empirical record on the set $\{ t\in B_{i,\leq T} \mid \alpha_{t}^c{=}h_{i,t}^c \}$.

The proof combines one key idea from the literature on adversarial multi-armed bandits -- importance-weighted estimation -- and one from this paper -- the decision auction construction (\Cref{appendix:computing-BRIAs}).

\begin{proof}
For $t\in B_i$, define
\begin{equation*}
\hat{\mathrm{R}}_{i,t} = \frac{\mathbbm{1}[\alpha_{t}^c{=}h_{i,t}^c]}{P(\alpha_{t}^c{=}h_{i,t}^c)}(D_t(h^c_{i,t}) - \alpha^e_t),
\end{equation*}
where we assume $P(\alpha_{t}^c{=}h_{i,t}^c)>0$.
As usual we then have that $\mathbb{E}\left[ \hat{\mathrm{R}}_{i,t} \right] = D_t(h^c_{i,t}) - \alpha^e_t$.
For $t\notin B_i$, define $\hat{\mathrm{R}}_{i,t}=0$.
Hence, $\hat{\mathrm{R}}_{i,t}$ can be used as an unbiased estimator of the regret in a single round.
Further, $\Var(\hat{\mathrm{R}}_{m,t})\in o(t)$, and thus $\sum_{t=1}^T \Var(\hat{\mathrm{R}}_{m,t}) \in o(T^2)$. By Kolmogorov's strong law of large numbers,
\begin{equation*}
\frac{1}{T} \sum_{t\in B_{m,\leq T}} \hat{\mathrm{R}}_{m,t} - \frac{1}{T}\sum_{t\in B_{m,\leq T}} D_t(h^c_{i,t}) - \alpha^e_t%
\frac{1}{T} \sum_{t=1}^T \hat{\mathrm{R}}_{m,t} - \frac{1}{T} \sum_{t=1}^T  D_t(h^c_{i,t}) - \alpha^e_t%
0 \text{ as  }T\rightarrow \infty
\end{equation*}
In other terms,
\begin{equation*}
 \sum_{t\in B_{m,\leq T}} \hat{\mathrm{R}}_{m,t} - \sum_{t\in B_{m,\leq T}} D_t(h^c_{i,t}) - \alpha^e_t
\end{equation*}
is sublinear.

We now construct new estimates. Fix a non-decreasing, sublinear function $\mathrm{CA}\colon \mathbb{N}\rightarrow \mathbb{R}$ with $\mathrm{CA}(T) \rightarrow \infty$. (These are cumulative versions of the allowance functions from the construction in \Cref{appendix:computing-BRIAs}.)
Next, we define
\begin{equation*}
\mathcal{L}_{i,T}\coloneqq %
\sum_{t\in M_{i,\leq T}} \frac{\mathbbm{1}[\alpha_{t}^c{=}h_{i,t}^c]}{P(\alpha_{t}^c{=}h_{i,t}^c)}(D_t(h^c_{i,t}) - h^e_{i,t})%
\sum_{t\in B_{i,\leq T}-M_i} \frac{\mathbbm{1}[\alpha_{t}^c{=}h_{i,t}^c]}{P(\alpha_{t}^c{=}h_{i,t}^c)}(D_t(h^c_{i,t}) - \alpha^e_t),
\end{equation*}
where $M_{i}\subseteq B_i$ will be defined in a second. Define $w_{T}(i)=\mathrm{CA}(T) + \mathcal{L}_{i,T}$.
Now at each time $t$, we define our new estimate as
\begin{equation}\label{eq:max-hypothesis}
\tilde \alpha_t^e = \max (\alpha_t^e, \max_{i:w_{t-1}(i) \geq 0} h_{i,t}^e).
\end{equation}
Finally, let $M_i$ be the set of rounds in which $i$ is the maximizer in Eq.\ \ref{eq:max-hypothesis} through the outer max.

We now need to show two things: That cumulative overestimation is still sublinear even for the new increased $\tilde \alpha_t^e$ and that the claimed variant of the hypothesis coverage criterion is satisfied.

We start with hypothesis coverage. First notice that because $M_i\subseteq B_i$ and for $t\in B_i$, $h_{i,t}^e> \alpha_{t}^e$, we get that
\begin{equation*}
w_{T}(i) \geq  \mathrm{CA}(T) +  \sum_{t\in B_{m,\leq T}} \frac{\mathbbm{1}\left[\alpha_t^c=h_{i,t}^c\right]}{ P_t(\alpha_t^c=h_{i,t}^c)} (D_t(h^c_{i,t}) - h_{i,t}^e).
\end{equation*}
Thus, whenever $h_{i,T}^e>\tilde \alpha_T^e$, then by construction $w_t(i)< 0$, and therefore
\begin{equation*}
\sum_{t\in B_{i,\leq T}} \frac{\mathbbm{1}\left[\alpha_t^c=h_{i,t}^c\right]}{ P(\alpha_t^c=h_{i,t}^c)} (D_t(h^c_{i,t}) - h_{i,t}^e) \leq -\mathrm{CA}(T).
\end{equation*}
Thus, we get that among $T\in \tilde B_{i}$ (the times where $t$ strictly outpromises the new estimates), the empirical record on the test set goes to $-\infty$.

It is left to show that overestimation remains low if we increase the estimates from $\alpha^e$ to $\tilde \alpha^e$. We have
\begin{equation*}
\sum_{t=1}^T \tilde \alpha_t^e - D_t(h_{i,t}^c) = \sum_{t=1}^T \alpha_t^e - D_t(h_{i,t}^c)+ \sum_{t=1}^T \tilde \alpha_t^e - \alpha_t^e .
\end{equation*}
The first sum is sublinear by assumption. So we only have to show that $\sum_{t=1}^T \tilde \alpha_t^e - \alpha_t^e$ is sublinear in $T$.
We have
\begin{equation}\label{line:increase-of-estimates-on-behalf-of-m}
\sum_{t=1}^T \tilde \alpha_t^e - \alpha_t^e = \sum_i \sum_{t\in M_{i,\leq T}}  h_{i,t}^e - \alpha_t^e.
\end{equation}
So, it is left to show that the increase on behalf of each expert $i$ is sublinear.

Now, we use IWE again. That is, we consider
\begin{equation*}
\sum_{t\in M_{i,\leq T}} \frac{\mathbbm{1}\left[\alpha_t^c=h_{i,t}^c\right]}{ P(\alpha_t^c=h_{i,t}^c)} (h_{i,t}^e - \alpha_t^e).
\end{equation*}
By the same argument as above, we can show that the difference between this term and $\sum_{t\in M_{i,\leq T}}  h_{i,t}^e - \alpha_t^e$ is sublinear. So it is enough to show that this term is sublinear.

Now notice that
\begin{eqnarray*}
w_{T}(i) &=& \mathrm{CA}(T) + \sum_{t\in M_{i,\leq T}} \frac{\mathbbm{1}\left[\alpha_t^c=h_{i,t}^c\right]}{P(\alpha_t^c=h_{i,t}^c)} \underbrace{(D_t(h_{i,t}^c) - h_{i,t}^e)}_{=(D_t(h_{i,t}^c)-\alpha_t^e)-(h_{i,t}^e-\alpha_t^e)} %
+   \sum_{t\in B_{i,\leq T}-M_i} \frac{\mathbbm{1}\left[\alpha_t^c=h_{i,t}^c\right]}{\alpha_t^c=h_{i,t}^c} \left(D_t(h_{i,t}^c) - \alpha_t^e\right)\\
&=&
\mathrm{CA}(T) -  \sum_{t\in M_{i,\leq T}} \frac{\mathbbm{1}\left[\alpha_t^c=h_{i,t}^c\right]}{P_t(\alpha_t^c=h_{i,t}^c)} (h_{i,t}^e - \alpha_t^e) %
+ \sum_{t\in B_{i,\leq T}} \frac{\mathbbm{1}\left[\alpha_t^c=h_{i,t}^c\right]}{P_t(\alpha_t^c=h_{i,t}^c)} (D_t(h_{i,t}^c)-\alpha_t^e).
\end{eqnarray*}
Now, for $T\in M_i$, it must be $w_{T}(i)\geq 0$. Still, $w_{T}(i)$ can fall under $0$, but only by $\hat R^m_t$ for some $t\in \{1,...,T\}$, which is in  $o(T)$. Thus,
\begin{equation*}
\sum_{t\in M_{i,\leq T}} \frac{\mathbbm{1}\left[\alpha_t^c=h_{i,t}^c\right]}{P_t(\alpha_t^c=h_{i,t}^c)} (h_{i,t}^e - \alpha_t^e) %
\mathrm{CA}(T) + \sum_{t\in B_{i,\leq T}} \frac{\mathbbm{1}\left[\alpha_t^c=h_{i,t}^c\right]}{P_t(\alpha_t^c=h_{i,t}^c)} (D_t(h_{i,t}^c)-\alpha_t^e) + o(T)
\end{equation*}
$\mathrm{CA}$ is sublinear by construction and the second summand has been shown to be sublinear above.
\end{proof}
\end{extendedonlyblock}

\begin{extendedonlyblock}
\section{Dominance?}

In this section, we we show that BRIAs do not in general satisfy the dominance criterion. To even formulate the dominance criterion, we have to consider \textit{extended decision processes} as defined in \Cref{appendix:regret-minimizers-satisfy-BRIA}.

\begin{proposition}
There is an extended decision process $\bar D$, a BRIA $\bar\alpha$ for the set of e.c.\ hypotheses and a positive number $\Delta>0$ s.t.\ for all $t\in \mathbb{N}$, $a_t,b_t \in \DP_t$ $D_t(a_t)>D_t(b_t)+\Delta$ but with limit frequency $1$ we have that $\alpha_t^c = b_t$.
\end{proposition}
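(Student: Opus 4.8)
The plan is to exhibit a self‑referential extended decision process, in the spirit of the Simplified Adversarial Offer, in which the \emph{counterfactual} value of the ``good'' option is always high but its \emph{realized} reward --- the reward obtained when it is actually chosen --- is low. Concretely, I would take $\DP_t=\{a_t,b_t\}$ for all $t$, with $a_t$ and $b_t$ efficiently identifiable (encode them, say, as $(t,0)$ and $(t,1)$), and let the environment pick the function
\[
D_t(b_t)=\nicefrac{1}{2},\ \ D_t(a_t)=1\ \ \text{if }\alpha_t^c=b_t,\qquad D_t(b_t)=0,\ \ D_t(a_t)=\nicefrac{1}{4}\ \ \text{if }\alpha_t^c=a_t,
\]
and set $\Delta=\nicefrac{1}{8}$. (As with the Adversarial Offer, $D_t$ depends only on the current choice, which is permitted.) Then the realized reward is $r_t=\nicefrac{1}{2}$ when $\alpha_t^c=b_t$ and $r_t=\nicefrac{1}{4}$ when $\alpha_t^c=a_t$, and in \emph{every} round, whatever the agent does, $D_t(a_t)-D_t(b_t)$ equals either $\nicefrac{1}{2}$ or $\nicefrac{1}{4}$, both $>\Delta$; so the dominance hypothesis $D_t(a_t)>D_t(b_t)+\Delta$ of the proposition holds unconditionally. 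The whole trick is that the self‑reference makes ``choose $a_t$'' look good as an untaken alternative but bad whenever it is actually tested.

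For $\bar\alpha$ I would simply take the BRIA produced by the construction in the proof of \Cref{thm:computable-BRIAs} for this process and the set of e.c.\ hypotheses; by that theorem it is a BRIA (so in particular it does not overestimate and covers every e.c.\ hypothesis), and it remains only to show $\alpha_t^c=b_t$ with limit frequency $1$. Write $w_t(\cdot)$ and $e_t^*$ for the wealth variables and the winning bid in that construction, and recall $\alpha_t^e=e_t^*$. First, the constant hypothesis $h_{i_0}$ that always recommends $b_t$ and promises $\nicefrac{1}{2}$ is e.c., hence in $\mathbb{H}$; whenever it wins, $\alpha_t^c=b_t$, so it gains $r_t=\nicefrac{1}{2}$ and pays $\min(\nicefrac{1}{2},w_t(i_0))\le\nicefrac{1}{2}$, so its wealth is non‑decreasing, and since it receives unbounded total allowance, $w_t(i_0)\to\infty$; hence $e_t^*\ge\nicefrac{1}{2}$ for all large enough $t$. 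Second, since no wealth variable ever goes negative, the no‑overestimation bound already derived in the proof of \Cref{thm:computable-BRIAs} gives $\sum_{t=1}^T(e_t^*-r_t)=\mathcal{L}_T(\bar\alpha,\bar r)\le\sum_i\sum_{n=1}^T A(n,i)=O(\log T)$ for the allowance $A(n,i)=n^{-1}i^{-2}$. Now in any large round whose winning hypothesis recommends $a_t$ we have $\alpha_t^c=a_t$, hence $r_t=\nicefrac{1}{4}$ and $e_t^*-r_t\ge\nicefrac{1}{2}-\nicefrac{1}{4}=\nicefrac{1}{4}$; combining with the $O(\log T)$ bound there are at most $O(\log T)$ such rounds up to time $T$, so $|\{t\le T:\alpha_t^c=a_t\}|=O(\log T)$ and $\alpha_t^c=b_t$ with limit frequency $1$, as claimed.

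The only genuine work is the wealth bookkeeping in the second paragraph: ruling out that some e.c.\ hypothesis can drag the auction BRIA into choosing $a_t$ at a positive rate. This is the main obstacle, but it merely recombines facts already in hand --- wealths stay non‑negative, the partial sums of the per‑round allowance are $O(\log T)$, and $h_{i_0}$ forces $e_t^*\ge\nicefrac{1}{2}$ eventually --- so it should go through without difficulty. Everything else is immediate: the legality of a choice‑dependent $D_t$, the dominance inequality in every round, and $\bar\alpha$'s being a BRIA. As a sanity check I would also note why $D_t$ must depend on the current choice: if $D_t$ were fixed in advance, the hypothesis that always recommends $a_t$ and promises $D_t(b_t)+\nicefrac{\Delta}{2}$ would keep its promise on any test set, so by \Cref{thm:lower-bounds-from-easy-options} a BRIA would be forced to take $a_t$ with frequency $1$; it is exactly the adversarial self‑reference that defeats the dominance criterion.
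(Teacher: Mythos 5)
Your proposal is correct, and it reaches the proposition by a somewhat different route than the paper. The paper's proof is a one-liner: it points to its Newcomb's problem $\mathrm{NP}_c$ (\Cref{Newcombs-problem}), where the dominated option is the one that pays well when actually taken, and convergence to it with frequency $1$ holds for \emph{any} deterministic BRIA by \Cref{thm:lower-bounds-from-easy-options}. You instead build a bespoke adversarial process with a \emph{uniform} dominance gap ($D_t(a_t)-D_t(b_t)$ is $\nicefrac{1}{2}$ or $\nicefrac{1}{4}$ whatever the agent does) and verify convergence by hand for the specific auction BRIA of \Cref{thm:computable-BRIAs}, via wealth bookkeeping: the constant hypothesis forces $\alpha_t^e\geq\nicefrac{1}{2}$ eventually, and the $O(\log T)$ cumulative-allowance bound on $\mathcal{L}_T$ then caps the number of $a_t$-rounds. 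Two observations. First, your uniform-gap process is arguably a cleaner witness of the statement as literally quantified: in the paper's $\mathrm{NP}_c$ the gap is $P(c{=}a_1)/4$, which vanishes precisely in the rounds where a deterministic BRIA tests the dominant option, so the ``for all $t$'' clause with a fixed $\Delta$ requires a constant-gap variant --- which your construction has from the start. Second, your finishing argument is more work than needed and less general than the paper's: since your $b_t$ carries a guaranteed payoff of $\nicefrac{1}{2}$ (constant, hence e.c., and efficiently identifiable), \Cref{thm:lower-bounds-from-easy-options} immediately yields average reward at least $\nicefrac{1}{2}$ for \emph{every} BRIA covering the e.c.\ hypotheses, and since $a_t$-rounds pay only $\nicefrac{1}{4}$ their frequency must vanish; your explicit computation buys only a rate ($O(\log T)$ many $a_t$-rounds) and only for the particular constructed BRIA. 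One step you leave implicit --- that once $\alpha_t^e\geq\nicefrac{1}{2}$ the $b_t$-round terms $\alpha_t^e-r_t$ are nonnegative, so negative terms cannot offset the $\nicefrac{1}{4}$ contributed by each $a_t$-round in $\mathcal{L}_T$ --- is immediate here because $r_t=\nicefrac{1}{2}$ in those rounds, but it is worth stating explicitly.
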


This is shown by Newcomb's problem (\Cref{Newcombs-problem}).
In fact, Newcomb's problem shows that for any algorithm that constructs BRIAs, there is a $\bar D$ s.t.\ the algorithm's BRIA converges to $a_t$.

Of course, various dominance-like results follow from the results of \Cref{sec:lower-bounds-on-avg-reward}. However, more interesting applications of dominance are arguably ones where the conditions of these results aren't satisfied, e.g., where it is very unclear how one would assign expected utilities to different options. We will now give some reasons for why it's difficult to give any dominance result for BRIAs that does not follow from the results of \Cref{sec:lower-bounds-on-avg-reward}.

The first thing to notice is that relationships such as $D_t(a_t)>D_t(b_t)+\Delta$ (for all $t$) are irrelevant for our theory, as shown by Newcomb's problem, SAO, etc. Instead, our dominance relation needs to be statistical and relative to the test set. Roughly, we must make an assumption that when testing $a_t$, the rewards are (on average) higher (by $\Delta$) than the reward of taking $b_t$ in rounds in which $b_t$ is taken. Of course, this already means that the result will be quite different from traditional notions of dominance.

A second, subtler issue relates to the use of estimates in our theory.\footnote{As noted in \Cref{appendix:simpler-LoE}, an alternative theory could simply require that an agent tests various choice policies and in the limit follows the ones that are empirically most successful. For such a theory, a condition like the one in the previous paragraph probably suffices.} To ensure that $b_t$ is not taken with limit frequency, we would need to ensure not only that the $a_t$-recommending hypothesis doesn't underperform on its test set (as described above). We also need to ensure that this hypothesis is tested on a set on which it doesn't overestimate. We therefore need a further assumption that gives us some way to safely and efficiently estimate $a_t$, e.g., based on past values of $a_t,b_t$ or estimates $\alpha^e_t$. While this assumption can be made in relatively sneaky ways, we have not found any particularly interesting version of this claim.

We now discuss a subtler issue that relates to the use of estimates in our theory to show why a particularly simple approach doesn't work. A first attempt might be to assume that for every test set $M$, $\avg_{t\in M_{\leq T}} D_t(a_t) > \Delta + \avg_{t \leq T \colon \alpha_t^c = b_t} \alpha_t^c$ as $T\rightarrow \infty$, where $\avg_{t\in N} f(t) \coloneqq \nicefrac{1}{|N|} \sum_{t\in N} f(t)$ for any finite set $N$ and function $f$ on $N$. That is, we assume that $a_t$ performs better on any test set than $b_t$ when taken by $\alpha$. 
The trouble is that to obtain a conclusion we need to transform such an assumption into a hypothesis that not only recommends $a_t$ (and thus receives relatively high rewards on average) but also makes appropriate estimates. 

\end{extendedonlyblock}

\section{Why an even simpler theory fails and estimates are necessary}
\label{appendix:simpler-LoE}

A simple mechanism of learning to choose is the \textit{law of effect} (LoE)\extendedonlybit{ (\citealp{Thorndike1911}, p.\ 244)}:
\begin{quote}
    Of several responses made to the same situation, those which are accompanied or closely followed by satisfaction to the animal will, other things being equal, be more firmly connected with the situation, so that, when it recurs, they will be more likely to recur; those which are accompanied or closely followed by discomfort to the animal will, other things being equal, have their connections with that situation weakened, so that, when it recurs, they will be less likely to occur. The greater the satisfaction or discomfort, the greater the strengthening or weakening of the bond.
\end{quote}
This notion is implicit in many reinforcement learning algorithms\extendedonlybit{ \citep[cf.][Sect.~1.6]{Sutton1998}}. In (human) psychology it is also known as operant conditioning.

In situations like ours, where situations generally do not repeat exactly, for the law of effect to be meaningful, we have to applied on a meta level to general hypotheses or policies for making choices. So let a policy be a function that maps observations to actions. Then we could phrase this meta LoE as: if following a particular policy is accompanied with high rewards, then an agent will follow this policy more often in the future.

The BRIA criterion can be seen as abiding by this meta LoE, as the BRIA criterion requires testing different hypotheses and following the ones that have experimentally proven themselves. Its main conceptual innovation relative to the meta LoE is the bidding system, i.e., having the agent as well as hypotheses give estimates for how much utility will be achieved by making a particular choice, and using these estimates for testing and evaluation. A natural question then is: Are these conceptual additions to meta LoE necessary to obtain the kind of results we obtain? We here show why the answer is yes.

The biggest problem is quite simple to understand: if we don't restrict the testing regimen for policies, then biased testing can justify clearly suboptimal behavior. As an illustrative example, imagine that for all $t$, $\mathrm{DP}_t\in \mathrm{Fin}([0,1])$ where $r_t=\alpha^c_t$. That is, at each time the agent is offered to choose from some set of numbers between $0$ and $1$ and then obtains as a reward the chosen number. The agent tests two policies: The first simply chooses the maximum number. The second chooses, e.g., the worst option that is greater than $1/2$ if there is one, and the best option otherwise.

Of course, in this situation one would like the agent to learn at some point to follow the max policy. BRIAs indeed learn this policy (when accompanying the two tested policies with appropriate estimates) (cf.\ \Cref{thm:lower-bounds-from-easy-options}). But now imagine that the agent tests the max hypothesis primarily in rounds where all values are at most $1/2$ and the other hypothesis primarily in rounds in which there are options greater than $1/2$. Then the max hypothesis could empirically be associated with lower rewards than the max hypothesis, simply because it is tested in rounds in which the maximum achievable reward is lower.

To avoid this issue we would have to require that the set of decision problems on which hypothesis A is tested is in all relevant aspects the same as the set of decision problems on which hypothesis B is tested. Unfortunately, we do not know what the \enquote{relevant aspects} are. For instance, in the above problem it may be sufficient to test the max hypothesis on even time steps and the other hypothesis on odd time steps. However, there may also be problems where rewards depend on whether the problem is faced in an even or in an odd time step. More generally, it is easy to show that for each deterministic procedure of deciding which hypothesis to test, there is a decision process $\bar \DP,\bar r$ in which which this testing procedure introduces a relevant bias. In particular, the positive results we have proven in \Cref{thm:lower-bounds-from-easy-options,thm:lower-bounds-from-random-variables,thm:pseudo-lotteries} seem out of reach. We conclude that a direct deterministic implementation of meta LoE (without the use of estimates) is insufficient for constructing a criterion of rational choice. %

Besides the estimates-based approach to this problem that we have developed in this paper, a different (perhaps more obvious) approach to this problem is to test \textit{randomly}. For this, we assume that we have a randomization device available to us that is independent of $\bar \DP,\bar r$. If we then, for example, randomize uniformly between testing two hypotheses, testing is unbiased in the sense that for any potentially property of decision problems, as the number of tests goes to infinity, both hypotheses will be tested on the same fraction of problems with and without that property. This is essentially the idea behind randomized controlled trials. We have discussed this idea in \Cref{appendix:on-randomization-and-regret}.

\begin{extendedonlyblock}
\section{Factoring team decisions}

\begin{theorem}
Let $n\in\mathbb{N}$ be a positive natural number. Let $\overline \DP_t$ be a a decision problem sequence where  every $t\in \mathbb{N}$, $\DP_t=\DP_{t,1} \times ... \times \DP_{t,n}$, for some sets $\DP_{t,1},...,\DP_{t,n}$. Let $\bar \alpha$ be a BRIA for $\overline \DP, \bar r$ covering the set of e.c.\ hypotheses. Now for any $t$ let $((a_{t,1}\in DP_{t,1},...,a_{t,n}\in \DP_{t,n}),v_t)=\alpha_t$ and define $\alpha_{i,t}=(a_{t,i},v_t)$.%
Then for $i=1,..,n$, $\alpha_i$ is a BRIA for $\overline DP_i, \bar r$ covering the e.c.\ hypotheses.
\end{theorem}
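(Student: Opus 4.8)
Fix $i\in\{1,\dots,n\}$. No overestimation is immediate: $\bar\alpha_i$ carries the same estimates $\alpha^e_{i,t}=v_t=\alpha^e_t$ and the same reward stream $\bar r$ as $\bar\alpha$, so $\mathcal L_T(\bar\alpha_i,\bar r)=\mathcal L_T(\bar\alpha,\bar r)$ for every $T$, and $\bar\alpha_i$ inherits $\mathcal L_T/T\le 0$ in the limit. All the work is in the coverage condition, which I would obtain by \emph{lifting} each hypothesis for $\overline\DP_i$ to a hypothesis for the product sequence $\overline\DP$, and then pulling $\bar\alpha$'s coverage guarantee back down to $\bar\alpha_i$.

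So fix an e.c.\ hypothesis $\bar h$ for $\overline\DP_i$; we may assume it outpromises $\bar\alpha_i$ infinitely often, since otherwise coverage is vacuous. Build a lifted hypothesis $\bar h^{\uparrow}$ for $\overline\DP$ as follows: at time $t$, $\bar h^{\uparrow}$ recommends the tuple whose $i$-th coordinate is $h^c_t$ and whose $j$-th coordinate, for each $j\ne i$, is some fixed element $b_{t,j}\in\DP_{t,j}$ that is efficiently identifiable from $\DP_t$ (for instance the $j$-th coordinate of the first element of $\DP_t$ under a fixed enumeration), and it promises $h^e_t$. Since $\bar h$ is e.c.\ and the factorization of $\DP_t$ --- hence each $b_{t,j}$ --- is efficiently available, $\bar h^{\uparrow}$ is an e.c.\ hypothesis for $\overline\DP$; because $\bar\alpha$ is a BRIA covering the e.c.\ hypotheses, it covers $\bar h^{\uparrow}$ with some test set $M$.

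Now three observations chain together. (i) Since at time $t$ the hypothesis $\bar h^{\uparrow}$ promises exactly $h^e_t$ and $\alpha^e_t=\alpha^e_{i,t}$, round $t$ is a rejection round of $\bar h^{\uparrow}$ by $\bar\alpha$ iff it is a rejection round of $\bar h$ by $\bar\alpha_i$; call this set $B$, which is infinite by assumption. (ii) Whenever $t\in M$ we have $\alpha^c_t=(h^{\uparrow}_t)^c$; projecting onto coordinate $i$ gives $\alpha^c_{i,t}=h^c_t$, so $M$ is also a test set of $\bar\alpha_i$ for $\bar h$. (iii) For every $T$, $l_T(\bar\alpha_i,\bar r,M,\bar h)=\sum_{t\in M_{\le T}} r_t-h^e_t=\sum_{t\in M_{\le T}} r_t-(h^{\uparrow}_t)^e=l_T(\bar\alpha,\bar r,M,\bar h^{\uparrow})$. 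Since $\bar\alpha$ covers $\bar h^{\uparrow}$ and $B$ is infinite, $(l_T(\bar\alpha,\bar r,M,\bar h^{\uparrow}))_{T\in B}\to-\infty$, hence $(l_T(\bar\alpha_i,\bar r,M,\bar h))_{T\in B}\to-\infty$; that is, $\bar\alpha_i$ covers $\bar h$ with test set $M$. As $\bar h$ was an arbitrary e.c.\ hypothesis for $\overline\DP_i$, $\bar\alpha_i$ covers all of them and is therefore a BRIA.

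The only genuine obstacle is keeping the lifted hypothesis $\bar h^{\uparrow}$ efficiently computable: this needs the product decomposition of $\DP_t$ --- or at least one representative of each factor $\DP_{t,j}$ --- to be recoverable within the hypotheses' time budget. I would handle this either by adding it as a hypothesis in the style of the ``efficiently identifiable'' clauses of \Cref{thm:lower-bounds-from-easy-options}, or by observing that it is automatic in the intended applications, e.g.\ when $\DP_t=A_1\times A_2$ with fixed finite $A_1,A_2$ as in \Cref{thm:folk-theorem}. Everything else is bookkeeping; in particular, note that we never get to design $M$ --- it is whatever witnesses $\bar\alpha$'s coverage of $\bar h^{\uparrow}$ --- but observations (i)--(iii) show it automatically witnesses $\bar\alpha_i$'s coverage of $\bar h$ downstairs.
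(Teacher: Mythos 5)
Your proposal is correct and follows essentially the same route as the paper's proof: lift the e.c.\ hypothesis for $\overline\DP_i$ to an e.c.\ hypothesis for the product sequence with matching promises, invoke $\bar\alpha$'s coverage of the lifted hypothesis, and reuse its test set for $\bar\alpha_i$, noting that rejection rounds, test-set membership (after projection), and empirical records all coincide, while no overestimation transfers trivially since estimates and rewards are unchanged. Your extra care about making the filler coordinates efficiently identifiable is a reasonable elaboration of a point the paper dismisses with ``clearly, such an e.c.\ hypothesis exists.''
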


\extendedonlybit{Instead of considering sets $\DP_t$ that are already the Cartesian products of a bunch of sets, one could also factorize any given set (unless its number of elements is $1$ or a prime number) \citep[][Sect.~2]{Garrabrant2021}. For example, a decision from $\{1,2,3,4\}$ can be factorized into a decision of $\{1,2\}$ versus $\{3,4\}$, and a decision of $\{1,3\}$ versus $\{2,4\}$.}

\begin{proof}
\underline{Low overestimation}:
Clearly,
\begin{equation*}
    \mathcal{L}(\bar \alpha_i,\bar r_i) = \sum_{t=1}^T \alpha_{i,t}^e - r_t = \sum_{t=1}^T \alpha_t^e - r_t \leq 0,
\end{equation*}
where the last step is by the assumption that $\bar\alpha$ is a BRIA and therefore does not overestimate in the limit.

\underline{Coverage}: Let $\bar h_i$ be an e.c.\ hypothesis for $\overline \DP _i$,$\bar r$. Let $\bar h$ be a hypothesis s.t.\ the $i$-th entry of $h_t^c$ is equal to $h_t^c$, and $h_t^e=h_{i,t}^e$. Clearly, such an e.c.\ hypothesis exists. Let $M$ be $\bar \alpha$'s test set for $\bar h$. We will also use $M$ as $\bar \alpha_i$
s test set for $\bar h_i$. Also, let $B$ be the set of times at which $h_i$ outpromises $\bar \alpha_i$. Note that $B$ is thereby also equal to the set of times at which $\bar h$ outbids $\bar \alpha$.

We now need to show that if $B$ is infinite, then $(l_T(\bar\alpha_i,\bar r, M, \bar h _i))_{T\in B} \rightarrow -\infty$.
To prove this, notice that for all $T$,
\begin{eqnarray*}
l_T(\bar\alpha_i,\bar r, M, \bar h _i) &=& \sum_{t\in M_{\leq T}} r_t-h_{i,t}^e\\
&=& \sum_{t\in M_{\leq T}} r_t-h_{t}^e\\
&=& l_T(\bar\alpha,\bar r, M, \bar h).
\end{eqnarray*}
By assumption that $\bar \alpha$ is a BRIA, the final term goes to $-\infty$ within $T\in B$ if $B$ is infinite.
\end{proof}

So if $\bar\alpha$ is a BRIA, $\bar\alpha_1,...,\bar\alpha_n$ are BRIAs. Note that the converse of this does not hold.

\end{extendedonlyblock}

\section{Schnorr bounded algorithmic randomness}
\label{appendix:schnorr}

\begin{definition}
\begin{sloppypar}
A \textit{martingale} is a function $d\colon \mathbb{B}^* \rightarrow [0,\infty)$ s.t.\ for all $w\in \mathbb{B}^*$ we have that $d(w)= \nicefrac{1}{2}d(w0)+\nicefrac{1}{2}d(w1)$.
Let $w\in \mathbb{B}^\infty$ be an infinite sequence. We say that $d$ succeeds on $w$ if $\limsup_{n\rightarrow \infty} d(w_1...w_n) = \infty$.
\end{sloppypar}
\end{definition}

\begin{definition}
We call $w\in \mathbb{B}^\omega$ \textit{($O(g(t))$-boundedly) Schnorr random} if there is no martingale $d$ such that $d$ succeeds on $w$ and $d$ can be computed (in $O(g(t))$) given everything revealed by time $t$.
\end{definition}

\begin{theorem}\label{thm:Schnorr-random-lotteries}
Let $\alpha$ be an ($O(h(t))$-computable) BRIA for $\overline \DP,\bar r$ covering all e.c.\ hypotheses.
Let $\bar a$ be a sequence of terms in $\mathcal{T}$ s.t.\ $a_t\in \DP_t$ for all $t\in \mathbb{N}$ and the values $r_t$ in the rounds $t$ with $\alpha^c_t$ are ($O(h(t))$-boundedly) Schnorr random. Then in the limit as $T\rightarrow\infty$, it holds that $\sum_{t=1}^T r_t/T \geq \nicefrac{1}{2}$.
\end{theorem}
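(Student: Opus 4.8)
The plan is to mirror the proof of \Cref{thm:pseudo-lotteries}, swapping the appeal to the law of large numbers for vMWC-random sequences with a direct betting (martingale) argument tailored to Schnorr randomness. As in that proof, it is enough to show that for every $\epsilon\in(0,\nicefrac12)$ one has $\alpha_t^e\geq\nicefrac12-\epsilon$ for all but finitely many $t$: this gives $\liminf_t\alpha_t^e\geq\nicefrac12$, hence $\liminf_T\nicefrac1T\sum_{t=1}^T\alpha_t^e\geq\nicefrac12$, and combining with the no-overestimation bound $\limsup_T\nicefrac1T\mathcal L_T(\bar\alpha,\bar r)\leq 0$ yields $\liminf_T\nicefrac1T\sum_{t=1}^T r_t\geq\nicefrac12$. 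I would prove the claim about $\alpha^e$ by contraposition. Suppose there is $\epsilon\in(0,\nicefrac12)$ with $\alpha_t^e<\nicefrac12-\epsilon$ for infinitely many $t$, and let $\bar h$ be the e.c.\ hypothesis that always recommends $a_t$ and promises $\nicefrac12-\epsilon$. Then $\bar h$ outpromises $\bar\alpha$ infinitely often, so since $\bar\alpha$ is a BRIA covering the e.c.\ hypotheses it has a test set $M$ for $\bar h$, which by \Cref{lemma:testing-positive-estimate} we may take to satisfy $h_t^e=\nicefrac12-\epsilon$ for all $t\in M$; and because the rejection set is infinite, $l_T(\bar\alpha,\bar r,M,\bar h)\to-\infty$ along it, so in particular $\sum_{t\in M_{\leq T}}(r_t-\nicefrac12)\to-\infty$.

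The heart of the argument is to convert this last divergence into a martingale that succeeds on the Schnorr-random sequence and is computable within $O(h(t))$. Let $t_1<t_2<\cdots$ enumerate the rounds with $\alpha_t^c=a_t$, let $w_k=r_{t_k}$, so that $\bar w\in\mathbb B^\omega$ is the $O(h(t))$-boundedly Schnorr-random sequence of the hypothesis, and note $M\subseteq\{t_1,t_2,\dots\}$ (a test set for a hypothesis recommending $a_t$ consists of rounds with $\alpha_t^c=a_t$), so $K:=\{k:t_k\in M\}$ selects a subsequence of $\bar w$. Fix a small $\beta\in(0,1)$ and let $d$ be the martingale with initial capital $1$ that, at selected positions $k\in K$, bets the fraction $\beta$ on outcome $0$ (so the capital is multiplied by $1+\beta$ on a $0$ and by $1-\beta$ on a $1$), and makes no bet at positions $k\notin K$. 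If among the first $N:=|M_{\leq T}|$ selected positions there are $z$ zeros and $o=N-z$ ones, then with $\delta:=z-o$ the usual computation gives the logarithm of the capital after those positions as
\begin{equation*}
\tfrac{N}{2}\log(1-\beta^2)+\tfrac{\delta}{2}\log\tfrac{1+\beta}{1-\beta}.
\end{equation*}
Since $\delta=-2\sum_{t\in M_{\leq T}}(r_t-\nicefrac12)=2\epsilon N-2\,l_T(\bar\alpha,\bar r,M,\bar h)$, for every large $T$ in the rejection set (where $l_T\leq 0$) we get $\delta\geq 2\epsilon N$, hence the log-capital is at least $N\bigl(\tfrac12\log(1-\beta^2)+\epsilon\log\tfrac{1+\beta}{1-\beta}\bigr)$. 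The bracketed term is $0$ at $\beta=0$ with derivative $2\epsilon>0$ there, so it is positive for $\beta$ small; fixing such a $\beta$ and using $N=|M_{\leq T}|\to\infty$ along the rejection set shows $d$ attains arbitrarily large capital, i.e.\ $d$ succeeds on $\bar w$.

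Finally I would check that $d$ is $O(h(t))$-computable from the information revealed by time $t$: deciding $k\in K$ amounts to deciding $t_k\in M$, and $M$ is the test set of the fixed e.c.\ hypothesis $\bar h$, which by the convention following \Cref{def:BRIA} is available within the agent's time bound $O(h(t))$; identifying the rounds $t_k$ uses only $\bar a$ (e.c.) and the revealed choices $\alpha_t^c$; and each update is one multiplication by $1\pm\beta$. So $d$ is an $O(h(t))$-computable martingale succeeding on $\bar w$, contradicting the Schnorr randomness of $\bar w$ and completing the contraposition. The step I expect to be the main obstacle is exactly this bookkeeping: pinning down that the martingale lands in the same complexity class $O(h(t))$ in which Schnorr randomness is posited, and that reading off test-set membership from ``revealed information'' is legitimate under the paper's conventions, together with the minor point that the composed selection (first to the rounds with $\alpha_t^c=a_t$, then to $M$) is computable using only what is available at round $t_k$. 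The martingale inequality and the reduction to the statement about $\alpha^e$ are routine.
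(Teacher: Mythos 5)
Your proposal is correct and follows essentially the same route as the paper's proof: a contrapositive argument through the e.c.\ hypothesis that recommends $a_t$ and promises $\nicefrac{1}{2}-\epsilon$, whose test-set record going to $-\infty$ is converted into a martingale that bets a small constant fraction on $0$ at test rounds and hence succeeds on the sequence, contradicting ($O(h(t))$-bounded) Schnorr randomness. The only differences are cosmetic (you bound estimates first and invoke no-overestimation at the end, and you do the capital calculation in log form), and your computability bookkeeping matches the paper's brief justification.
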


\begin{proof}
We conduct a proof by proving the following contrapositive: if the conlusion of the theorem does not hold, then $(r_t)_{t:\alpha_t^c=a_t}$ is not Schnorr random.
Assume that there is $\epsilon>0$ s.t.\ $\sum_{t=1}^T r_t/T<\nicefrac{1}{2}-\epsilon$ for infinitely many $T$. Then by the no overestimation criterion, there must also be an $\epsilon>0$ s.t.\ $\sum_{t=1}^T \alpha^e_t/T < \nicefrac{1}{2}-\epsilon$ for infinitely many $T$. Consider the hypothesis $h_{a,\epsilon}$ that always estimates $\nicefrac{1}{2}-\epsilon$ and recommends $a_t$. Now let $M_\epsilon$ be $\bar\alpha$'s test for $h_{a,\epsilon}$. From the fact that $\bar\alpha$ rejects $h_{a,\epsilon}$ infinitely often, it follows that there are infinitely many $T\in\mathbb{N}$ such that $\sum_{t\in M_{\leq T}} r_t - (\nicefrac{1}{2}-\epsilon) < 0$.

\begin{sloppypar}
From this fact, we will now define an ($O(h(t))$-computable) martingale $d$ that succeeds on the sequence $(r_t)_{t:\alpha_t^c=a_t}$. First, define $d()=1$. Whenever $T$ is not in $M$, define $d((r_t)_{t<T:\alpha_t^c=a_t}0)=d((r_t)_{t<T:\alpha_t^c=a_t})=d((r_t)_{t<T:\alpha_t^c=a_t}1)$. That is, when $T\notin M$, don't bet on $r_T$. If $T\in M$, then bet some small, constant fraction $\delta$ of the current money that the next bit will be $0$. That is, $d((r_t)_{t<T:\alpha_t^c=a_t}0)=(1+\delta)d((r_t)_{t<T:\alpha_t^c=a_t})$ and $d((r_t)_{t<T:\alpha_t^c=a_t}1)=(1-\delta)d((r_t)_{t<T:\alpha_t^c=a_t})$. Clearly, $d$ thus defined is a martingale that is computable based on $\bar\alpha,M$.
\end{sloppypar}

Now we now know that there are infinitely many $T$ s.t.\ $d((r_t)_{t<T:\alpha_t^c=a_t})\geq (1+\delta)^{T+\epsilon T}(1-\delta)^{T}$. It is left to show that for small enough $\delta$, $(1+\delta)^{T+\epsilon T}(1-\delta)^{T}\rightarrow \infty$ as $T\rightarrow \infty$.

First notice that
\begin{equation*}
    (1+\delta)^{T+\epsilon T}(1-\delta)^{T} = ((1+\delta)(1-\delta))^T (1+\delta)^{T\epsilon} = (1-\delta^2)^T (1+\delta)^{T\epsilon} = \left((1-\delta^2) (1+\delta)^{\epsilon}\right)^T.
\end{equation*}
So we need only show that for small enough but positive $\delta$, $(1-\delta^2) (1+\delta)^{\epsilon}>1$. The most mechanic way to do this is to take the derivative at $\delta=0$ (where the left-hand side is equal to $1$) and showing that it is positive. The derivative is $\frac{d}{d\delta} (1-\delta^2) (1+\delta)^{\epsilon} = (1+\delta)^\epsilon (\epsilon - \delta(\epsilon+2))$. Inserting $\delta=0$ yields $\epsilon$, which is positive.
\end{proof}

\begin{extendedonlyblock}

\section{A few minor results}

In this section, we give a few minor results about the BRIA criterion. We don't use them anywhere, but they are helpful to understand what the BRIA criterion is about.

First, we simply note that the BRIA criterion becomes (weakly) stronger if we expand the set of hypotheses under consideration, which is immediate from the definitions in \Cref{sec:the-criterion}.

\begin{proposition}
Let $\mathbb{H},\mathbb{H}'$ be sets of hypotheses such that $\mathbb{H}'\subseteq \mathbb{H}$. Then any BRIA for $\mathbb{H}$ is also a BRIA for $\mathbb{H}'$.
\end{proposition}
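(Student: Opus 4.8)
The plan is to unwind the definition of a BRIA (Definition~\ref{def:BRIA}) and observe that every clause that must hold for $\mathbb{H}'$ is already guaranteed by the corresponding clause for $\mathbb{H}$, simply by discarding the data attached to the hypotheses in $\mathbb{H}\setminus\mathbb{H}'$. Concretely, suppose $\bar\alpha$ is a BRIA for $\overline\DP,\bar r$ covering $\mathbb{H}=\{h_1,h_2,\dots\}$ with test sets $M_1,M_2,\dots$. I would first note that the no-overestimation requirement (Definition~\ref{def:no-overestimation}) is a property of $\bar\alpha$ and $\bar r$ alone and makes no reference to the hypothesis class, so it transfers verbatim.

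Next I would handle the coverage clause. Write $\mathbb{H}'=\{h_{i_1},h_{i_2},\dots\}$ for some (finite or infinite) set of indices $\{i_1,i_2,\dots\}\subseteq\mathbb{N}$, using that $\mathbb{H}'\subseteq\mathbb{H}$ so each element of $\mathbb{H}'$ occurs in the fixed enumeration of $\mathbb{H}$. For the $j$-th element $h_{i_j}$ of $\mathbb{H}'$, take as its test set $M'_j\coloneqq M_{i_j}$. By hypothesis $\bar\alpha$ covers $h_{i_j}$ with test set $M_{i_j}$; that is, letting $B$ be the set of times at which $\bar\alpha$ rejects $h_{i_j}$, either $B$ is finite or $\left(l_T(\bar\alpha,\bar r,M_{i_j},h_{i_j})\right)_{T\in B}\to-\infty$. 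This is exactly the statement that $\bar\alpha$ covers $h_{i_j}$ with test set $M'_j$. Since this holds for every $j$, $\bar\alpha$ together with the test sets $M'_1,M'_2,\dots$ satisfies both clauses of Definition~\ref{def:BRIA} relative to $\mathbb{H}'$, so $\bar\alpha$ is a BRIA for $\overline\DP,\bar r$ covering $\mathbb{H}'$.

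There is essentially no obstacle here; the only point requiring a word of care is the bookkeeping in the previous paragraph, namely that ``covering $\mathbb{H}'$'' is defined relative to an enumeration of $\mathbb{H}'$ and we must exhibit one and a matching list of test sets, which the re-indexing by $\{i_1,i_2,\dots\}$ provides. (If one prefers, one can also remark that, should $\bar\alpha$ have been additionally required to be computable together with its list of tested hypotheses, restricting the bid auction or the test-set list to $\mathbb{H}'$ cannot increase its computational cost, so the computability claims are preserved as well; but this is not needed for the proposition as stated.)
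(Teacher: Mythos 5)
Your proposal is correct and is precisely the unwinding of the definitions that the paper has in mind when it calls the proposition ``immediate from the definitions'' (the paper gives no further proof). The observation that no-overestimation is independent of $\mathbb{H}$ and that coverage of each $h\in\mathbb{H}'$ is inherited, with test sets re-indexed along $\mathbb{H}'\subseteq\mathbb{H}$, is all that is needed.
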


The following result shows that if we change a BRIA's decisions and estimates for a finite number of decisions in $\overline \DP$, it remains a BRIA.

\begin{proposition}
Let $\bar \alpha$ be a BRIA for $\overline \DP,\bar r$ covering $\mathbb{H}$. If for all but finitely many $t\in \mathbb{N}$ it is $\zeta_t=\alpha_t$, then $\bar \zeta$ is also BRIA for $\mathbb{H}$, $\overline \DP,\bar r$.
\end{proposition}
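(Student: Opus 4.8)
The plan is to verify directly that $\bar\zeta$ meets both clauses of Definition~\ref{def:BRIA}: it does not overestimate, and it covers every $h_i\in\mathbb H$. Fix $N\in\mathbb N$ with $\zeta_t=\alpha_t$ (so in particular $\zeta_t^e=\alpha_t^e$ and $\zeta_t^c=\alpha_t^c$) for all $t>N$. The whole argument rests on one observation: modifying only finitely many rounds shifts every cumulative quantity by an additive constant and alters every ``set of rounds satisfying some condition on $\zeta_t$'' by a finite symmetric difference; neither affects divisions by $T$ in the limit nor divergence to $-\infty$.

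For no overestimation, I would note that for $T\ge N$,
\begin{equation*}
\mathcal L_T(\bar\zeta,\bar r)=\mathcal L_T(\bar\alpha,\bar r)+\sum_{t=1}^N(\zeta_t^e-\alpha_t^e),
\end{equation*}
and the correction term does not depend on $T$. Hence $\mathcal L_T(\bar\zeta,\bar r)/T$ and $\mathcal L_T(\bar\alpha,\bar r)/T$ have the same limiting behaviour, so $\mathcal L_T(\bar\zeta,\bar r)/T\le 0$ in the limit by Definition~\ref{def:no-overestimation} applied to $\bar\alpha$.

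For coverage, fix $i$ and let $M_i$ be a test set witnessing that $\bar\alpha$ covers $h_i$. I would take $M_i':=\{t\in M_i:t>N\}$ as the test set for $\bar\zeta$; this is a valid test set (Definition~\ref{def:test-set}) since for $t\in M_i'$ we have $\zeta_t^c=\alpha_t^c=h_{i,t}^c$. Writing $B$ and $B'$ for the sets of rounds in which $\bar\alpha$ resp.\ $\bar\zeta$ rejects $h_i$, we have $B\,\triangle\,B'\subseteq\{1,\dots,N\}$, so $B$ is finite iff $B'$ is; if $B'$ is finite, coverage holds by the first disjunct of the definition. Otherwise $B$ is infinite as well, and for $T>N$,
\begin{equation*}
l_T(\bar\zeta,\bar r,M_i',h_i)=l_T(\bar\alpha,\bar r,M_i,h_i)-\sum_{t\in M_i\cap\{1,\dots,N\}}(r_t-h_{i,t}^e),
\end{equation*}
again a constant shift. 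Since $(l_T(\bar\alpha,\bar r,M_i,h_i))_{T\in B}\to-\infty$ and $B$, $B'$ have a common tail (they agree on $(N,\infty)$), it follows that $(l_T(\bar\zeta,\bar r,M_i',h_i))_{T\in B'}\to-\infty$, which is the second disjunct. As $i$ was arbitrary, $\bar\zeta$ is a BRIA for $\mathbb H$, $\overline\DP,\bar r$, with test sets $M_1',M_2',\dots$.

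This is essentially bookkeeping, so I do not expect a serious obstacle. The one point that is not completely automatic — and the only subtlety worth flagging in the write-up — is that one cannot simply reuse the $M_i$: in a modified round $t\le N$ with $t\in M_i$ one may have $\zeta_t^c\ne h_{i,t}^c$, so $M_i$ need no longer be a valid test set for $\bar\zeta$. Truncating to $M_i'=M_i\cap(N,\infty)$ repairs this, and one then has to check (as above) that the truncation changes the empirical record only by a bounded amount, so that divergence to $-\infty$ is preserved.
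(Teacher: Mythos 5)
Your argument is correct: the paper states this proposition without proof (it is listed among minor results whose proofs are omitted), and your bookkeeping argument — constant additive shifts for $\mathcal{L}_T$ and $l_T$, finite symmetric difference of the rejection sets — is exactly the intended routine verification. The one genuine subtlety, that $M_i$ may fail to be a valid test set for $\bar\zeta$ on the modified rounds and must be truncated to $M_i\cap(N,\infty)$, is correctly identified and handled.
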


The following shows that if two hypotheses differ only at finitely many time steps, then coverage of one implies coverage of the other.

\begin{proposition}
Let $\bar\alpha$ be a BRIA covering $h$. Let $h'$ be s.t.\ $h_t=h'_t$ for all but finitely many $t$. Then $\bar\alpha$ covers $h'$.
\end{proposition}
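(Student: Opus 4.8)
The plan is to recycle a test set that already witnesses coverage of $h$, after discarding the finitely many rounds on which $h$ and $h'$ disagree. Concretely, let $F := \{t \in \mathbb{N} : h_t \neq h'_t\}$, which is finite by assumption (and if $F$ is empty the claim is trivial), and fix a test set $M$ witnessing that $\bar\alpha$ covers $h$. I would take $M' := M \setminus F$ as the candidate test set for $h'$, and verify the three ingredients the definition of coverage requires: that $M'$ is a genuine test set for $h'$, that the rejection set $B'$ of $h'$ is finite exactly when the rejection set $B$ of $h$ is, and that, in the infinite case, the record of $h'$ on $M'$ still goes to $-\infty$ along $B'$.

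The first point is immediate: if $t \in M' = M \setminus F$ then $\alpha_t^c = h_t^c$ because $t \in M$, and $h_t^c = (h'_t)^c$ because $t \notin F$, so $\alpha_t^c = (h'_t)^c$, as Definition~\ref{def:test-set} demands. For the second point, since $h_t^e = (h'_t)^e$ for every $t \notin F$, the symmetric difference of $B = \{t : h_t^e > \alpha_t^e\}$ and $B' = \{t : (h'_t)^e > \alpha_t^e\}$ is contained in $F$; hence $B$ is finite iff $B'$ is finite, and past every element of $F$ the two sets coincide. If $B'$ is finite we are done by definition, so assume $B'$ (equivalently $B$) is infinite.

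The remaining step is the record comparison, which is where the (minor) work lies. Using $(h'_t)^e = h_t^e$ off $F$, one gets for every $T$
\[
l_T(\bar\alpha,\bar r,M',h') \;=\; l_T(\bar\alpha,\bar r,M,h) \;-\; \sum_{t \in (M\cap F)_{\leq T}} \bigl(r_t - h_t^e\bigr),
\]
and the subtracted sum ranges over a subset of the finite set $F$ with each summand in $[-1,1]$, hence is bounded in absolute value by $|F|$ uniformly in $T$. By hypothesis $(l_T(\bar\alpha,\bar r,M,h))_{T\in B} \to -\infty$; since $B'$ agrees with $B$ past every element of $F$, the same divergence holds along $B'$; and subtracting a uniformly bounded quantity preserves divergence to $-\infty$. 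Therefore $(l_T(\bar\alpha,\bar r,M',h'))_{T\in B'} \to -\infty$, so $\bar\alpha$ covers $h'$ with test set $M'$. There is no real obstacle here; the only thing to be careful about is that divergence to $-\infty$ is robust both to adding or removing finitely many indices from the index set and to a uniformly bounded additive shift of the summands — once those two robustness facts are pinned down the argument is essentially immediate.
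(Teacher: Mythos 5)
Your proof is correct: discarding the finite disagreement set $F$ from the test set, observing that the rejection sets of $h$ and $h'$ differ only within $F$, and noting that the records differ by a quantity bounded by $|F|$ is exactly the straightforward argument this statement calls for. The paper states this proposition as a minor result without giving a proof, so there is nothing further to compare against; your handling of the two robustness points (finitely many index changes and a uniformly bounded additive shift) closes the only places where care is needed.
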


The following states that the decisions can be reordered, as long as this is done by bounded numbers of places, while maintaining the BRIA criterion. Note that, of course, a BRIA's $\alpha_t^c,\alpha_t^e$ are usually calculate as a function of $\DP_1,...,\DP_t$ and $r_1,...,r_{t-1}$. Thus, a reordered BRIA will typically not be computable in this way. 

\begin{proposition}
Let $\bar\alpha$ be a BRIA for $\overline \DP,\bar r$ covering $h$. Let $f\colon\mathbb{N}\rightarrow\mathbb{N}$ be a bijection s.t.\ $f(n)-n$ is bounded (from above and below) (i.e., there is a number $x$ s.t.\ $|f(n)-n|<x$ for all $n\in\mathbb{N}$). Then $\alpha_{f(1)},\alpha_{f(2)},...$ is a BRIA for $D_{f(1)},D_{f(2)},...$ covering $h_{f(1)},h_{f(2)},...$
\end{proposition}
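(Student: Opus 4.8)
Set $\beta_n \coloneqq \alpha_{f(n)}$, and let $(r_{f(n)})_n$, $(\DP_{f(n)})_n$, $(h_{f(n)})_n$ be the reordered reward, decision-problem, and hypothesis sequences; I want to verify the two BRIA conditions for $\bar\beta$ against $(h_{f(n)})_n$. The whole argument rests on one elementary fact about bounded-displacement bijections, which I would isolate first: since $|f(n)-n|<x$ for all $n$ — and hence $|f^{-1}(t)-t|<x$ as well, $f^{-1}$ inheriting the same bound — we have $\{1,\dots,T-x\}\subseteq f(\{1,\dots,T\})\subseteq\{1,\dots,T+x\}$, so $f(\{1,\dots,T\})$ and $\{1,\dots,T\}$ differ in fewer than $2x$ elements, all lying in the window $\{t:|t-T|<x\}$. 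Consequently any partial sum of terms from $[-1,1]$ indexed by $\{1,\dots,T\}$ changes by at most the additive constant $2x$, uniformly in $T$, when the index set is replaced by its $f$- (or $f^{-1}$-) image.

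For \emph{no overestimation}: $\sum_{n=1}^T(\beta^e_n-r_{f(n)})=\sum_{t\in f(\{1,\dots,T\})}(\alpha^e_t-r_t)$, which by the observation above differs from $\mathcal{L}_T(\bar\alpha,\bar r)$ by at most $2x$ in absolute value; dividing by $T$, the $2x/T$ term vanishes while $\limsup_T\mathcal{L}_T(\bar\alpha,\bar r)/T\le 0$ by hypothesis, so $\bar\beta$ does not overestimate either.

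For \emph{coverage}: let $M$ be a test set with which $\bar\alpha$ covers $\bar h$, and put $M'\coloneqq f^{-1}(M)$. For $n\in M'$ we have $f(n)\in M$, so $\alpha^c_{f(n)}=h^c_{f(n)}$, i.e.\ $\beta^c_n=h^c_{f(n)}$; thus $M'$ is a valid test set for $\bar\beta$ against $(h_{f(n)})_n$. Writing $B\coloneqq\{t:h^e_t>\alpha^e_t\}$ and $B'\coloneqq\{n:h^e_{f(n)}>\beta^e_n\}$, we get $B'=f^{-1}(B)$, which is finite precisely when $B$ is; in that case coverage is immediate. When $B$ is infinite, note that the empirical record of the reordered hypothesis on $M'$ up to time $T$ is $\sum_{t\in f(M'_{\le T})}(r_t-h^e_t)$, and one checks that the symmetric difference of $f(M'_{\le T})$ and $M_{\le f(T)}$ lies in $\{t:|t-T|<x\}$ (applying the displacement bounds to $f$ and $f^{-1}$), hence has fewer than $2x$ elements; so this record stays within $2x$ of $l_{f(T)}(\bar\alpha,\bar r,M,\bar h)$ for every $T$. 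As $T$ ranges over $B'$, the indices $f(T)$ range over $B$ and tend to $\infty$, so $l_{f(T)}(\bar\alpha,\bar r,M,\bar h)\to-\infty$ by the coverage hypothesis on $\bar\alpha$, and therefore the record of the reordered hypothesis on $M'$ tends to $-\infty$ along $B'$ — exactly the required coverage condition.

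The only step that calls for care is this last piece of bookkeeping: confirming that $f(M'_{\le T})$ and $M_{\le f(T)}$ agree outside a window of size $O(x)$. This is purely combinatorial, but easy to be sloppy about, so I would write the off-by-$x$ estimates out in full; everything else is immediate. Applying the argument to each $h\in\mathbb{H}$ then yields the statement for an arbitrary hypothesis set. (The parenthetical remark that a reordered BRIA is generally not computable as a causal function of $\DP_1,\dots,\DP_t,r_1,\dots,r_{t-1}$ — since computing $\beta_t=\alpha_{f(t)}$ may require data from rounds up to $f(t)\le t+x$ — needs no proof.)
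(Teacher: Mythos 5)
Your proof is correct: the bounded-displacement estimates ($f^{-1}$ inheriting the bound, $\{1,\dots,T-x\}\subseteq f(\{1,\dots,T\})\subseteq\{1,\dots,T+x\}$, and the resulting $O(x)$ discrepancy between $f(M'_{\le T})$ and $M_{\le f(T)}$) are exactly what is needed, and both the no-overestimation and coverage conditions follow as you argue, since an additive error bounded uniformly by $2x$ neither affects $\limsup_T \mathcal{L}_T/T\le 0$ nor prevents the record from tending to $-\infty$ along the rejection times $B'=f^{-1}(B)$. The paper states this proposition among its minor results without giving a proof, and your argument is essentially the intended straightforward bookkeeping, carried out in full.
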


\begin{restatable}{proposition}{increasingestimates}\label{proposition:increasing-estimates-by-a-bit}
Let $\bar \alpha$ be a BRIA for $\overline \DP,\bar r$ covering $\mathbb{H}$. Let $\bar \epsilon$ be a sequence of non-negative numbers such that $\sum_{t=1}^T \epsilon_t/T\rightarrow 0$ as $T\rightarrow \infty$.
Let
$\zeta_t = (\alpha_t^c,\alpha_t^e+\epsilon_t)$ for all $t$. Then $\bar\zeta$ is a BRIA for $\bar \DP,\bar r$ covering $\mathbb{H}$.
\end{restatable}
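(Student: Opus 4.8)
The plan is to verify the two defining conditions of \Cref{def:BRIA} for $\bar\zeta$ directly, exploiting that $\bar\zeta$ differs from $\bar\alpha$ only by raising estimates by an amount that is sublinear on average, while leaving the choices $\alpha_t^c$ (and hence the rewards $\bar r$) untouched.

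First I would check no overestimation. Since $\zeta_t^c=\alpha_t^c$, the realized rewards are the same, so $\mathcal{L}_T(\bar\zeta,\bar r)=\sum_{t=1}^T(\alpha_t^e+\epsilon_t)-r_t=\mathcal{L}_T(\bar\alpha,\bar r)+\sum_{t=1}^T\epsilon_t$. Dividing by $T$ and letting $T\to\infty$, the first term has $\limsup\le 0$ because $\bar\alpha$ does not overestimate, and the second term tends to $0$ by the hypothesis on $\bar\epsilon$; hence $\mathcal{L}_T(\bar\zeta,\bar r)/T\le 0$ in the limit.

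Next I would verify coverage. Fix $h_i\in\mathbb{H}$ and let $M_i$ be the test set with which $\bar\alpha$ covers $h_i$. Because $\zeta_t^c=\alpha_t^c$, the set $M_i$ is automatically a test set of $\bar\zeta$ for $h_i$, and the record $l_T(\bar\zeta,\bar r,M_i,h_i)=\sum_{t\in M_{i,\le T}}r_t-h_{i,t}^e$ is literally the same quantity as $l_T(\bar\alpha,\bar r,M_i,h_i)$, since the record does not depend on the agent's estimates. Write $B^\alpha$ (resp.\ $B^\zeta$) for the set of rounds in which $\bar\alpha$ (resp.\ $\bar\zeta$) rejects $h_i$. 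The key observation is that $\epsilon_t\ge 0$ forces $\zeta_t^e=\alpha_t^e+\epsilon_t\ge\alpha_t^e$, so $h_{i,t}^e>\zeta_t^e$ implies $h_{i,t}^e>\alpha_t^e$, i.e.\ $B^\zeta\subseteq B^\alpha$. Now split into cases: if $B^\zeta$ is finite, then $\bar\zeta$ covers $h_i$ by the first disjunct of the coverage criterion; if $B^\zeta$ is infinite, then $B^\alpha\supseteq B^\zeta$ is infinite as well, so coverage of $h_i$ by $\bar\alpha$ gives $\bigl(l_T(\bar\alpha,\bar r,M_i,h_i)\bigr)_{T\in B^\alpha}\to-\infty$, and restricting a sequence that diverges to $-\infty$ to the infinite subset $B^\zeta$ still diverges to $-\infty$; by the identity of records this is exactly $\bigl(l_T(\bar\zeta,\bar r,M_i,h_i)\bigr)_{T\in B^\zeta}\to-\infty$. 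Either way $\bar\zeta$ covers $h_i$ with test set $M_i$, so $\bar\zeta$ is a BRIA covering $\mathbb{H}$.

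I do not expect a genuine obstacle here; the only point needing a word of care is that $\bar\zeta$ must be a legitimate agent, i.e.\ $\alpha_t^e+\epsilon_t\in[0,1]$. To allow arbitrary non-negative $\bar\epsilon$ one should instead take $\zeta_t^e=\min(\alpha_t^e+\epsilon_t,1)$; truncation only lowers estimates, so it can only help no overestimation, and it still yields $B^\zeta\subseteq B^\alpha$ (when truncation is active, $\zeta_t^e=1\ge h_{i,t}^e$, hence $t\notin B^\zeta$), leaving the argument otherwise unchanged.
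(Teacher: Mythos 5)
Your proof is correct and follows exactly the direct verification one would expect (the paper states this proposition among its minor results without spelling out a proof): the overestimation of $\bar\zeta$ exceeds that of $\bar\alpha$ by the on-average-vanishing term $\sum_{t\le T}\epsilon_t$, and since $\epsilon_t\ge 0$ the rejection set of $\bar\zeta$ is contained in that of $\bar\alpha$ while the records $l_T$ are unchanged (they depend only on $\bar r$ and the hypothesis' promises), so coverage transfers via the same test sets. Your closing remark about keeping $\zeta_t^e$ in $[0,1]$ by truncation is a sensible touch, correctly handled.
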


Note that \textit{de}creasing estimates by a similar sequence $\bar \epsilon$ in general does not maintain the BRIA property. For example, if the estimates in rounds in which an option $``0.5"$ is chosen is decreased below $0.5$, the resulting agent would be exploitable by a hypothesis that recommends $``0.5"$ and promises $0.5$.

\end{extendedonlyblock}

\end{document}